\newtheorem*{rep@theorem}{\rep@title}
\newcommand{\newreptheorem}[2]{%
\newenvironment{rep#1}[1]{%
\def\rep@title{#2 \ref{##1}}%
\begin{rep@theorem}}%
{\end{rep@theorem}}}
\newtheorem{definition}{Definition}
\newtheorem{example}{Example}
\newtheorem{proposition}{Proposition}
\newtheorem{theorem}{Theorem}
\newtheorem{lemma}{Lemma}
\newcommand{\FormatEntitySet}[1]{\ensuremath{\mathtt{#1}}}
\newcommand{\FormatFormulaSet}[1]{\ensuremath{\mathcal{#1}}}
\newcommand{\FormatFunction}[1]{\ensuremath{\mathsf{#1}}}
\renewcommand{\vec}[1]{\boldsymbol{#1}}
\newcommand{\FunctionSymbols}{\FormatEntitySet{Funs}\xspace}
\newcommand{\Funs}{\FunctionSymbols}
\newcommand{\FunsOfArity}[1]{\ensuremath{\Funs_{#1}}\xspace}
\newcommand{\Variables}{\FormatEntitySet{Vars}\xspace}
\newcommand{\Vars}{\Variables}
\newcommand{\Predicates}{\FormatEntitySet{Preds}\xspace}
\newcommand{\Preds}{\Predicates}
\newcommand{\PredsOfArity}[1]{\ensuremath{\Predicates_{#1}}\xspace}
\newcommand{\Constants}{\FormatEntitySet{Cons}\xspace}
\newcommand{\C}{\Constants}
\newcommand{\Terms}{\FormatEntitySet{Terms}\xspace}
\newcommand{\Arity}{\ensuremath{\FormatFunction{ar}}\xspace}
\newcommand{\Expression}{\ensuremath{\phi}\xspace}
\newcommand{\Rule}{\ensuremath{\rho}\xspace}
\newcommand{\HerbrandUniverse}{\FormatFunction{HUniv}\xspace}
\newcommand{\HU}{\HerbrandUniverse}
\newcommand{\Ground}{\FormatFunction{Ground}\xspace}
\newcommand{\PG}{\ensuremath{\FormatFormulaSet{P}}\xspace}
\newcommand{\Substitution}{\ensuremath{\sigma}\xspace}
\newcommand{\Subs}{\Substitution}
 \newcommand{\NPTimeC}{\textsc{NP-}complete\xspace}
 \newcommand{\NExpTimeC}{\textsc{NExpTime-}complete\xspace}
 \newcommand{\Naturals}{\ensuremath{\mathbb{N}}\xspace}
\newcommand{\prog}{\PG}
\title{Finite Groundings for ASP with Functions: A Journey through Consistency\\(Technical Report)}
\author{
Lukas Gerlach$^1$
\and
David Carral$^2$
\and
Markus Hecher$^3$
\affiliations
$^1$Knowledge-Based Systems Group, TU Dresden, Dresden, Germany\\
$^2$LIRMM, Inria, University of Montpellier, CNRS, Montpellier, France\\
$^3$Massachusetts Institute of Technology, United States
\emails
lukas.gerlach@tu-dresden.de,
david.carral@inria.fr,
hecher@mit.edu
}
\newcommand{\la}{\ensuremath{\leftarrow}}
\newcommand{\inputPredColor}{purple!70!black}
\newcommand{\outputPredColor}{orange!70!black}
\newcommand{\specialTermColor}{blue!70!black}
\lstdefinelanguage{dflat}{
    firstnumber=1, % XXX Workaround because otherwise warnings occur with hyperref
    otherkeywords={:-},
    morekeywords={not},
    keywordstyle=\sffamily\bfseries,
%    emph={root,childNode,current,introduced,removed,childRow,sub,childItem,childCount,childCost,item,extend,count,cost,currentCost,levels,optItem},
    emph={win,position,passenger,transport,goAlone,takeSome,eats,opposite,othertransport,change,steps,bank,winEnd,lose,redundant},
    moreemph=[2]{},
    moreemph=[3]{farmer,east,west,wolf,goat,cabbage},
    moreemph=[4]{},%out, in, outc, def, defc},
    alsoletter={\#}, % This is required to avoid coloring, e.g., the #count aggregate because it is mistakenly interpreted as the count/1 predicate
%    emphstyle=\rmfamily,
%    emphstyle=[2]\slshape,
    emphstyle=\color{\inputPredColor},
    emphstyle=[2]\color{\outputPredColor},
    emphstyle=[3]\color{\specialTermColor},
    emphstyle=[4]\color{\decompositionColor},
    morecomment=[l]{\%},
    commentstyle=\rmfamily\small\itshape\color{dkgreen},
    numbers=left,
    numbersep=2pt,                  % how far the line-numbers are from the code
    numberstyle=\tiny\color{gray},
    numberblanklines=false,
   % flexiblecolumns=true,
    %columns=fullflexible,
%    lineskip=1mm,
    literate={:-}{{$\la$}}2 {!=}{{$\neq$}}1,
    breakindent=20pt,
    escapechar=@, % Useful, e.g., for manually breaking lines via @\\@
}
\begin{document}

\maketitle

\begin{abstract}
  Answer set programming (ASP) is a
  logic programming formalism 
  used in various areas of artificial intelligence 
  like combinatorial problem solving and 
  knowledge representation and reasoning.
  It is known that enhancing ASP with function symbols makes basic reasoning problems 
  highly undecidable.
  However, even in simple cases, state of the art reasoners,
  specifically those relying on a ground-and-solve approach,
  fail to produce a result.
  Therefore, we reconsider consistency as a basic reasoning problem for ASP. 
  We show reductions that give an intuition for the high level of undecidability. 
  These insights allow for a more fine-grained analysis 
  where we characterize ASP programs as ``frugal'' and ``non-proliferous''.
  For such programs, we are not only able to semi-decide consistency 
  but we also propose a grounding procedure that
  yields finite groundings on more ASP programs
  with the concept of ``forbidden'' facts.
\end{abstract}

\section{Introduction}
\label{section:introduction}

Answer set programming~\cite{BrewkaEiterTruszczynski11,GebserKaminskiKaufmannSchaub12} is a logic-based formalism 
used in multiple fields of artificial intelligence research such as
knowledge representation and reasoning
but also combinatorial problem solving. 
State-of-the-art ASP solvers like clasp~\cite{GebserKaufmannSchaub09a} or wasp~\cite{AlvianoEtAl22} rely on a ground-and-solve approach.
During (i) \emph{grounding}\footnote{In the introduction, \emph{grounding} (informally) refers to the result of a (naive) grounding procedure. We formalize this later on.} 
a given 
%(non-ground) 
ASP program is instantiated with all relevant terms. 
% thereby generating every relevant (ground) ASP rule. 
Then, when (ii) \emph{solving} the ground program, the ASP solver, 
which is a SAT solver extended by unfounded set propagation, 
excludes sets of atoms that lack foundation (i.e. unfounded sets),
thereby efficiently computing answer sets.
Unfortunately, with function symbols involved, 
already the grounding step may not~terminate.
% even if a finite answer set exists.

\begin{example}\label{exp:artificial}
  The program $\{\eqref{rule1}, \eqref{rule2}, r(a, b)\}$ admits exactly one answer set: $\{ \mathit{r}(a, b), \allowbreak \mathit{stop}(b), \allowbreak r(b, f(b)), \allowbreak \mathit{stop}(f(b))\}$.
  Still, the grounding is infinite with terms $b, f(b), f(f(b)), \ldots$
\begin{align}
    \mathit{r}(Y, f(Y)) &\leftarrow \mathit{r}(X, Y), \neg \mathit{stop}(X).  \label{rule1}\\
    \mathit{stop}(Y) &\leftarrow \mathit{r}(X, Y). \label{rule2}
\end{align}
\end{example}

Such problems indeed manifest in real world applications e.g. in a knowledge respresentation contexts. 
One prominent example is an approach for simulating sets 
in ASP (using function symbols) \cite{ASPWithSets}.
%
% While Example~\ref{exp:artificial} seems artificial,
% there are different areas of interest where this can 
% become a problem in practice.
% %
% First, in an ontological reasoning context 
% one can use ASP to support negation as failure. 
% Necessary value inventions in this context require function symbols. 
%
% \begin{example}\label{exp:partonomy}
%   We can model a simple partonomy, e.g. every bicycle (\textit{B}) has a part (\textit{HP}) wheel (\textit{W}) and every wheel is part of (\textit{IPO}) a bicycle 
%   with the following program $P$ in ASP.\footnote{We model the behavior of the restricted chase with ASP using the auxilliary predicates \textit{HPW} and \textit{IPOB}.}
%   \begin{align*}
%     \mathit{HP}(x, \mathit{wh}(x)), \mathit{W}(\mathit{wh}(x)) &\leftarrow \mathit{B}(x), \lnot \mathit{HPW}(x). \\
%     \mathit{HPW}(x) &\leftarrow \mathit{HP}(x, y), \mathit{W}(y), y \neq \mathit{wh}(x). \\
%     \mathit{IPO}(x, \mathit{bc}(x)), \mathit{B}(\mathit{bc}(x)) &\leftarrow \mathit{W}(x), \lnot \mathit{IPOB}(x). \\
%     \mathit{IPOB}(x) &\leftarrow \mathit{IPO}(x, y), \mathit{B}(y), y \neq \mathit{bc}(x). \\
%     \mathit{HP}(y, x) \leftarrow \mathit{IPO}(x, y). \quad
%     &\mathit{IPO}(y, x) \leftarrow \mathit{HP}(x, y). \quad
%     \mathit{B}(c).
%   \end{align*}
%   $P$ has the following answer set but the ground-and-solve approach fails to terminate.
%   $\{ \mathit{B}(c),\allowbreak \mathit{HP}(c, wh(c)),\allowbreak \mathit{IPO}(wh(c), c),\allowbreak \mathit{IPOB}(wh(c)),\allowbreak \mathit{W}(wh(c)) \}$
% \end{example}
%
%Second, 
For combinatorial problems, bounds on the size of natural numbers (which could be modelled using function symbols)
are often introduced to ensure termination of the ground-and-solve approach. 
This is observable in many typical ASP examples. 

\begin{figure}[t]
  \centering
  \includegraphics[width=7cm,height=1.5cm]{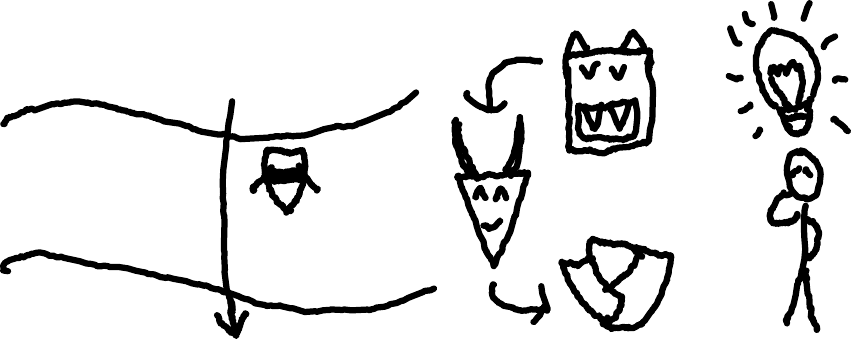}
  \caption{Wolf, Goat, Cabbage Puzzle}\label{fig:wolf-goat-cabbage}
\end{figure}

\begin{example}\label{exp:wolf-goat-cabbage-intro}
  We consider the famous puzzle of a farmer who needs to cross a river with a wolf, a goat, and a cabbage.
  They may only take one item at a time and must not leave the wolf and the goat or the goat and the cabbage alone 
  since then the former will eat the latter (see Figure~\ref{fig:wolf-goat-cabbage}).
  One essential part of the considered modelling is a rule as the following together with enough generated atoms, e.g. 
  $\mathit{steps}(0...100)$.\footnote{Full example in the technical appendix.} % Listing~\ref{lst:wolf-goat-cabbage}.
  \begin{align*}
    &\mathit{position}(X, C, N+1) \leftarrow \mathit{transport}(X, N), \\ 
    & \qquad \mathit{position}(X, B, N), \mathit{opposite}(B, C), \mathit{steps}(N+1) .
  \end{align*}

  That is, if we guess that item $X$ is transported in step $N$, then its position is updated to the opposite river bank if we are not out of steps yet. 
  Additional rules are introduced to detect and avoid redundant positions; we ellaborate on this idea later
  in Example~\ref{exp:essential-wgc-different}.
  % \begin{align*}
  %   &\mathit{change}(N, M) \leftarrow \mathit{position}(X, B, N), \\ 
  %   & \qquad \mathit{position}(X, C, M), \mathit{opposite}(B, C), N < M. \\ 
  %   &\mathit{redundant} \leftarrow \mathit{position}(X, B, N), \\ 
  %   & \qquad \mathit{position}(X, B, M), N < M, \lnot \mathit{change}(N, M). \\
  %   &\leftarrow \mathit{redundant}.
  % \end{align*}
  However, despite the redundancy check, we need to bound (guard) the term $N{+}1$, %of $\mathit{steps}(N{+}s1)$, % rule body 
  as otherwise the grounding is infinite.
  Such guards are common in~ASP. 
\end{example}

\paragraph{Related Work.}
Due to the complications with function symbols,
some works avoid them altogether \cite{marek_effectively_2011}.
However, some existing reasoning approaches seem promising.
We observe that lazy-grounding, as used by Alpha~\cite{WeinzierlEtAl20}, 
achieves termination on more programs than ground-and-solve approaches.
For example, Alpha yields the expected finite answer set in Example~\ref{exp:artificial} 
%and \ref{exp:partonomy}
but still fails in Example~\ref{exp:wolf-goat-cabbage-intro} (without the $N{+}1$ guard).
As an extension of ground-and-solve approaches, incremental solving have been proposed~\cite{GebserEtAl19}, 
where one can increment the maximal number of steps used for grounding and interatively reground. 
Thereby, one could prevent the issue of infinite groundings when only finite answer sets exist.
Various efforts have also gone into characterizing ASP programs into classes 
that e.g. yield finite groundings \cite{alviano_function_2012}.
One particular idea defines the semi-decidable class of \emph{finitely ground} programs 
including the decidable restriction of \emph{finite domain} programs \cite{ComputableFuncsInASP}. 
This approach has been implemented in the DLV solver~\cite{AlvianoEtAl10}.
Still, we observe that (i) DLV~\cite{CalimeriEtAl17} does not (seem to) terminate on 
Examples~\ref{exp:artificial} 
%\ref{exp:partonomy}, 
and \ref{exp:wolf-goat-cabbage-intro},
(without the $N{+}1$ guard). 
Grounding is active research, %. % in general. 
%The literature distinguishes many approaches, %, which is an active field of research,
ranging from traditional instantiation \cite{KaminskiSchaub21}, %over
%structural techniques%new techniques using tree decompositions
%~\cite{BichlerMorakWoltran20,CalimeriPerriZangari19}, 
over size estimations \cite{HippenLierler21}, lazy grounding \cite{WeinzierlEtAl20}, %\cite{%BogaertsWeinzierl18,
%BomansonJanhunenWeinzierl19,WeinzierlTaupeFriedrich20}. %care that modeling
%
%Both aspects (i) and (ii) combined finally lead to ASP being successfully applied in academia as well as industry. %~\cite{GebserEtAl11}. 
%
%constraint-programming extensions~\cite{} or ASP modulo theory~\cite{}.
%
%There are further attempts to avoid the grounding bottleneck, %The literature distinguishes techniques to circumvent this bottleneck~\cite{CuteriEtAl20,TsamouraEtAl20}, like the aforementioned lazy grounding and 
%like %the usage of 
%constraint-programming or 
ASP modulo theory~\cite{BanbaraEtAl17,JanhunenEtAl17,CabalarEtAl20}, 
%as well as 
and treewidth-based methods~\cite{BichlerMorakWoltran20}.

\paragraph{Contributions.} 
We aim to improve existing reasoning techniques further in terms of termination.
As a prerequisite, a better understanding of the hardness of reasoning is required. 
% Our approach is as follows.
\begin{itemize}
  \item In Section~\ref{sec:consistency}, we consider consistency as our exemplary highly undecidable ($\Sigma_1^1$-complete) reasoning problem. 
    We give easy to follow reductions that give an intuition into the cause of the high level of undecidability.
    % We provide simpler and fully formal proofs for the undecidability of non-ground ASP in case of function symbols. Existing approaches heavily relied on Goedel numbers, whereas we show hardness by reducing from a Tiling problem, thereby providing a better intuition on problems in the undecidability class $\Sigma_1^1$. For membership in this class, we design a non-deterministic Turing machine that will be the basis for contributions below.
	%
	%After infinitely many steps, but arbitrary number of steps
	%
  \item Based on these studies, in Section~\ref{sec:termination}, 
    we characterize ASP programs by two essential causes for undecidability of reasoning and infinite groundings. 
    %we provide a deeper study on the undecidability of non-ground ASP with a more fine-grained distinguishment between infintely many and infinitely large answer sets. %
	  Surprisingly, even if a program is 
    \emph{frugal} (only finite answer sets) and 
    \emph{non-proliferous} (only finitely many finite answer sets), 
    we still obtain undecidability for program consistency.
  \item To still tackle consistency, in Section~\ref{sec:reasoning-procedures}, 
    we propose a semi-decision algorithm for 
    frugal and non-proliferous programs that also terminates on many inconsistent programs.
    Based on the underlying idea of \emph{forbidden} atoms, we moreover define a grounding procedure 
    that terminates in more cases; like the above examples.
    % of the question whether a program admits a finite answer sets. It turns out that one can adapt our non-deterministic Turing machine of the first contribution to obtain a semi-decision procedure for frugal programs. 
	%
	% 
	%We improve this approach by considering forbidden facts that never appear in any answer set. 
	%
	% Our semi-decision procedure immediately yields a finite grounding and can therefore be embedded in a ground-and-solve framework.
	%
	%
	%\item %
\end{itemize}

%To this aim, we structure the paper as follows. 
%
%\begin{itemize}
%  \item Section~X: We formalize a notion of termination for ASP programs. 
%  \item Section~X: We show a sufficient condition for termination that yields a bound on the depth of functional terms.
%  \item Section~X: We demonstrate our approach on the forementioned examples.
%\end{itemize}

\section{Preliminaries}
\label{section:preliminaries}

We assume familiarity with propositional satisfiability (SAT)~\cite{KleineBuningLettman99,BiereHeuleMaarenWalsh09},
where we use clauses, formulas, and assignments as usual.

\paragraph{Ground Answer Set Programming (ASP).}
We follow standard definitions of propositional ASP~\cite{BrewkaEiterTruszczynski11,JanhunenNiemela16a}.
%For comprehensive foundations, we refer to introductory
%literature~\cite{BrewkaEiterTruszczynski11,JanhunenNiemela16a}.
%
Let $\ell$, $m$, and $n$ be non-negative integers with $0 \leq \ell$, $0 \leq m \leq n$, and let $b_1, \ldots, b_\ell, a_1, \ldots a_n$ be propositional atoms.
Moreover, a \emph{literal} is an atom or its negation.
%, and
%$l \in \{a_1, \neg a_1\}$ being a \emph{literal},~i.e., an atom or the negation thereof.
%
A \emph{ground rule} $r$ is an implication of the form $b_1, \ldots, b_\ell \leftarrow a_{1}, \ldots, a_{m}, \neg
a_{m+1}, \ldots, \neg a_n$ where $0 \leq \ell \leq 1$; that is, a formula with at most one atom before $\leftarrow$. % with~$\ell\leq 1$.
For such a rule, we define $H_r = \{b_1,\ldots, b_\ell\}$ and $B_r = \{a_{1}, \ldots, a_{m}, \lnot a_{m+1}, \ldots, \lnot a_n\}$.
Note that ground rules are non-disjunctive since $|H_r|\leq 1$; for \emph{constraints} we have $|H_r|=0$.
Moreover, for a set of literals $L$ (such as $B_r$), let $L^+$ be the set of all positive literals in $L$ and let $L^- = \{a \mid \lnot a \in L\}$.
% We denote the sets of \emph{atoms} occurring in a rule~$r$ or in a
% program~$\prog$ by $\at(r) = H_r \cup B^+_r \cup B^-_r$ and
% $\at(\prog)= \bigcup_{r\in\prog} \at(r)$, respectively. 
%
%\begin{align*}
%\smallskip
%\hill
%\vspace{-0.75em}
%\(
%a_1\vee \cdots \vee a_\ell 
%a \leftarrow a_{1}, \ldots, a_{m}, \neg
%a_{m+1}, \ldots, \neg a_n.
%
%\)
%
% \emph{Constraints} are of the form $\bot \leftarrow a_{1}, \ldots, a_{m}, \neg
% a_{m+1}, \ldots, \neg a_n$, which is a shortcut for the ground rule $c \leftarrow a_{1}, \ldots, a_{m}, \neg
% a_{m+1}, \ldots, \neg a_n, \neg c$ where $c$ is a fresh atom.
%\hfill
%
%\noindent 
%For a ground rule~$r$, we let $H_r = \{a\}$,
%$B_r = \{a_{1}, \ldots, a_{m}, \lnot a_{m+1}, \ldots, \lnot a_n\}$, and
%$B^-_r = \{a_{m+1}, \ldots, a_n\}$.
%
%
%
A \emph{(normal) ground program} is a set of ground rules (and constraints).

An \emph{interpretation} $I$ is a set of atoms. 
An interpretation $I$ \emph{satisfies} a
ground rule~$r$ if $(H_r\,\cup\, B^-_r) \,\cap\, I \neq \emptyset$ or
$B^+_r \setminus I \neq \emptyset$; it is a \emph{model} of a ground program $\prog$
if it satisfies all rules is~$\prog$. %, in symbols $I \models \prog$. %, if $I$ satisfies every rule~$r \in \prog$.
For a set~$A$ of atoms, a function~$\varphi: A \rightarrow \Naturals$ 
is an \emph{ordering} over~$A$.
Consider a model $I$ of a ground program~$\prog$, and an ordering $\varphi$ over~$I$.
An atom~$a \in I$ is \emph{proven (justified)} 
if there is a ground rule~$r\in\prog$ with $a\in H_r$ such that (i)~$B^+_r\subseteq I$,
(ii)~$I \cap B^-_r = \emptyset$ and $I \cap (H_r \setminus \{a\}) = \emptyset$,
as well as (iii)~$\varphi(b) < \varphi(a)$ for every~$b\in B_r^+$.
Then, $I$ is an
\emph{answer set} of~$\prog$ if (I)~$I$ is a model of~$\prog$, and
(II) \emph{$I$ is proven}, i.e., every~$a \in I$ is proven. 
Deciding whether a ground program has an answer set is called
\emph{consistency}, which is 
\NPTimeC~%~\cite{BidoitFroidevaux91,
\cite{MarekTruszczynski91} for normal ground programs.
For non-ground normal programs without functions, this is \NExpTimeC~\cite{EiterGottlobMannila94}; 
$\Sigma_2^\textsc{P}$-complete with bounded arities~\cite{EiterFaberFink07}.
%
%
% \begin{example}
% Consider the program
% \end{example}

%~\\[-2em]
\paragraph{Non-Ground ASP.}
We define \Preds, \Funs, \Constants, and \Vars to be mutually disjoint and countably infinite sets of predicates, function symbols, constants, and variables, respectively.
Every $s \in \Preds \cup \Funs$ is associated with some \emph{arity} $\Arity(s) \geq 0$.
For every $i \geq 0$, both $\PredsOfArity{i} = \{P \in \Preds \mid \Arity(P) = i\}$ and $\FunsOfArity{i} = \{f \in \Funs \mid \Arity(f) = i\}$ are countably infinite.
The set $\Terms$ of terms includes $\C$ and $\Vars$; and contains $f(t_1, \dots, t_i)$ for every $i \geq 1$, every $f \in \FunsOfArity{i}$, and every $t_1, \ldots, t_i \in \Terms$.
A term $t \notin \Variables \cup \Constants$ is \emph{functional}.
%For a term $t$; let $\Depth(t) = 1$ if $t$ is not functional, and $\Depth(t) = 1 + \Max(\Depth(s_1), \ldots, \Depth(s_n))$ if $t$ is of the form $f(s_1, \ldots, s_n)$.
%We write lists $t_1, \ldots, t_n$ of terms as $\Vt$, oftentimes treated as sets.
A \emph{(ground) substitution} is partial function from variables to \emph{ground terms}; that is, to variable-free terms.
We write $[x_1 / t_1, \dots, x_n / t_n]$ to denote the substitution mapping $x_{1}, \dots, x_{n}$ to $t_{1}, \dots, t_{n}$, respectively.
For an expression $\Expression$ and a substitution \Subs, let $\Expression\Subs$ be the expression resulting from $\Expression$ by uniformly replacing every syntactic occurrence of every variable $x$ by $\Subs(x)$ if defined.

Let $q_1,\dots,q_\ell,p_1,\ldots,p_n \in \Preds$ be predicates and $S_1,\ldots,S_\ell,T_1, \ldots, T_n$ be vectors over $\Terms$. 
%, where each takes \emph{arity}~$|p_i|$ 
%many variables for~$1\leq i \leq n$.
%
A \emph{(non-ground) program} $\prog$ is a set of \emph{(non-ground) rules} of the form
$H \leftarrow p_{1}({T_{1}}), \ldots, p_{m}({T_{m}}), \neg p_{m{+}1}({T_{m{+}1}}), \ldots, \neg p_n({T_n})$ 
where 
$H = q_1({S_1}), \dots, q_\ell({S_\ell})$ with $0 \leq \ell \leq 1$, 
$|{{S_i}}|{=}\Arity({q_i})$ for every $1 \leq i \leq \ell$, 
and $|{{T_i}}|{=}\Arity({p_i})$ for every $1 \leq i \leq n$.
Note again that we do not consider disjunctive rules.
%
%
% For some $\FormatEntitySet{X} \in \{\Preds, \Funs, \Constants, \Vars, \Terms\}$ and a program $\prog$, we write $\EI{\FormatEntitySet{X}}{\prog}$ to denote the set of all elements of $\FormatEntitySet{X}$ that syntactically occur in $\prog$.
%
%
%
We assume that rules are \emph{safe}; that is, every variable in a rule occurs in some $T_i$ with $1 \leq i \leq m$.
%In other words, all variables that are used in a rule appear in some positive literal in the rule body.
We define $\HU(\prog)$ as the set of all (ground) terms that only feature function symbols and constants in \prog.
For a set of ground terms $T$, let $\Ground(\prog, T)$ be the set of ground rules containing $\Rule\Subs$ for every $\Rule \in \prog$ and every substitution $\Substitution$ from the the variables of $\Rule$ into $T$. 
Let $\Ground(\prog) = \Ground(\prog, \HU(\prog))$.
Note that $\Ground(\prog)$ might be infinite.
An \emph{answer set} of a program \prog is an answer set of $\Ground(\prog)$.
A ground program $\prog_g$ is a \emph{valid grounding} for a program \prog if 
$\prog_g$ and $\prog$ have the same answer sets.

\paragraph{Recap: Arithmetical/Analitical Hierarchy}

We view the \emph{arithmetical hierarchy} as classes of formal languages $\Sigma_i^0$ with $i \geq 1$ where $\Sigma_1^0$ is the class of all semi-decidable languages 
and $\Sigma_{i+1}^0$ results from $\Sigma_i^0$ by a Turing jump. The respective co-classes are denoted by $\Pi_i^0$.
We also consider the first level of the \emph{analytical hierarchy}, i.e. $\Sigma_1^1$ and $\Pi_1^1$, which is beyond the arithmetical hierarchy
\cite{RogersArithmeticalHierarchy}.
These classes are merely considered via reductions to and from languages contained in or hard for the respective classes
\cite{highly-undecidable-tilings}.

\section{Checking Consistency of ASP Programs}
\label{sec:consistency}

In this section, we prove that the problem of checking if a program admits an answer set is highly undecidable.
The upper bound follows from Lemma~\ref{lemma:membership-reduction} and Proposition~\ref{proposition:infinitely-visiting}; the lower bound from Lemma~\ref{lemma:hardness-reduction} and Proposition \ref{proposition:recurring-tiling}.
We include complete proofs for these lemmas in the technical appendix.

\begin{theorem}
\label{theorem:asp-consistency}
Deciding program consistency is $\Sigma_1^1$-complete.
\end{theorem}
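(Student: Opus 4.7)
The plan is to prove the theorem via two complementary reductions, as already signposted in the statement: a $\Sigma_1^1$-membership argument factoring through the ``infinitely-visiting'' problem, and a $\Sigma_1^1$-hardness argument by reduction from the recurring tiling problem.

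For the upper bound, the strategy is to exploit the fact that the Herbrand universe $\HU(\prog)$ is countable, so any answer set $I$ is a countable set of ground atoms, and the witnessing ordering $\varphi:I\to\Naturals$ is a countable first-order object as well. I would thus write ``$\prog$ is consistent'' as a second-order existential quantification over a pair $(I,\varphi)$, followed by an arithmetical condition that says: \emph{(a)} every ground instance of every rule in $\Ground(\prog)$ is satisfied by $I$, and \emph{(b)} every $a \in I$ is proven w.r.t.\ $\varphi$. Both~(a) and~(b) are $\Pi_1^0$ (universal quantification over substitutions ranging in the countable $\HU(\prog)$), so the combined statement is $\Sigma_1^1$. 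The paper's route via Lemma~\ref{lemma:membership-reduction} packages exactly this: reduce consistency to deciding whether a certain Turing-machine-like process admits an infinite computation visiting a designated state infinitely often, which Proposition~\ref{proposition:infinitely-visiting} establishes as a $\Sigma_1^1$ problem.

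For the lower bound, the approach is to reduce from recurring tiling: given tile types $T$, horizontal/vertical compatibility $H,V$, and a distinguished tile $t_\star\in T$, decide if $\Naturals\times\Naturals$ admits a compatible tiling in which $t_\star$ recurs infinitely often in, say, the first column (Proposition~\ref{proposition:recurring-tiling}). I would use a unary function symbol $s$ so that nested terms $s^i(0)$ represent $i\in\Naturals$, and predicates $\mathit{tile}_t(X,Y)$ for each $t\in T$. Guessing a tile at each position is achieved by safe pairs of normal rules using negation-as-failure (the usual simulation of choice). Local compatibility constraints are standard. The coordinate grid itself is generated by rules of the form $\mathit{pos}(s(X),Y) \leftarrow \mathit{pos}(X,Y)$ and symmetrically for $Y$, seeded by $\mathit{pos}(0,0)$.

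The main obstacle — and the interesting part of the reduction — is forcing the recurring condition via foundedness. The answer-set semantics (II) requires a well-founded ordering $\varphi$ on $I$, so any atom's proof chain through positive bodies must eventually bottom out. I would exploit this by introducing an auxiliary predicate $\mathit{ok}$ that must belong to $I$ (enforced by a constraint $\leftarrow \neg\,\mathit{ok}$), where $\mathit{ok}$ is derivable only through a rule that takes $\mathit{tile}_{t_\star}(0,Y)$ together with $\mathit{ok}$ at a strictly larger $Y$-coordinate; combined with a constraint that prevents $\mathit{ok}$ from being derived ``above'' any position not eventually dominated by another $t_\star$-cell, this chains $\mathit{ok}$-proofs through the $t_\star$-positions of the first column. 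Since the proof chain for each atom is finite but the model is infinite, the only way to sustain $\mathit{ok}$ across all $Y$ is for $t_\star$ to recur unboundedly, exactly capturing the recurrence condition. Showing that valid tilings biject with answer sets then completes the hardness direction.
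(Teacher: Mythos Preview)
Your upper bound is essentially the paper's argument: the direct $\Sigma_1^1$ encoding you sketch works (though condition~(b) is $\Pi_2^0$ rather than $\Pi_1^0$, since proving an atom involves an existential over ground rules---a harmless slip, as any arithmetical matrix under a second-order existential lands in $\Sigma_1^1$), and you also cite the paper's route via Lemma~\ref{lemma:membership-reduction}.

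For the lower bound, your setup (successor encoding of $\Naturals$, tile guessing via negation-as-failure, local compatibility constraints) matches the paper, and you correctly identify foundedness as the lever for encoding recurrence. However, the recurrence gadget you describe does not work. You say $\mathit{ok}$ is derivable ``only through a rule that takes $\mathit{tile}_{t_\star}(0,Y)$ together with $\mathit{ok}$ at a strictly larger $Y$-coordinate.'' Read literally, this has no base case: every derivation of $\mathit{ok}$ would require a prior $\mathit{ok}$ higher up, an infinite regress that foundedness forbids outright---so $\mathit{ok}$ is never derivable and the program is inconsistent regardless of the tiling. The extra ``constraint that prevents $\mathit{ok}$ from being derived above any position not eventually dominated by another $t_\star$-cell'' is too vague to repair this and appears to restate the recurrence condition semantically rather than encode it syntactically.

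The paper's construction shows what is missing: a unary predicate $\textit{Below}_{t_0}(Y)$ with a \emph{genuine base case} $\textit{Below}_{t_0}(Y) \leftarrow \textit{Tile}_{t_0}(c_0, s(Y))$, a propagation rule $\textit{Below}_{t_0}(Y) \leftarrow \textit{Below}_{t_0}(s(Y))$, and the constraint $\leftarrow \textit{Dom}(Y), \neg\,\textit{Below}_{t_0}(Y)$. Foundedness forces each derivation of $\textit{Below}_{t_0}(Y)$ to terminate after finitely many propagation steps at an actual $t_0$-tile above $Y$; the constraint then demands this for \emph{every} $Y$, which is precisely unbounded recurrence. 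The structural point you are missing is that the base case must depend only on the tiling, not on the auxiliary predicate itself---otherwise foundedness kills the predicate entirely instead of merely bounding each individual proof.
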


Although Theorem~\ref{theorem:asp-consistency} has been proven before (see Corollary~5.12 in \cite{marek_stable_1994} and Theorem~5.9 in \cite{dantsin_complexity_2001}), 
we present complete proofs using (more) intuitive reductions 
that also lay our foundation for 
Section~\ref{sec:termination}.

%We point out the details since  first of all we think that our proofs are easy to follow and show very natural reductions  and  second we build heavly on the presented details in later sections.
%Intuitively, both reductions make use of an ``eventually'' quantification, meaning that something needs happen after an arbitrary but finite amount of steps.
%We find this to be a common characteristic of $\Sigma_1^1$-complete problems.

\subsection{An Upper Bound for ASP Consistency}
\label{sec:consistency-membership}

Our only goal in this subsection is to show that checking consistency is in $\Sigma_1^1$ by reduction to the following problem.

\begin{proposition}[{\cite[Corollary 6.2]{highly-undecidable-tilings}\footnote{The original result shows $\Pi_1^1$-completeness for the complement.}}]
\label{proposition:infinitely-visiting}
Checking if some run of a non-deterministic Turing machine on the empty word visits the start state infinitely many times is in $\Sigma_1^1$.
\end{proposition}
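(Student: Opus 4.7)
}

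The plan is to exhibit a $\Sigma_1^1$-formula that, given a non-deterministic Turing machine $M$, holds iff some run of $M$ on the empty word visits the start state infinitely often. Recall that a $\Sigma_1^1$-formula has shape $\exists X\, \varphi(X)$ where $X$ ranges over functions (or sets) on $\Naturals$ and $\varphi$ is arithmetical (only first-order quantifiers over $\Naturals$, possibly referring to $X$).

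First I would fix a standard encoding of Turing machine configurations as natural numbers (tape contents plus head position plus current state), together with a decidable (in fact primitive recursive) binary predicate $\mathsf{Next}_M(c, c')$ expressing that configuration $c'$ results from $c$ by one step of $M$, and a primitive recursive predicate $\mathsf{State}(c, q)$ extracting the current state out of a configuration code. Let $c_0$ be the (constant) code of the initial configuration on the empty word, and let $q_0$ be the start state. All of these notions are arithmetical.

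Next I would represent an infinite run of $M$ as a function $R\colon \Naturals \to \Naturals$ with $R(n)$ the code of the configuration at step $n$. Since functions from $\Naturals$ to $\Naturals$ are second-order objects, quantifying existentially over $R$ is the one second-order quantifier I need. The required $\Sigma_1^1$-formula is then
\begin{equation*}
\exists R\,\bigl[\, R(0) = c_0 \,\wedge\, \forall n\, \mathsf{Next}_M(R(n), R(n{+}1)) \,\wedge\, \forall n\, \exists m\, (m > n \wedge \mathsf{State}(R(m), q_0))\,\bigr].
\end{equation*}
The first two conjuncts say that $R$ encodes a genuine run of $M$ from the initial configuration; they are $\Pi_1^0$ (a universal arithmetical statement about $R$). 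The third conjunct expresses that the start state is visited infinitely often; it is $\Pi_2^0$. Their conjunction is arithmetical, so the whole formula is $\Sigma_1^1$.

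The one step that needs care is formally justifying that $\mathsf{Next}_M$ and $\mathsf{State}$ are arithmetical (in fact primitive recursive) predicates on the natural-number encodings of configurations; this is standard Turing-machine bookkeeping and I would just cite it, perhaps via a standard pairing function so that the tape-contents component can be coded as a single natural number and updated locally around the head. Given that, the three conjuncts above are all arithmetical and the overall shape $\exists R\, (\text{arithmetical})$ places the problem in $\Sigma_1^1$, as required. I do not need to address $\Sigma_1^1$-hardness here: the proposition only asserts membership, and hardness is handled separately via Proposition~\ref{proposition:recurring-tiling}.
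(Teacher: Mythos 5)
Your argument is correct: guessing the infinite run as a single second-order function object $R$ and expressing both the validity of each transition and the recurrence of the start state arithmetically is exactly the standard proof that this problem lies in $\Sigma_1^1$, and the quantifier counting ($\Pi_2^0$ matrix under one existential function quantifier) is right. Note that the paper does not prove Proposition~\ref{proposition:infinitely-visiting} itself---it is imported verbatim from the cited work on highly undecidable tiling problems---so your self-contained formula-based argument reconstructs the classical proof of the cited result rather than diverging from anything in the paper.
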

%In other words, at every step of the run, the machine needs to reach its start state again ``eventually'', i.e. after a finite amount of steps.

% We introduce one last preliminary definition before presenting our reduction:
For a program $P$ and an interpretation $I$, let $\textsf{Active}_{I}(P)$ be the set of all rules in $\textsf{Ground}(P)$ that are not satisfied by $I$.
If $I$ is finite, then so is $\textsf{Active}_{I}(P)$ and $\textsf{Active}_{I}$ is computable.

\begin{definition}
\label{definition:reduction-infinitely-visiting}
For a program $P$, let $\mathcal{M}_P$ be the non-deterministic machine that, regardless of the input, executes the following instructions:
\begin{enumerate}
\item Initialise an empty set $L_0$ of literals, and some counters $i := 0$ and $j := 0$.
\item If $L_i^+$ and $L_i^-$ are not disjoint, \emph{halt}.
\item If $L_i^+$ is an answer set of $P$, \emph{loop} on the start state.
\label{loop}
\item Initialise $L_{i+1} := L_i \cup H_r \cup \{\neg a \mid a \in B_r^-\}$ where $r$ is some non-deterministically chosen rule in $\textsf{Active}_{L_i^+}(P)$.\label{chooseActiveRules}
\item If $L_i$ satisfies all of the rules in $\textsf{Active}_{L_j^+}(P)$, then increment $j := j + 1$ and visit the start state once.
\item Increment $i := i + 1$ and go to Step~\ref{loop}.
\end{enumerate}
\end{definition}

\begin{lemma}
\label{lemma:membership-reduction}
A program $P$ is consistent iff some run of $\mathcal{M}_P$ on the empty word visits the start state infinitely many times.
\end{lemma}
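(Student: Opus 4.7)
The plan is to prove both directions by viewing $\mathcal{M}_P$ as a nondeterministic procedure that incrementally constructs a literal set $L_i$ approximating some candidate answer set of $P$. Two observations drive the proof: \emph{(i)} a ``good'' run maintains invariants $L_i^+ \subseteq A$ and $L_i^- \cap A = \emptyset$ for a fixed answer set $A$, so step~2 never halts the run; and \emph{(ii)} $\textsf{Active}_{L_j^+}(P)$ is finite whenever $L_j^+$ is finite, as remarked right before Definition~\ref{definition:reduction-infinitely-visiting}.

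For the backward direction, I would split on how the start state is visited infinitely often. If via step~3, then the triggering condition of that step already gives $L_i^+$ as an answer set of $P$, so $P$ is consistent. Otherwise, $j$ grows unboundedly through step~5. Set $L^+ := \bigcup_i L_i^+$ and $L^- := \bigcup_i L_i^-$; I would show $L^+$ is an answer set. Disjointness $L^+ \cap L^- = \emptyset$ holds because step~2 never halted. For model-hood, if some rule $r$ were unsatisfied by $L^+$ then $B_r^+ \subseteq L_k^+$ for some $k$ (since $B_r^+$ is finite and $B_r^+ \subseteq L^+$) while $L^+ \cap (H_r \cup B_r^-) = \emptyset$, whence $r \in \textsf{Active}_{L_k^+}(P)$; since $j$ eventually equals $k$, step~5 forces some $L_i^+$ to satisfy $r$, contradicting the second condition. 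For justifiedness, use the ordering $\varphi(a) :=$ the step at which $a$ first entered $L_i^+$; the step-4 rule used to add $a$ witnesses justification because its positive body lies in $L_{i-1}^+$ with strictly smaller $\varphi$-values, while its negative body is contained in $L^-$ and hence disjoint from $L^+$.

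For the forward direction, given an answer set $A$ with a justification ordering $\varphi_A : A \to \mathbb{N}$, I would build a run that, at each step $i$ with $L_i^+ \subsetneq A$, picks the atom $a \in A \setminus L_i^+$ of smallest $\varphi_A$-value and uses as the step-4 rule any rule $r_a$ justifying $a$ in $A$. A direct check gives $r_a \in \textsf{Active}_{L_i^+}(P)$: the positive body atoms have smaller $\varphi_A$-value than $a$ and are thus already in $L_i^+$; the head atom $a$ is not yet in $L_i^+$; and $B_{r_a}^- \cap A = \emptyset$ by justification, so $B_{r_a}^- \cap L_i^+ = \emptyset$. The invariants then hold by induction, so step~2 never halts. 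If $A$ is finite, $L_i^+$ eventually equals $A$ and step~3 loops on the start state forever. If $A$ is infinite, every rule in the finite set $\textsf{Active}_{L_j^+}(P)$ has some satisfying atom in $A$ (because $A$ is a model with $B_r^+ \subseteq A$), all such atoms are added within finitely many further $\varphi_A$-ordered steps, and thus $j$ is incremented infinitely often.

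The main obstacle is the infinite-answer-set sub-case of the forward direction: step~5 demands that \emph{all} rules active at a given $j$ be simultaneously satisfied by a single subsequent $L_i^+$. This succeeds precisely because $\textsf{Active}_{L_j^+}(P)$ is finite by observation \emph{(ii)} and because the $\varphi_A$-ordered enumeration is fair enough to bring every satisfying $A$-atom of these finitely many rules into $L_i^+$ within bounded further steps; an arbitrary nondeterministic choice at step~4 would not give this guarantee.
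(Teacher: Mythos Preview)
Your proposal is correct and follows essentially the same approach as the paper: the same case split in the backward direction (step~3 loop versus unbounded growth of $j$), the same limit interpretation $\bigcup_i L_i^+$ with the step-index ordering for justification, and in the forward direction the same idea of replaying the justification order of a given answer set. Your write-up is in fact more explicit than the paper's (invariants, the concrete $\varphi_A$-minimal enumeration); the one point worth spelling out fully is why your enumeration is \emph{fair}---this holds because safety forces every $\varphi_A$-level to contain only finitely many atoms, so the $\varphi_A$-ordered walk reaches every atom of $A$.
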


Intuitively, a run of $\mathcal{M}_P$ attempts to produce an answer set for $P$; if successful, it visits the start state infinitely many times.
The answer set is materialised via the non-deterministic choices that instantiate 
% the sequence 
$L_1, L_2, \ldots$;
% of sets of literals 
see Step~\ref{chooseActiveRules}.

It is important to realize that the machine $\mathcal{M}_P$ only adds proven atoms in the sequence $L_1^+, L_2^+, \ldots$
To show this, consider some $k \geq 1$ and the ordering that maps the only atom in $L_i^+ \setminus L_{i-1}^+$ to $i$ for every $1 \leq i \leq k$.
Hence, if $L_k^+$ is a model of $P$, then $L_k^+$ also an answer set of $P$ and the run loops at the $k$-th iteration because of Step~\ref{loop}.
Otherwise, $\textsf{Active}_{L_k^+}(P)$ is non-empty and a rule from this set can be chosen in Step~\ref{chooseActiveRules}.

If the sequence $L_1, L_2, \ldots$ is infinite and $\mathcal{M}_P$ visits the start state infinitely many times during the considered run, then $\bigcup_{i \geq 1} L_i^+$ is a model of $P$.
This is because, for every $j \geq 1$, there is some $i \geq j$ such that $L_i^+$ satisfies all of the rules in $\textsf{Active}_{L_j^+}(P)$.
Therefore, since every atom in $\bigcup_{i \geq 1} L_i^+$ is proven, this interpretation is an answer set of $P$.

%
%We formally show that a program $P$ is consistent if and only if a run of $\mathcal{M}_P$ on the empty word visits the start state infinitely many times in Section~\ref{sec:appendix-consistency} of the appendix.

%Note that, for any given $i \geq 1$, the set $L_i$ only contains proven atoms$\textsf{Active}_{L_i^+}(P)$ in Step~\ref{chooseActiveRules} is non-empty since otherwise, if all rules are satisfied, $L_i^+$ is a model for $P$ but then (given that it does not feature $\bot$) it must either be an answer set or some atom in $L_i^+$ is not proven; by construction, the latter can only be the case if a atom occurs in both $L_i^+$ and $L_i^-$. All those cases are handled in earlier steps.
%The set $\textsf{Active}_{L_i^+}(P)$ in Step~\ref{chooseActiveRules} is non-empty since otherwise, if all rules are satisfied, $L_i^+$ is a model for $P$ but then (given that it does not feature $\bot$) it must either be an answer set or some atom in $L_i^+$ is not proven; by construction, the latter can only be the case if a atom occurs in both $L_i^+$ and $L_i^-$. All those cases are handled in earlier steps.

\subsection{A Lower Bound for ASP Consistency}
\label{sec:consistency-hardness}

Our only goal in this subsection is to show that checking consistency is $\Sigma_1^1$-hard by reduction from the following problem.
%We then formally discuss our reduction in Definition~\ref{definition:tiling-system} to conclude the proof of Theorem~\ref{trm:consistency-sigma-1-1-hard}.

\begin{definition}
\label{definition:tiling-system}
A \emph{tiling system} is a tuple $\langle T, \textit{HI}, \textit{VI}, t_0 \rangle$ where $T$ is a finite set of tiles, $\textit{HI}$ and $\textit{VI}$ are subsets of $T \times T$, and $t_0$ is a domino in $T$.
Such a tiling system admits a \emph{recurring solution} if there is a function $f : \Naturals \times \Naturals \to T$ such that:
\begin{enumerate}
\item For every $i, j \geq 0$, we have that $\langle f(i, j), f(i+1, j) \rangle \notin \textit{HI}$ and $\langle f(i, j), f(i, j+1) \rangle \notin \textit{VI}$.
\item There is an infinite subset $S$ of $\Naturals$ such that $f(0, j) = t_0$ for every $j \in S$.
\end{enumerate}
\end{definition}

\begin{proposition}[{\cite[Theorem 6.4]{highly-undecidable-tilings}\footnote{The original result shows $\Sigma_1^1$-completeness.}}]
\label{proposition:recurring-tiling}
Checking if a tiling system admits a recurring solution is $\Sigma_1^1$-hard.
\end{proposition}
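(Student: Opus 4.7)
The plan is to reduce from the $\Sigma_1^1$-hard problem of deciding whether a given non-deterministic Turing machine $M$ has a run on the empty word that visits the start state infinitely often (the hardness companion to Proposition~\ref{proposition:infinitely-visiting}). Given such an $M$, I would build a tiling system $\langle T, \textit{HI}, \textit{VI}, t_0 \rangle$ whose recurring solutions are essentially in bijection with such runs of $M$.

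The construction uses a standard Wang-tile-style encoding: the $j$-th row of the grid represents the configuration of $M$ after $j$ steps, and the $i$-th column represents the $i$-th tape cell. Each tile encodes a tape symbol together with, at the unique cell currently holding the head, the current state and the chosen transition, plus auxiliary flags locally signalling that the head is about to arrive from a neighbor. The sets $\textit{HI}$ and $\textit{VI}$ then list all adjacent tile pairs that violate either configuration well-formedness (e.g., two heads in one row, inconsistent arrival flags) or the one-step transition relation of $M$: cells away from the head must copy their symbol downward, while the head cell and its immediate vertical neighbors must jointly implement the chosen transition's symbol-write and head-move. Boundary/border tiles with suitable local constraints pin down row $0$ to the initial configuration (empty tape, head at position $0$, start state).

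For the recurrence condition, I would first modify $M$, preserving whether it visits the start state infinitely often, so that it always returns the head to position $0$ with a fixed blank underneath before re-entering the start state, and so that the first step taken from the start state is deterministic. With this normalization, whenever $M$ is in the start state the tile at the corresponding cell $(0, j)$ is a single fixed tile, which I take as $t_0$. Thus $f(0, j) = t_0$ holds exactly when the simulated run of $M$ is in the start state at time $j$, and recurring solutions of the tiling system correspond precisely to runs of $M$ on the empty word visiting the start state infinitely often.

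The main obstacle will be engineering the tile alphabet and the sets $\textit{HI}, \textit{VI}$ so that purely binary adjacency constraints faithfully enforce global consistency of the simulated computation: single-headedness, correct transition updates across the entire row, the boundary conditions fixing row $0$ to the initial configuration on an unbounded grid, and the restriction of the recurrence marker to a single canonical tile. These are standard but delicate Wang-tile bookkeeping tricks; once they are in place, the equivalence between recurring tilings and start-state-recurring runs of $M$ follows by a straightforward induction on the row index.
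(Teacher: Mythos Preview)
The paper does not give its own proof of this proposition; it is quoted as a cited result from Harel (Theorem~6.4 of \cite{highly-undecidable-tilings}) and used as a black box in the hardness argument for Theorem~\ref{theorem:asp-consistency}. Your sketch is essentially Harel's original argument: reduce from the $\Sigma_1^1$-hard problem of whether a non-deterministic Turing machine has a run on the empty word that visits the start state infinitely often, via the standard row-by-row Wang-tile encoding of computations, with the recurrence marker $t_0$ tied to re-entries of the start state at the leftmost cell. So in spirit you are reproducing the cited proof rather than diverging from anything in the present paper.

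One point to tighten: the definition of a recurring solution used here (Definition~\ref{definition:tiling-system}) carries no seed tile and no boundary condition, so you cannot ``pin down row $0$ to the initial configuration'' by local constraints alone. The correct way to handle this within the given definition is to let the recurrence condition itself act as the anchor: design the tile set so that any occurrence of $t_0$ at position $(0,j)$ forces row $j$ to be the initial configuration and forces valid transitions upward from there. Rows below the first occurrence of $t_0$ may then be arbitrary, which is harmless for the reduction. Your normalisation of $M$ (head at cell $0$, fixed symbol, deterministic first step out of the start state) is exactly what is needed to make $t_0$ a single tile; with that and the anchoring adjustment, the argument goes through.
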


Condition~2 in Defintion~\ref{definition:tiling-system} implies that, given any position in the first column, we will eventually find the special tile if we move upwards on the grid after a finite amount of steps.
%Having presented the target problem for our reduction, we are ready to formally define this function.

\begin{definition}
\label{definition:tiling-reduction}
For a \emph{tiling system} $\mathfrak{T} = \langle T, \textit{HI}, \textit{VI}, t_0 \rangle$, let $P_\mathfrak{T}$ be the program that contains the ground atom $\textit{Dom}(c_0)$ and all of the following rules:
\begin{align}
\textit{Dom}(s(X)) &\leftarrow \textit{Dom}(X) \label{rule:domain} \\
  \textit{Tile}_t(X, Y) &\leftarrow \textit{Dom}(X), \textit{Dom}(Y), \notag\\
  &\quad \{\lnot \textit{Tile}_{t'}(X, Y) \mid t' \in T \setminus \{t\}\} ~~ \forall t \in T \label{rule:tile-choice} \\
%\textit{Tile}_t(x, y) &\leftarrow \{\lnot \textit{Tile}_{t'}(x, y) \mid t' \in T \setminus \{t\}\} ~~~ \forall t \in T \\
  \leftarrow{} &\textit{Tile}_t(X, Y), \textit{Tile}_{t'}(s(X), Y) ~~ \forall \langle t, t'\rangle \in \textit{HI} \label{rule:h-incompatibility} \\
  \leftarrow{} &\textit{Tile}_t(X, Y), \textit{Tile}_{t'}(X, s(Y)) ~~ \forall \langle t, t'\rangle \in \textit{VI} \label{rule:v-incompatibility} \\
\textit{Below}_{t_0}(Y) &\leftarrow \textit{Tile}_{t_0}(c_0, s(Y)) \label{rule:below} \\
\textit{Below}_{t_0}(Y) &\leftarrow \textit{Below}_{t_0}(s(Y)) \label{rule:below-prop} \\
  \leftarrow{} &\textit{Dom}(Y), \lnot \textit{Below}_{t_0}(Y) \label{rule:under-t0}
\end{align}
%In the above, $x$ and $y$ are variables.
\end{definition}

\begin{lemma}
\label{lemma:hardness-reduction}
A tiling system $\mathfrak{T}$ admits a recurring solution iff the program $P_\mathfrak{T}$ is consistent
\end{lemma}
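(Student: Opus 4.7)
The plan is to prove both directions of the biconditional by explicit construction. For the forward direction, given a recurring solution $f : \Naturals \times \Naturals \to T$, I would define the interpretation $I$ consisting of $\textit{Dom}(s^i(c_0))$ for every $i \geq 0$, $\textit{Tile}_{f(i,j)}(s^i(c_0), s^j(c_0))$ for every $i,j \geq 0$, and $\textit{Below}_{t_0}(s^y(c_0))$ for every $y \geq 0$. Verifying that $I$ is a model of $P_\mathfrak{T}$ is a routine rule-by-rule check: constraints~\eqref{rule:h-incompatibility} and \eqref{rule:v-incompatibility} hold because $f$ is a valid tiling, Rule~\eqref{rule:under-t0} holds because every $\textit{Dom}$ atom has a matching $\textit{Below}_{t_0}$, and the remaining rules are immediate.

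The delicate step is exhibiting an ordering $\varphi : I \to \Naturals$ under which every atom is proven. For $\textit{Dom}$ and $\textit{Tile}$ atoms, simple rank functions in $i$ and $i+j$ (the latter with a larger offset) suffice. For $\textit{Below}_{t_0}(s^y(c_0))$, I would let $k(y)$ be the least $k \geq y$ with $f(0, k{+}1) = t_0$, which is finite because $\{j \mid f(0, j) = t_0\}$ is infinite by Definition~\ref{definition:tiling-system}, and set $\varphi(\textit{Below}_{t_0}(s^y(c_0))) = M + k(y) - y$ for a base $M$ dominating all $\textit{Tile}$-atom ranks. Then $\textit{Below}_{t_0}(s^{k(y)}(c_0))$ is justified by Rule~\eqref{rule:below} from $\textit{Tile}_{t_0}(c_0, s^{k(y)+1}(c_0))$, and each $\textit{Below}_{t_0}(s^z(c_0))$ with $y \leq z < k(y)$ is justified by Rule~\eqref{rule:below-prop} from $\textit{Below}_{t_0}(s^{z+1}(c_0))$, whose $\varphi$-value is strictly smaller by construction.

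For the backward direction, starting from an answer set $I$ of $P_\mathfrak{T}$, an easy induction via $\textit{Dom}(c_0)$ and Rule~\eqref{rule:domain} gives $\textit{Dom}(s^i(c_0)) \in I$ for all $i \geq 0$. A case analysis on the choice rules~\eqref{rule:tile-choice} then shows that exactly one $\textit{Tile}_t(s^i(c_0), s^j(c_0))$ lies in $I$ at every position: if no tile is in $I$, then every choice rule's body is satisfied but its head is not, contradicting modelhood; if two or more are in $I$, then every choice rule's body is blocked by some competing tile appearing negatively, so none of these atoms can be justified, contradicting answer-set status. Define $f(i, j)$ as this unique tile; Rules~\eqref{rule:h-incompatibility} and \eqref{rule:v-incompatibility} then directly give condition~1 of Definition~\ref{definition:tiling-system}. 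For condition~2, Rule~\eqref{rule:under-t0} forces $\textit{Below}_{t_0}(s^y(c_0)) \in I$ for every $y \geq 0$. Supposing for contradiction that only finitely many $k$ satisfy $f(0, k) = t_0$, bounded by some $K$, then for every $y \geq K$ Rule~\eqref{rule:below} cannot justify $\textit{Below}_{t_0}(s^y(c_0))$ since $\textit{Tile}_{t_0}(c_0, s^{y+1}(c_0)) \notin I$, so only Rule~\eqref{rule:below-prop} can, producing an infinite strictly descending chain $\varphi(\textit{Below}_{t_0}(s^K(c_0))) > \varphi(\textit{Below}_{t_0}(s^{K+1}(c_0))) > \cdots$ in $\Naturals$, contradicting well-foundedness.

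The main obstacle is the interplay between the ASP ordering-based justification and the infinite structure of the intended answer set. In the forward direction, the ordering on $\textit{Below}_{t_0}$ atoms must remain a function into $\Naturals$ despite the gap $k(y) - y$ being unbounded across $y$; this is acceptable because $\varphi$ is only required to be \emph{pointwise} in $\Naturals$, not bounded. In the backward direction, well-foundedness of $\varphi$-chains is precisely the mechanism that upgrades the pointwise justification of each $\textit{Below}_{t_0}(s^y(c_0))$ into the global statement that infinitely many $t_0$-tiles occur in the first column, via the K\"onig-style contradiction above.
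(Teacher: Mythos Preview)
Your approach matches the paper's: it too constructs the same interpretation $I$ in the forward direction and checks modelhood rule by rule, and in the backward direction it reads off $f$ from the unique $\textit{Tile}$-atom at each position and appeals to the justification structure of the $\textit{Below}_{t_0}$ atoms to obtain infinitely many occurrences of $t_0$ in the first column. The paper is in fact less explicit than you are --- it simply asserts that ``obtaining an appropriate ordering of atoms in $I$ is straightforward'' and that Rules~\eqref{rule:below}--\eqref{rule:under-t0} ``ensure'' the recurrence, whereas you spell out the ordering and the well-foundedness argument.

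There is one concrete slip in your ordering for the forward direction. You set $\varphi(\textit{Below}_{t_0}(s^y(c_0))) = M + k(y) - y$ for a base $M$ ``dominating all $\textit{Tile}$-atom ranks''. No such $M$ exists: with $\textit{Tile}$-ranks of the form $c + i + j$ these are unbounded. Concretely, the atom $\textit{Below}_{t_0}(s^{k(y)}(c_0))$ gets value $M$ (since $k(k(y)) = k(y)$), but its justifying body atom $\textit{Tile}_{t_0}(c_0, s^{k(y)+1}(c_0))$ has rank $c + k(y) + 1$, so you would need $M > c + k(y) + 1$ for every $y$, which fails as $k(y)$ is unbounded. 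The fix is immediate: take for instance $\varphi(\textit{Below}_{t_0}(s^y(c_0))) = c + 2 + 2k(y) - y$. Then at $y = k(y)$ the value is $c + k(y) + 2 > c + k(y) + 1$, and along a segment $y \leq z < k(y)$ (where $k(z) = k(y)$) the values still strictly decrease in $z$. Your observation that $\varphi$ need only be \emph{pointwise} in $\Naturals$, not globally bounded, is exactly right; you just need the formula to reflect it.
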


\newcommand{\Xsep}{0.56}

\newcommand{\Xa}{0}
\newcommand{\Xb}{1}
\newcommand{\Xc}{2}
\newcommand{\Xd}{3}
\newcommand{\Xe}{4}
\newcommand{\Xf}{5}
\newcommand{\Xg}{6}
\newcommand{\Xh}{7}
\renewcommand{\Xi}{8}

\newcommand{\Xj}{11}

\newcommand{\Ysep}{0.56}

\newcommand{\Ya}{0}
\newcommand{\Yb}{1}
\newcommand{\Yc}{2}
\newcommand{\Yd}{3}
\newcommand{\Ye}{4}
\newcommand{\Yf}{5}
\newcommand{\Yg}{6}
\newcommand{\Yh}{7}
\newcommand{\Yi}{8}

\definecolor{red}{HTML}{a93226}
\definecolor{blue}{HTML}{2471a3}

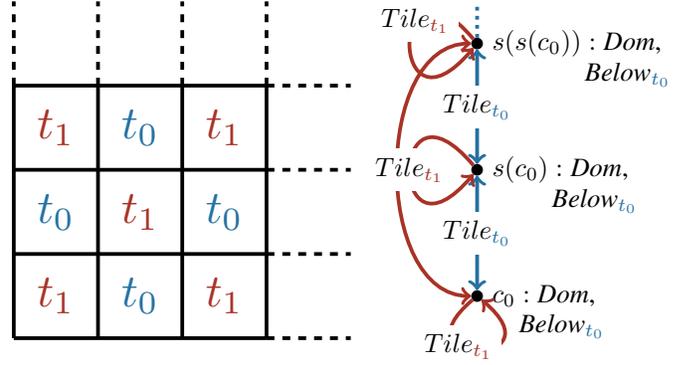
\begin{figure}~\\[-4.75em]
\begin{center}
\begin{tikzpicture}
\draw[black, line width=0.5mm] (\Xsep*\Xa, \Ysep*\Ya) -- (\Xsep*\Xg, \Ysep*\Ya);
\draw[dashed, line width=0.5mm] (\Xsep*\Xg, \Ysep*\Ya) -- (\Xsep*\Xi, \Ysep*\Ya);
\draw[black, line width=0.5mm] (\Xsep*\Xa, \Ysep*\Yc) -- (\Xsep*\Xg, \Ysep*\Yc);
\draw[dashed, line width=0.5mm] (\Xsep*\Xg, \Ysep*\Yc) -- (\Xsep*\Xi, \Ysep*\Yc);
\draw[black, line width=0.5mm] (\Xsep*\Xa, \Ysep*\Ye) -- (\Xsep*\Xg, \Ysep*\Ye);
\draw[dashed, line width=0.5mm] (\Xsep*\Xg, \Ysep*\Ye) -- (\Xsep*\Xi, \Ysep*\Ye);
\draw[black, line width=0.5mm] (\Xsep*\Xa, \Ysep*\Yg) -- (\Xsep*\Xg, \Ysep*\Yg);
\draw[dashed, line width=0.5mm] (\Xsep*\Xg, \Ysep*\Yg) -- (\Xsep*\Xi, \Ysep*\Yg);

\draw[black, line width=0.5mm] (\Xsep*\Xa, \Ysep*\Ya) -- (\Xsep*\Xa, \Ysep*\Yg);
\draw[dashed, line width=0.5mm] (\Xsep*\Xa, \Ysep*\Yg) -- (\Xsep*\Xa, \Ysep*\Yi);
\draw[black, line width=0.5mm] (\Xsep*\Xc, \Ysep*\Ya) -- (\Xsep*\Xc, \Ysep*\Yg);
\draw[dashed, line width=0.5mm] (\Xsep*\Xc, \Ysep*\Yg) -- (\Xsep*\Xc, \Ysep*\Yi);
\draw[black, line width=0.5mm] (\Xsep*\Xe, \Ysep*\Ya) -- (\Xsep*\Xe, \Ysep*\Yg);
\draw[dashed, line width=0.5mm] (\Xsep*\Xe, \Ysep*\Yg) -- (\Xsep*\Xe, \Ysep*\Yi);
\draw[black, line width=0.5mm] (\Xsep*\Xg, \Ysep*\Ya) -- (\Xsep*\Xg, \Ysep*\Yg);
\draw[dashed, line width=0.5mm] (\Xsep*\Xg, \Ysep*\Yg) -- (\Xsep*\Xg, \Ysep*\Yi);

\node[color=red] at (\Xsep*\Xb, \Ysep*\Yb) {\scalebox{1.75}{$t_1$}};
\node[color=blue] at (\Xsep*\Xb, \Ysep*\Yd) {\scalebox{1.75}{$t_0$}};
\node[color=red] at (\Xsep*\Xb, \Ysep*\Yf) {\scalebox{1.75}{$t_1$}};
\node[color=blue] at (\Xsep*\Xd, \Ysep*\Yb) {\scalebox{1.75}{$t_0$}};
\node[color=red] at (\Xsep*\Xd, \Ysep*\Yd) {\scalebox{1.75}{$t_1$}};
\node[color=blue] at (\Xsep*\Xd, \Ysep*\Yf) {\scalebox{1.75}{$t_0$}};
\node[color=red] at (\Xsep*\Xf, \Ysep*\Yb) {\scalebox{1.75}{$t_1$}};
\node[color=blue] at (\Xsep*\Xf, \Ysep*\Yd) {\scalebox{1.75}{$t_0$}};
\node[color=red] at (\Xsep*\Xf, \Ysep*\Yf) {\scalebox{1.75}{$t_1$}};

\node[label=right:{$c_0 : \textit{Dom}$,}, circle, draw, fill=black, inner sep=0pt, minimum width=4pt] (c0) at (\Xsep*\Xj,\Ysep*\Yb) {};
\node[label=right:{$\phantom{c_0 }\textit{Below}_{\color{blue}t_0}$}] (c0l) at (\Xsep*\Xj,\Ysep*\Yb-0.4) {};
\node[label=right:{$s(c_0) : \textit{Dom}$,}, circle, draw, fill=black, inner sep=0pt, minimum width=4pt] (sc0) at (\Xsep*\Xj,\Ysep*\Ye) {};
\node[label=right:{$\phantom{s(c_0) }\textit{Below}_{\color{blue}t_0}$}] (c0l) at (\Xsep*\Xj,\Ysep*\Ye-0.4) {};
\node[label=right:{$s(s(c_0)) : \textit{Dom}$,}, circle, draw, fill=black, inner sep=0pt, minimum width=4pt] (ssc0) at (\Xsep*\Xj,\Ysep*\Yh) {};
\node[label=right:{$\phantom{s(s(c_0)) }\textit{Below}_{\color{blue}t_0}$}] (c0l) at (\Xsep*\Xj,\Ysep*\Yh-0.4) {};
\node[fill=white, inner sep=0pt, minimum width=4pt] (sssc0) at (\Xsep*\Xj,\Ysep*\Yi) {};

\path[<->,blue,line width=0.5mm] (c0) edge node [fill=white, pos=0.5] {$\color{black}Tile_{\color{blue}t_0}$} (sc0);
\path[<->,blue,line width=0.5mm] (sc0) edge node [fill=white, pos=0.5] {$\color{black}Tile_{\color{blue}t_0}$} (ssc0);
\path[->, red,line width=0.5mm] (c0) edge [in=320,out=220,loop, distance=40] node [fill=white, pos=0.35] {$\color{black}Tile_{\color{red}t_1}$} (c0);
\path[<->, red,line width=0.5mm] (c0) edge [bend left=90] (ssc0);
\path[->, red,line width=0.5mm] (sc0) edge [in=230,out=130,loop, distance=50] node [fill=white, pos=0.5] {$\color{black}Tile_{\color{red}t_1}$} (sc0);
\path[->, red,line width=0.5mm] (ssc0) edge [in=230,out=130,loop, distance=50] node [fill=white, pos=0.35] {$\color{black}Tile_{\color{red}t_1}$} (ssc0);

\path[dotted, blue,line width=0.5mm] (ssc0) edge (sssc0);

\end{tikzpicture}
\end{center}~\\[-3.5em]
\caption{A Solution of $\mathfrak{X}$ and the Corresponding Answer Set of $P_{\mathfrak{X}}$}
\label{figure:recurring-tiling}
\end{figure}

Lemma~\ref{lemma:hardness-reduction} holds since each answer set of $P_\mathfrak{T}$ faithfully encodes a recurring solution of a tiling system $\mathfrak{T}$.
We clarify this brief intuition with an example.
\begin{example}
\label{example:periodic-tiling}
The tiling system $\mathfrak{X} = \langle \{\color{blue}t_0\color{black}, \color{red}t_1\color{black}\}, \textit{HI}, \textit{VI}, \color{blue}t_0\color{black} \rangle$ where $\textit{HI} = \textit{VI} = \{\langle \color{blue}t_0\color{black}, \color{blue}t_0\color{black} \rangle, \langle \color{red}t_1\color{black}, \color{red}t_1\color{black} \rangle\}$ admits two recurring solutions.
The program $P_\mathfrak{X}$ admits two answer sets; each of them encodes a solution of $\mathfrak{X}$.
One of these solutions and the corresponding answer set are depicted in Figure~\ref{figure:recurring-tiling}.
Note how the tile $\color{blue}t_0\color{black}$ covers the position $\langle 0, 1\rangle$ of the positive quadrant; this is encoded by the atom $\textit{Tile}_{\color{blue}t_0\color{black}}(c_0, s(c_0))$ in the answer set.
\end{example}

% \label{rule:domain}
% \label{rule:tile-choice}
% \label{rule:h-incompatibility} 
% \label{rule:v-incompatibility} \\
% \label{rule:below} \\
% \label{rule:below-prop} \\
% \label{rule:under-t0}

Intuitively, Rule~\ref{rule:domain} in Definition~\ref{definition:tiling-reduction} ensures that the domain of every answer set is countably infinite to provide enough space for a possible recurring solution.
Rule~\ref{rule:tile-choice} ensures that every position in the positive quadrant is covered by exactly one tile.
Constraints~\ref{rule:h-incompatibility} and \ref{rule:v-incompatibility} are violated if the horizontal and vertical incompatibilities are not satisfied, respectively.
Formulas~\ref{rule:below}, \ref{rule:below-prop}, and \ref{rule:under-t0} ensure that every position in the left column is under a position covered with the special tile that appears infinitely often in a valid recurring solution.

\section{Frugal and Non-Proliferous Programs}\label{sec:termination}

In this section, we aim to develop a better understanding for why 
consistency has such a high level of undecidability.
One particularly hard (undecidable) case to check is the existentence of an infinite answer set,
so it is straightforward to restrict to ASP programs that only admit finite answer sets (they might not admit any or infinitely many of these).
Especially in cases, where we are not only interested in consistency but in enumerating all answer sets, it is also 
of interest that there is only finitely many of them.

\begin{definition}\label{def:frugalAndNonProliferous}
  A program is \emph{frugal} if it only admits finite answer sets; 
  it is \emph{non-proliferous} if it only admits finitely many finite answer sets (but arbitrarily many infinite ones).
\end{definition}

Not every frugal program is also non-proliferous.

\begin{example}
  The following ASP program admits infinitely many finite answer sets but no infinite one. 
  \begin{align*}
    \mathit{next}(Y, f(Y)) \leftarrow& \mathit{next}(X, Y), \neg \mathit{last}(Y).\\
    \mathit{last}(Y) \leftarrow& \mathit{next}(X, Y), \neg \mathit{next}(Y, f(Y)).\\
    \mathit{done} \leftarrow& \mathit{last}(Y). \quad \leftarrow \neg \mathit{done}. \quad \mathit{next}(c, d).
  \end{align*}

  Clearly, $\{ \mathit{next}(c, d), \mathit{last}(d), \mathit{done} \}$ is an answer set.
  Also, any finite chain of $\mathit{next}$ relations terminated by $\mathit{last}$ is an answer set. 
  However, an infinite $\mathit{next}$-chain is not an answer set 
  as it cannot contain any $\mathit{last}$ atom hence does not feature $\mathit{done}$ and therefore violates the constraint.
\end{example}

\subsection{Undecidability of these Notions}

Within the scope of this subsection, let $P$ be an arbitrary program.
Both of the above problems, i.e. $P$ being frugal or non-proliferous, are undecidable and not even semi-decidable. 
We start with the second problem since it is comparably ``easy''.

\begin{theorem}\label{trm:asp-finitely-many-finite-as-sigma-2-0}
  Deciding if $P$ is non-proliferous is $\Sigma_2^0$-complete.
\end{theorem}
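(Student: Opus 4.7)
The plan is to establish membership in $\Sigma_2^0$ and $\Sigma_2^0$-hardness separately. For membership, I would express non-proliferousness of $P$ as: there exist $n \in \Naturals$ and finite sets $A_1, \ldots, A_n$ of ground atoms such that each $A_i$ is an answer set of $P$ and, for every finite set $A$ of ground atoms, if $A$ is an answer set of $P$ then $A \in \{A_1, \ldots, A_n\}$. The key observation is that checking whether a given finite set $A$ is an answer set of $P$ is decidable: by safety, only those ground instances of rules whose positive body maps into $A$ need to be inspected (finitely many, since $A$ is finite), and provenness can be verified by enumerating the finitely many linear orderings on $A$. Encoding finite tuples of finite atom sets as naturals, this reformulation has the shape $\exists\,\forall\,R$ with decidable $R$, and thus lies in $\Sigma_2^0$.

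For hardness, I would reduce from the classical $\Sigma_2^0$-complete problem $\mathrm{FIN} = \{M : M \text{ halts on only finitely many inputs}\}$. Given a Turing machine $M$, construct a program $P_M$ using a Peano-style encoding (constant $0$, unary $s$) combining two components. A \emph{guess-and-stop} component, built from the fact $\mathit{Guess}(0)$ together with $\mathit{Guess}(s(X)) \leftarrow \mathit{Guess}(X), \neg \mathit{Stop}(X)$ and $\mathit{Stop}(X) \leftarrow \mathit{Guess}(X), \neg \mathit{Guess}(s(X))$, produces exactly one answer set per choice of $n \in \Naturals$ (featuring $\mathit{Stop}(s^n(0))$) plus one infinite non-stopping answer set. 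A \emph{simulation} component, triggered by $\mathit{Stop}(n)$, produces atoms $\mathit{Config}(n, t, \ldots)$ step by step according to $M$'s transitions on input $n$, generating finitely many atoms iff $M$ halts on $n$. The finite answer sets of $P_M$ then correspond bijectively to halting inputs of $M$, so $P_M$ is non-proliferous iff $M$'s halting domain is finite.

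The main obstacle will be verifying that the two components interact cleanly: no spurious finite answer set should arise for a non-halting input (which is ensured by designing the simulation as a strictly positive chain of derivations from $\mathit{Stop}(n)$, so its only source of nontermination is $M$ itself), and the non-stopping ``guess'' answer set must be genuinely infinite (which holds since it contains $\mathit{Guess}(s^k(0))$ for all $k \geq 0$). The guess-and-stop mechanism is structurally the same as that of Example~\ref{exp:artificial}, which supports analysing its answer sets by analogous reasoning.
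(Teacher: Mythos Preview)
Your proposal is correct and follows essentially the same route as the paper: membership rests on the decidability of checking whether a given finite set is an answer set (the paper packages this as a $\Sigma_1^0$ oracle for ``has at least $n$ finite answer sets'' rather than your direct $\exists\forall$ formula), and hardness goes via reduction from $\mathrm{FIN}$ using a guess-the-input-then-simulate encoding. One difference worth flagging: the paper's encoding adds constraints $\leftarrow \neg\mathit{FiniteInput}$ and $\leftarrow \neg\mathit{Halt}$, so that non-halting inputs produce \emph{no} answer set rather than an infinite one; this makes $P_M$ frugal and yields the slightly stronger statement that hardness persists when restricted to frugal programs, a refinement your construction (which deliberately allows infinite answer sets for non-halting inputs and for the non-stopping guess branch) does not deliver.
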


The previous result follows directly from Lemmas~\ref{lem:asp-finitely-many-finite-as-sigma-2-0-mem} and \ref{lem:asp-finitely-many-finite-as-sigma-2-0-hard}.

\begin{lemma}\label{lem:asp-finitely-many-finite-as-sigma-2-0-mem}
  Deciding if $P$ is non-proliferous is in $\Sigma_2^0$.
\end{lemma}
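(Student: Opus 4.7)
The plan is to present non-proliferousness in the form $\exists n \in \Naturals\ \forall\, \vec{I}\ R(P,n,\vec{I})$, with $R$ a decidable predicate and $\vec{I} = (I_1,\ldots,I_{n+1})$ ranging over tuples of pairwise distinct finite interpretations. The driving equivalence is: $P$ admits only finitely many finite answer sets iff there exists a bound $n$ such that no $n{+}1$ pairwise distinct finite interpretations are simultaneously answer sets of $P$. Since finite interpretations (i.e.\ finite subsets of the Herbrand base of $P$) can be effectively enumerated, the universal quantifier ranges over a computable domain of finite objects, and the outer existential over $\Naturals$ is unrestricted; together this yields a $\Sigma_2^0$-shape, provided $R$ itself is decidable.

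The key step, and the only subtle one, is to argue that for a \emph{finite} interpretation $I$ the predicate ``$I$ is an answer set of $P$'' is decidable, even though $\Ground(P)$ may be infinite because $P$ contains function symbols. I would exploit the safety assumption: every variable of a rule of $P$ occurs in some positive body atom. If $I$ is finite, the set $T_I$ of terms occurring in $I$ is finite. Any ground instance $\Rule\Subs$ of a rule $\Rule\in P$ whose positive body lies in $I$ must map every variable of $\Rule$ into $T_I$, so there are only finitely many such instances, and they can be effectively enumerated. Model-checking $I$ reduces to verifying that none of these finitely many ground rules is violated. Justification then reduces to searching over the finitely many orderings of $I$ and, for each atom $a\in I$, the finitely many candidate witnessing rules whose variables are mapped into $T_I$; both searches are decidable.

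With finite-answer-set checking decidable, define $R(P,n,(I_1,\ldots,I_{n+1}))$ to hold iff the $I_i$ are pairwise distinct and at least one of them is not an answer set of $P$. This predicate is computable uniformly in its arguments, so the characterisation $\exists n\ \forall (I_1,\ldots,I_{n+1})\ R(P,n,(I_1,\ldots,I_{n+1}))$ places non-proliferousness in $\Sigma_2^0$. I expect the main obstacle to be not the logical form of the reduction, which is straightforward, but the careful bookkeeping needed to show that model and justification checking on a finite $I$ really is decidable in the presence of function symbols. In particular, one must be precise about the role of safety in bounding the set of ground rules that can possibly be violated by, or serve as justifications within, the finite interpretation $I$.
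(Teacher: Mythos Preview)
Your argument is correct and rests on the same core observation as the paper: checking whether a given finite interpretation is an answer set of $P$ is decidable, and non-proliferousness amounts to the existence of a finite bound on the number of finite answer sets. The only difference is packaging: the paper uses the Turing-jump characterisation of $\Sigma_2^0$ (first showing that ``$P$ has at least $n$ finite answer sets'' is in $\Sigma_1^0$ by enumerate-and-check, then semi-deciding non-proliferousness relative to that oracle), whereas you use the equivalent quantifier-alternation characterisation $\exists n\,\forall\vec{I}\,R$ directly. Your version has the merit of spelling out \emph{why} answer-set checking on a finite $I$ is decidable despite an infinite $\Ground(P)$---via safety and the finiteness of subterms in $I$---which the paper's proof takes for granted; conversely, the paper's oracle formulation is shorter once that fact is assumed.
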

\begin{proof}
  We show first ($\dagger$) that one can semi-decide for a given $n$ if a program has at least $n$ finite answer sets.
  This is possible by enumerating and checking all answer set candidates. Once the $n$-th answer set has been found, the procedure halts and accepts (otherwise it may run forever).

  The decision problem from the lemma can now be semi-decided with an oracle for ($\dagger$) as follows.
  Enumerate all naturals $n$ and check for each, if $P$ admits at least $n$ finite answer sets (with the oracle). 
  If yes, continue with $n+1$; 
  otherwise accept (since only finitely many finite answer sets exist).
\end{proof}

\begin{lemma}\label{lem:asp-finitely-many-finite-as-sigma-2-0-hard}
  Deciding if $P$ is non-proliferous is $\Sigma_2^0$-hard.
\end{lemma}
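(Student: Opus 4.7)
The plan is to reduce from the $\Sigma_2^0$-complete problem $\mathrm{FIN} = \{\langle M\rangle : M \text{ halts on only finitely many inputs}\}$. Given a deterministic Turing machine $M$, I will construct an ASP program $P_M$ whose finite answer sets are in bijection with the inputs on which $M$ halts; consequently $P_M$ is non-proliferous iff $M \in \mathrm{FIN}$, which yields $\Sigma_2^0$-hardness.

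The construction has two parts. First, a \emph{guessing gadget}, in the spirit of the $\mathit{next}/\mathit{last}$ program in Section~\ref{sec:termination}, non-deterministically produces a chain of terms $c, s(c), \dots, s^n(c)$ of arbitrary finite length, yielding exactly one candidate per choice of $n \geq 0$; the end of the chain serves as the simulated input to $M$. Second, a \emph{simulation gadget} uses a standard Turing-machine-in-ASP encoding: predicates $\mathit{state}/2$, $\mathit{head}/2$, and $\mathit{cell}/3$ index configurations by a discrete-time argument built via the unary successor symbol $s$, with transition rules of the form $\mathit{state}(s(T), q') \leftarrow \mathit{state}(T, q), \mathit{head}(T, P), \mathit{cell}(T, P, a), \neg\mathit{halted}(T)$ (one schema per transition of $M$), and similar rules for $\mathit{head}$ and $\mathit{cell}$. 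A rule derives $\mathit{halted}$ when a halting state is reached and in turn derives $\mathit{accept}$, while a constraint $\leftarrow \neg\mathit{accept}$ discards candidates that never halt.

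If $M$ halts on $n$ in $k$ steps, the resulting answer set contains only the guessing chain (finite) together with the simulation trace of length $k$ (finite), hence is a finite answer set; this answer set is unique for the given $n$ because the simulation is deterministic. If $M$ diverges on $n$, then satisfying the simulation rules forces an infinite trace of $\mathit{state}/\mathit{head}/\mathit{cell}$ atoms, so no finite answer set arises for that choice of $n$. Therefore $|\{\text{finite answer sets of } P_M\}| = |\{n : M \text{ halts on } n\}|$, completing the reduction.

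The main technical obstacle is twofold: (i) designing the guessing gadget so that exactly one stable-model candidate arises per $n$ (neither duplicates nor omissions), and (ii) verifying that the simulation is faithful under the answer-set semantics, so that every derived ground atom admits a proof in the sense of the preliminaries. Both are handled by using the strict time order on simulation atoms: the ordering $\varphi$ that maps each derived atom to its birth step is well-defined for any finite halting trace, and the chain gadget admits an analogous ordering along $c, s(c), \dots, s^n(c)$, jointly witnessing justification of the intended finite answer set.
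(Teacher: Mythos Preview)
Your proposal is correct and follows essentially the same route as the paper: both reduce from the $\Sigma_2^0$-complete problem of deciding whether a Turing machine halts on only finitely many inputs, by building an ASP program that nondeterministically generates an input and then deterministically simulates the machine, so that finite answer sets correspond one-to-one with halting inputs. The only cosmetic differences are that the paper first rederives the $\Sigma_2^0$-hardness of $\mathrm{FIN}$ from the universal halting problem (whereas you cite it directly), and that the paper's input-generation gadget produces arbitrary binary words rather than unary naturals; neither affects the argument.
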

\begin{proof}
  Consider the universal halting problem, which is $\Pi_2^0$-hard, i.e. the check if a Turing machine halts on all inputs. 
  We construct $M'$ for a given TM $M$ that on input $n$ runs $M$ on all inputs of length at most $n$.
  We have that $M'$ terminates on infinitely many inputs if and only if $M$ halts on all inputs. 
  Hence, deciding if a TM halts on infinitely many inputs is $\Pi_2^0$-hard. 
  Therefore, the complement, i.e. deciding if a TM halts on only finitely many inputs, is $\Sigma_2^0$-hard. 
  We generate all (finite) inputs with an ASP program and ensure that the program has a finite answer set for a 
  generated input iff the TM halts on that input.\footnote{We show the machine simulation in the technical appendix.} %\ref{sec:appendix-termination}
  Therefore, deciding if an ASP program only admits finitely many finite answer sets is~$\Sigma_2^0$-hard.
\end{proof}

The Turing machine simulation utilizes a frugal program.
Therefore, checking if a program is non-proliferous remains $\Sigma_2^0$-hard for frugal programs.
Deciding if an ASP program is frugal on the other hand is way beyond $\Sigma_2^0$ and not even in the arithmetical hierarchy (just as checking consistency).

\begin{theorem}\label{trm:asp-only-finite-as-pi-1-1}
  Deciding if $P$ is frugal is $\Pi_1^1$-complete.
\end{theorem}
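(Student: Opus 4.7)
The plan is to prove both directions separately, reusing machinery from Section~\ref{sec:consistency}.

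For the $\Pi_1^1$ upper bound, I would show that the complementary statement ``$P$ is not frugal'' is in $\Sigma_1^1$. ``Not frugal'' means that some infinite set of atoms $A$ is an answer set of $P$. After fixing a standard Gödel encoding of ground atoms as natural numbers, both $A$ and an ordering $\varphi : A \to \Naturals$ (witnessing that every atom in $A$ is proven, per the preliminaries) can be encoded as subsets of $\Naturals$. Hence the statement has the shape ``$\exists A \,\exists \varphi\, \Phi(A,\varphi)$'' where $\Phi$ expresses, by quantifying only over naturals (rules of $\Ground(P)$, atoms, substitutions, etc.), the three arithmetical conditions: (i) $A$ is infinite, i.e.\ $\forall n\, \exists a \in A\colon a>n$; (ii) $A$ is a model of $\Ground(P)$; and (iii) every $a \in A$ has a justifying rule $r$ with $\varphi(b) < \varphi(a)$ for $b \in B_r^+$. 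This is a $\Sigma_1^1$ sentence, so frugality is $\Pi_1^1$.

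For the $\Pi_1^1$ lower bound, I would reuse the reduction $P_\mathfrak{T}$ from Definition~\ref{definition:tiling-reduction}, observing that it is already a reduction for frugality provided we invert the direction. The key observation is that \emph{every} answer set of $P_\mathfrak{T}$ is infinite: the fact $\textit{Dom}(c_0)$ together with Rule~\eqref{rule:domain} forces $\textit{Dom}(s^k(c_0))$ for all $k \geq 0$ in any model, and these atoms are pairwise distinct. Consequently, $P_\mathfrak{T}$ has no finite answer set, so $P_\mathfrak{T}$ is frugal iff $P_\mathfrak{T}$ admits no answer set at all. Combined with Lemma~\ref{lemma:hardness-reduction}, this yields
\[
  P_\mathfrak{T} \text{ is frugal} \iff \mathfrak{T} \text{ admits no recurring solution.}
\]
Since non-existence of a recurring solution is $\Pi_1^1$-hard by (the complement of) Proposition~\ref{proposition:recurring-tiling}, frugality is $\Pi_1^1$-hard.

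The main obstacle is really only the (easy) verification that the reduction is faithful in the stronger sense I need, namely that \emph{no} answer set of $P_\mathfrak{T}$ is finite; this is a straightforward induction using Rule~\eqref{rule:domain}. Everything else is bookkeeping: the upper-bound argument is a direct inspection of the quantifier complexity of the answer-set condition, and the hardness argument piggybacks on the already-established Lemma~\ref{lemma:hardness-reduction} without requiring any change to the program $P_\mathfrak{T}$.
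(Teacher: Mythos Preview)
Your hardness argument is essentially identical to the paper's: both reuse the tiling program $P_\mathfrak{T}$ from Definition~\ref{definition:tiling-reduction} and observe that $P_\mathfrak{T}$ either has an infinite answer set (when $\mathfrak{T}$ has a recurring solution) or no answer set at all, so that frugality of $P_\mathfrak{T}$ coincides with \emph{in}consistency and hence with non-existence of a recurring solution.

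Your membership argument is correct but takes a different route. You show ``not frugal'' is $\Sigma_1^1$ by directly exhibiting a second-order formula $\exists A\,\exists\varphi\,\Phi(A,\varphi)$ with arithmetical matrix, appealing to the definability characterisation of the analytical hierarchy. The paper instead stays within its declared methodology of working purely via reductions (see the closing sentence of the preliminaries on the arithmetical/analytical hierarchy): it modifies the machine $\mathcal{M}_P$ of Definition~\ref{definition:reduction-infinitely-visiting} to halt rather than loop when a finite answer set is found, and argues that $P$ has an infinite answer set iff this modified machine $\mathcal{M}'_P$ admits a run visiting the start state infinitely often, thereby reducing to (the complement of) Proposition~\ref{proposition:infinitely-visiting}. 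Your approach is more self-contained and avoids revisiting the machine construction; the paper's approach has the virtue of reusing the Section~\ref{sec:consistency} machinery verbatim and keeping all upper-bound proofs uniform in style. Both are sound; just be aware that, strictly within the paper's framework, one would still need to invoke the equivalence between $\Sigma_1^1$-definability and the reduction-based notion the paper adopts.
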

\begin{proof}[Proof Sketch.]
  For membership, we adjust the machine from Definition~\ref{definition:reduction-infinitely-visiting} 
  to halt instead of loop when it encounters a finite answer set. 
  Hardness follows from the same reduction as Lemma~\ref{lemma:hardness-reduction}. 
  To see that this holds, note that $P_\mathfrak{T}$ 
  either has an infinte answer set (i.e. is not frugal) if $\mathfrak{T}$ has a solution
  or has no answer set at all (i.e. is frugal) if $\mathfrak{T}$ has no solution.
\end{proof}

\subsection{Consistency becomes (only) Semi-Decidable}

If an ASP program is frugal, consistency is semi-decidable. 

\begin{theorem}\label{trm:consistencyForFrugalSemiDecidable}
  Consistency for frugal programs is in $\Sigma_1^0$.
\end{theorem}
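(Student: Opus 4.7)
The plan is to give a semi-decision procedure that dovetails through all finite candidate interpretations and, for each, effectively verifies whether it is an answer set of $P$. Since $P$ is frugal, any answer set it admits is finite, so if $P$ is consistent the enumeration will eventually encounter an answer set and the algorithm halts accepting; otherwise it simply never halts. Enumerating finite sets of ground atoms built from the predicates of $P$ and the countable Herbrand universe $\HU(P)$ is clearly effective, so the only real task is to show that checking a single finite candidate is decidable.

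The core observation is that, given a finite interpretation $I$, deciding whether $I$ is an answer set of $P$ is decidable, even though $\Ground(P)$ may be infinite. Here safety is essential: every variable of a rule $\rho \in P$ occurs in its positive body, so a ground instance $\rho\sigma$ fails to be satisfied by $I$ only if $B^+_{\rho\sigma} \subseteq I$, which forces $\sigma$ to map each variable of $\rho$ into the finite set of ground terms appearing as subterms of atoms in $I$. Hence the set $\textsf{Active}_I(P)$ of ground instances violated by $I$ is finite and computable, and checking $\textsf{Active}_I(P) = \emptyset$ decides the model condition.

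It remains to verify that $I$ is proven. Up to the equivalence that preserves strict inequality of values, there are only finitely many orderings $\varphi$ over the finite set $I$, and for each one the three justification conditions need only be checked against rule instances whose positive body lies in $I$; by the same safety argument, there are only finitely many such candidate rule instances and the check is routine. Combining the two checks gives a decision procedure for ``$I$ is an answer set of $P$''; dovetailing over all finite $I$ then semi-decides consistency. The main subtlety is not the enumeration itself but the observation that safety rescues effective model checking against an infinite grounding once the candidate interpretation is fixed to be finite; without frugality, an answer set might exist only in an infinitary limit and no finite candidate could ever witness it, which is consistent with the $\Sigma_1^1$-hardness established in Theorem~\ref{theorem:asp-consistency}.
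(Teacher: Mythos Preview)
Your proof is correct and takes essentially the same approach as the paper: enumerate finite candidate interpretations and check each one, relying on frugality to guarantee a finite witness if one exists. The paper's own proof is a two-line sketch; you additionally spell out why checking a single finite candidate against an infinite grounding is decidable via safety, a fact the paper records separately just before Definition~\ref{definition:reduction-infinitely-visiting}.
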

\begin{proof}
  Enumerate all answer set candidates and check if they are answer sets. If there is an answer set, then there must be a finite one so the procedure terminates in this case. 
\end{proof}

Somewhat surprisingly, 
even for frugal and non-proliferous programs
%i.e. there are only finitely many answer sets, which are all finite, 
consistency remains undecidable.
The issue is that the maximum answer set size is still unknown.
%so we do not know when to terminate the (semi-decision) procedure.

\begin{theorem}\label{trm:consistencyFrugalAndNonProlifStillUndefidable}
  Consistency for frugal and non-proliferous programs is $\Sigma_1^0$-hard.
\end{theorem}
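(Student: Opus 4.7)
The plan is to reduce from the halting problem for deterministic Turing machines on the empty input, a canonical $\Sigma_1^0$-complete problem. Given such a TM $M$, I construct an ASP program $P_M$ that simulates $M$ and that admits an answer set iff $M$ halts; at the same time I make sure that $P_M$ is always both frugal and non-proliferous, regardless of whether $M$ halts.

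First, I would encode the computation of $M$ using a unary function symbol $s$ to index time steps and predicates $\mathit{state}$, $\mathit{head}$, and $\mathit{cell}$ for the control state, head position, and tape contents at each step; the initial configuration is given by facts. One propagation rule per transition derives the unique next configuration from the current one, and an inertia rule copies unchanged cells forward. Each such rule carries a negative guard $\neg\mathit{state}(q_h, T)$ that blocks further propagation once a halting state $q_h$ is reached. Finally, I add
\[
  \mathit{done} \leftarrow \mathit{state}(q_h, T) \quad\text{and}\quad {} \leftarrow \neg\mathit{done}.
\]

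Next, I would verify the three required properties. Correctness: foundedness forces every answer set to materialise an actual initial segment of the unique run of $M$, and the constraint is satisfied iff some halting state is reached; hence $P_M$ is consistent iff $M$ halts. Non-proliferousness: determinism of $M$ together with the absence of nondeterministic choice in the rules yields at most one founded model and thus at most one finite answer set. Frugality: any answer set must contain $\mathit{done}$, which by foundedness requires $\mathit{state}(q_h, s^k(0))$ for some finite $k$; the guard $\neg\mathit{state}(q_h, T)$ then blocks propagation at step $k$, and no configuration atoms are justified at times beyond $s^k(0)$, so the answer set has size linear in $k$ and is in particular finite.

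The main obstacle I anticipate is ruling out infinite answer sets when $M$ does not halt. If the guard depended on $\mathit{done}$ itself, the Gelfond--Lifschitz reduct would delete all propagation rules whenever $\mathit{done}$ is in the candidate model, so no configuration atoms would be derivable and $\mathit{done}$ would lose its justification; the construction would collapse. Using $\neg\mathit{state}(q_h, T)$ instead, whose truth value at each step is fixed by strictly earlier propagation, keeps the construction properly stratified with respect to the simulated time line. Together with the $\leftarrow\neg\mathit{done}$ constraint, a non-halting run then produces no answer set at all rather than an infinite one, so $P_M$ is always frugal and non-proliferous while its consistency faithfully encodes halting of $M$, yielding the desired $\Sigma_1^0$-hardness.
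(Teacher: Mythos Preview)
Your proposal is correct and follows essentially the same approach as the paper: reduce from the halting problem via a deterministic Turing-machine simulation, enforce halting with a constraint $\leftarrow \neg\mathit{done}$, and observe that the resulting program has a single finite answer set if the machine halts and none otherwise, hence is frugal and non-proliferous in either case. The only cosmetic difference is that the paper stops propagation with a positive guard (a $\mathit{Step}$ predicate derived only from non-halting states) rather than your negative guard $\neg\mathit{state}(q_h,T)$; your discussion of why guarding on $\neg\mathit{done}$ would fail is a nice addition not present in the paper.
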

\begin{proof}
  We reduce from the halting problem with part of the program used for the TM simulation in Lemma~\ref{lem:asp-finitely-many-finite-as-sigma-2-0-hard}.
  We omit the part that generates all possible inputs.
  Instead, we encode the input word with ground atoms directly.
  The program admits a (single) finite answer set if the machine halts on its input.
  Otherwise, it does not admit any answer set.
  In any case, the program is both frugal and non-proliferous.
\end{proof}

The programs in the introduction have a finite bound on the size of their answer sets; they are frugal and non-proliferous.

\section{Improved Reasoning Procedures}
\label{sec:reasoning-procedures}

In this section, we describe an approach for the 
computation of answer sets that builds upon a basic procedure 
for consistency checking.
For frugal programs, this is a semi-decision procedure.
However, unsatisfiable programs are often not detected as such.
Therefore, we improve the procedure by ignoring \emph{forbidden} atoms
that may never occur in any answer set.
While it is undecidable to check if an atom is forbidden, 
we give a proof-of-concept algorithm for a sufficient condition.
For some simplified but unsatisfiable versions of Examples~\ref{exp:artificial} and \ref{exp:wolf-goat-cabbage-intro},
we argue that the sufficient condition is powerful enough to detect the essential forbidden atoms.
This makes the enhanced semi-decision procedure detect them as unsatisfiable.
Later on, we also propose a procedure based on the forbidden atoms idea
to produce valid groundings that are finite more often compared to tranditional approaches.

% In particular for Examples~\ref{exp:artificial} and \ref{exp:wolf-goat-cabbage-intro}.

\subsection{Limits of Semi-Decision}

Even when $\Ground(\prog)$ is infinite, it is arguably not hard to come up with a semi-decision procedure for consistency 
when only considering frugal ASP programs $\prog$. 
This is the same as asking if an arbitrary program has a finite answer set.
We have shown semi-decidability in Theorem~\ref{trm:consistencyForFrugalSemiDecidable} 
and we can also achieve this by modifying $\mathcal{M}_P$ from Definition~\ref{definition:reduction-infinitely-visiting} 
such that it accepts when it encounters a finite answer set instead of entering an infinite loop.
While this machine resembles a lazy-grounding idea,
we may also describe a semi-decision procedure that incrementally enlarges a ground program. 
% The following idea already incorporates a straightforward improvement of rejecting (some)
% programs that are unsatisfiable under propositional semantics, thereby 
% going beyond a plain semi-decision procedure towards 
% covering more negative cases.

\begin{definition}
  Consider the procedure $\textsf{IsConsistent}(\cdot)$ that takes a program $P$ as input where $P_g = \textsf{Ground}(P)$:
\begin{enumerate}
\item Initialize $i := 1$ and $A_0 := \emptyset$.
\item Set $A_{i} := A_{i-1} \cup \bigcup \{ H_r \mid r \in P_g; B_r^+ \subseteq A_{i-1} \}$.\label{item:ground-step}
\item \emph{Reject} if $A_i = A_{i-1}$.
\item Set $P_i := \{ r \in P_g \mid B_r^+ \subseteq A_i \}$.
% \item \emph{Reject} if $P_i$ is not (propositionally) satisfiable.
\item \emph{Accept} if $P_i$ has an answer set $I$ with $\textsf{Active}_I(P) = \emptyset$.
\item Set $i := i + 1$ and go to Step~2.
\end{enumerate}
\end{definition}

\begin{proposition}\label{prop:isConsistentCorrect}
  Given some program $P$, the procedure $\textsf{IsConsistent}(P)$ accepts (and halts) if and only if $P$ has a finite answer set.
  % Additionally, if $\textsf{Ground}(P)$ is propositionally unsatisfiable, then $\textsf{IsConsistent}(P)$ rejects (and halts).
\end{proposition}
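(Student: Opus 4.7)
The plan is to prove both directions of the biconditional separately, after establishing some basic properties of the sequences $(A_i)$ and $(P_i)$. By safety of $P$, every variable in a non-ground rule occurs in its positive body, so once $A_{i-1}$ is finite only finitely many ground instances of rules of $P$ have their positive body contained in $A_{i-1}$, whence $A_i$ is also finite. Starting from $A_0 = \emptyset$, induction yields finite $A_i$ and finite $P_i$ at every step, so Step~5 is decidable (classical ground ASP consistency plus the check that the finite set $\textsf{Active}_I(P)$ is empty). Moreover $(A_i)$ is monotone increasing, and once $A_i = A_{i-1}$ the sequence stabilises for all subsequent iterations.

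For the forward direction, assume $P$ has a finite answer set $I$ and let $\varphi\colon I \to \{1, \ldots, |I|\}$ witness that $I$ is proven, with $a_k$ denoting the atom satisfying $\varphi(a_k) = k$. For each $k$, fix a justifying rule $r_{a_k} \in \textsf{Ground}(P)$; by provenness $B_{r_{a_k}}^+ \subseteq \{a_1, \ldots, a_{k-1}\}$. A straightforward induction on $k$ gives $\{a_1, \ldots, a_k\} \subseteq A_k$: the induction hypothesis yields $B_{r_{a_k}}^+ \subseteq A_{k-1}$, and Step~2 therefore places $a_k$ in $A_k$. Consequently $I \subseteq A_{|I|}$ and every $r_{a_k}$ belongs to $P_{|I|}$, so $I$ is proven in $P_{|I|}$ via $\varphi$, is a model of $P_{|I|}$ (since it models the larger $\textsf{Ground}(P)$), and satisfies $\textsf{Active}_I(P) = \emptyset$. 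Thus $I$ witnesses acceptance in Step~5 at iteration $|I|$.

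For the reverse direction, suppose the procedure accepts at iteration $i$ with witness $I$. Since $P_i$ is finite, so is $I$, because every atom of $I$ must appear in some head of $P_i$. The hypothesis $\textsf{Active}_I(P) = \emptyset$ says that $I$ satisfies every ground rule of $P$, so $I$ is a model of $\textsf{Ground}(P)$. Any ordering witnessing that $I$ is proven in $P_i$ also witnesses provenness in $\textsf{Ground}(P) \supseteq P_i$, since the same justifying rules remain available. Hence $I$ is a finite answer set of $P$.

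The main obstacle is ruling out premature rejection in the forward direction: if $A_j = A_{j-1}$ already at some minimal $j \leq |I|$, then Step~3 rejects at iteration $j$ and the acceptance at iteration $|I|$ predicted above never occurs. This is resolved via fixpoint persistence: $A_{j-1} = A_j = \cdots = A_{|I|}$, so $I \subseteq A_{j-1}$ and the same argument as above shows that $I$ already witnesses acceptance at iteration $j - 1$, before Step~3 can reject at iteration~$j$.
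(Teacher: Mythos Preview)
Your proof is correct and follows the same approach as the paper's: both directions transfer modelhood and provenness between $P_i$ and $\textsf{Ground}(P)$ via the inclusion $P_i \subseteq \textsf{Ground}(P)$, and both locate a suitable iteration using the provenness ordering of the finite answer set. You add detail the paper leaves implicit---finiteness of each $A_i$ from safety, and an explicit fixpoint-persistence argument to exclude premature rejection---but the underlying strategy is identical.
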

\begin{proof}
  If $P$ has a finite answer set $I$, then pick the smallest $i$ such that $A_{i} \supseteq I$. 
  % For every $1 \leq j \leq i$, $P_j$ is satisfiable by assigning ``true'' to exactly the ground atoms in $I$.
  Since every atom in $I$ is proven, the procedure does not reject up until reaching $i$. 
  Moreover, since $I$ is an answer set of $P$, it is an answer set of $P_i$; and $\textsf{Active}_I(P) = \emptyset$.
  Therefore, the procedure accepts in step $i$.

  If the procedure accepts, it does so for some $i$ and there is a (finite) answer set $I$ of $P_i$. Since $\textsf{Active}_I(P)$ is empty, all rules in $\textsf{Ground}(P)$ are satisfied by $I$. 
  Since $I$ is an answer set of $P_i$, all atoms in $I$ are proven in $P$. 
  Hence, $I$ is a finite answer set of $P$.
\end{proof}

Since $\textsf{IsConsistent}$ semi-decides consistency for frugal programs, it will necessarily yield a finite answer set for all examples from the introduction.
Still, inconsistent programs $\prog$ are rarely caught by $\textsf{IsConsistent}$, unless $\Ground(\prog)$ is finite. 
To illustrate this, we condense the encoding of Example~\ref{exp:wolf-goat-cabbage-intro} to a simple case of detecting redundancies. 

\begin{example}\label{exp:essential-wgc-different}
  % Consider the program $P'$ that results from $P$ in Example~\ref{exp:essential-wgc-same}
  % by replacing the last two rules with:
  % \begin{align*}
  %   &\textit{diff}(N, M) \leftarrow \textit{fct}(X, N), \textit{fct}(Y, M), \neg \textit{eq}(X, Y), \textit{lt}(N, M).\\
  %   &\textit{redundant} \leftarrow \textit{fct}(X, N), \textit{fct}(Y, M), \textit{lt}(N, M), \neg \textit{diff}(N, M).
  % \end{align*}

  Consider the following program $P$.
  \begin{align*}
    &\textit{fct}(a, 0). \qquad \textit{eq}(X, X) \leftarrow \textit{fct}(X, N). \qquad \leftarrow \textit{redundant}. \\
    &\textit{lt}(N, s(N)) \leftarrow \textit{fct}(X, s(N)).\\
    &\textit{lt}(N, N') \leftarrow \textit{lt}(N, M), \textit{lt}(M, N').\\
    &\textit{fct}(b, s(N)) \leftarrow \textit{fct}(a, N). \qquad \textit{fct}(a, s(N)) \leftarrow \textit{fct}(b, N).\\
    &\textit{diff}(N, M) \leftarrow \textit{fct}(X, N), \textit{fct}(Y, M), \neg \textit{eq}(X, Y), \textit{lt}(N, M).\\
    &\textit{redundant} \leftarrow \textit{fct}(X, N), \textit{fct}(Y, M), \textit{lt}(N, M), \neg \textit{diff}(N, M).
  \end{align*}

  Intuitively, a timeline is constructed that always flips fact (\textit{fct}) $a$ to $b$ and vice versa in each step. 
  We forbid redundancies, e.g. we do not want \textit{fct} $a$ in time steps say $0$ and $2$.
  This is impossible and we will always be forced to derive $\textit{redundant}$ at some point because some of the $\textit{diff}(N,M)$ atoms cannot be proven.
  However, the procedure $\textsf{IsConsistent}$ simply does not terminate here.
\end{example}
Note that we ran the example with Alpha, but encountered a stack overflow. This may indicate that Alpha introduces too many ground atoms.
Also, (i)DLV as well as gringo/clingo do not (seem to) terminate.
The \textsf{IsConsistent} check also runs into similar problems for a slight variation of Example~\ref{exp:artificial}.
\begin{example}\label{exp:artificial-no-answer-set}
  Consider the following extension of Example~\ref{exp:artificial}.
  \begin{align*}
    &\mathit{r}(a, b). \qquad \mathit{stop}(Y) \leftarrow \mathit{r}(X, Y). \qquad \leftarrow \mathit{r}(b, f(b)).\\
    &\mathit{r}(Y, f(Y)) \leftarrow \mathit{r}(X, Y), \neg \mathit{stop}(X).
  \end{align*}
  The program does not have any answer set.
  Furthermore, the $\textsf{IsConsistent}$ check does not terminate.  
  %because $\textsf{Ground}(P)$ is satisfiable under propositional semantics.
  % The minimal propositional model is $\{ \mathit{r}(a, b), \mathit{stop}(b), \mathit{stop}(a) \}$.
\end{example}
To our surprise, Alpha captures this as unsatisfiable.
Still, (i)DLV and gringo/clingo do not (seem to) terminate. 

\subsection{Ignoring Forbidden Atoms}

We aim to extend $\textsf{IsConsistent}$ further to capture the previous examples as unsatisfiable.
We adjust the assignment of $A_i$ in Item~\ref{item:ground-step} as follows; keeping Proposition~\ref{prop:isConsistentCorrect} intact.
\begin{enumerate}
  \item[2.] Set $A_{i} := A_{i-1} \cup \{ a \mid a \text{ not \emph{forbidden} in } P \text{ and } \{a\} = H_r \text{ for some } r \in \textsf{Ground}(P) \text{ with } B_r^+ \subseteq A_{i-1} \}$.
\end{enumerate}

To keep the $A_{i}$ (and thus $P_{i}$) small, we only consider atoms that are not forbidden in $P$. 
Formally, an atom $a$ is \emph{forbidden} (in a program $P$) if $a$ does not occur in any answer set of $P$. 
Intuitively, a forbidden atom will necessarily lead to a contradiction or it will be impossible to show that it is proven.
Unfortunately, it is undecidable to check if an atom is forbidden in the formal sense, essentially because entailment of ground atoms over ASP programs is undecidable.

\begin{proposition}
  It is undecidable if an atom is forbidden.
\end{proposition}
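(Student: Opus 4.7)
The plan is to reduce from the inconsistency problem. By Theorem~\ref{theorem:asp-consistency}, consistency is $\Sigma_1^1$-complete, so both consistency and its complement are (highly) undecidable. Given an arbitrary program $P$, I would pick a fresh $0$-ary predicate $a$ that does not occur in $P$ and form $P' := P \cup \{a \leftarrow\}$. The core claim to establish is that $a$ is forbidden in $P'$ if and only if $P$ is inconsistent. Since the construction $P \mapsto (P', a)$ is computable, this implies that the forbidden-atom check is at least as hard as inconsistency and therefore undecidable.

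To prove the claim I would argue both directions. Suppose first that $S$ is an answer set of $P$ witnessed by an ordering $\varphi$. Then $S' := S \cup \{a\}$ is an answer set of $P'$: it satisfies every rule of $P'$ (it satisfies $P$ because $S$ does, and the fact $a \leftarrow$ because $a \in S'$); and with the extended ordering $\varphi'$ defined by $\varphi'(a) := 0$ and $\varphi'(b) := \varphi(b) + 1$ for $b \in S$, each $b \in S$ remains proven via the same witnessing rule of $P$ (which does not mention $a$), while $a$ is proven by the empty-body rule $a \leftarrow$ for which conditions (i)--(iii) are vacuous. Hence $a$ occurs in an answer set of $P'$, so it is not forbidden. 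Conversely, suppose $S'$ is an answer set of $P'$. The fact rule forces $a \in S'$; setting $S := S' \setminus \{a\}$, the set $S$ is a model of $P$ because $a$ is syntactically absent from $P$, and each atom in $S$ is proven in $P$ by the same witnessing rule used in $P'$ (that rule cannot be $a \leftarrow$ and therefore belongs to $P$) under the restriction of $\varphi'$ to $S$. So $P$ is consistent.

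The only subtlety is verifying the ``proven'' clause across the two programs, and this is essentially mechanical because $a$ is disjoint from $P$: no rule of $P$ mentions $a$, and the only rule in $P' \setminus P$ has empty body and head $\{a\}$, so it interferes with no other justification. Hence both directions of the equivalence go through and undecidability follows at once from Theorem~\ref{theorem:asp-consistency}. I do not expect a real obstacle here; if anything, one could sharpen the claim to obtain $\Pi_1^1$-hardness of the forbidden check, since inconsistency is $\Pi_1^1$-hard, but the present statement only requires undecidability.
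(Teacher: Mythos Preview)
Your proof is correct and in fact yields a stronger lower bound than the paper's. The paper reduces from the complement of the halting problem via the Turing-machine simulation of Theorem~\ref{trm:consistencyFrugalAndNonProlifStillUndefidable}: the atom $\mathit{Halt}$ is forbidden iff the simulated machine does not halt, giving $\Pi_1^0$-hardness. Your reduction from inconsistency (adjoin a fresh propositional fact $a \leftarrow$ to an arbitrary program $P$ and observe that $a$ is forbidden in $P'$ iff $P$ is inconsistent) is both simpler and, as you observe at the end, delivers $\Pi_1^1$-hardness via Theorem~\ref{theorem:asp-consistency}. The one thing the paper's route buys that yours does not is that the program produced by the TM simulation is frugal and non-proliferous, so the paper's argument implicitly shows undecidability of the forbidden-atom check even for that restricted class; this is relevant to the surrounding Section~\ref{sec:reasoning-procedures}, though it is not part of the proposition as stated.
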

\begin{proof}
  We reuse the Turing machine simulation from Theorem~\ref{trm:consistencyFrugalAndNonProlifStillUndefidable}
  to show a reduction from the complement of the halting problem.
  If the machine halts, the simulation has a single answer set with the atom $\mathit{Halt}$. 
  Otherwise, the simulation does not admit an answer set. 
  Hence, the machine does not halt iff $\mathit{Halt}$ is forbidden.
\end{proof}

We introduce some auxiliary definitions with the aim of giving a (rather tight) sufficient condition for finding forbidden atoms.
Given a rule $r$ in $P$ and any two interpretations $L^+$ and $L^-$, we define the following. 

\begin{itemize}
  \item $r^+(L^+, L^-)$ as the minimal interpretation that contains the single atom in $H_r\sigma$ (unless $\vert H_r \vert = 0$) for every substitution $\sigma$ with $B_r^+\sigma \subseteq L^+$ and $B_r^-\sigma \subseteq L^-$; 
    and, if $\lvert B_r^- \rvert = 1$, also contains $B_r^-\sigma$ for every substitution $\sigma$ with $B_r^+\sigma \subseteq L^+$ and $H_r\sigma \subseteq L^-$.
  \item $r^-(L^+, L^-)$ is $\emptyset$ if $\lvert B_r^+ \rvert \neq 1$; otherwise it is defined as the minimal interpretation that contains $B_r^+\sigma$ for every substitution $\sigma$ with $B_r^-\sigma \subseteq L^-$ and $H_r\sigma \subseteq L^-$
\end{itemize}

We define a way of applying rules ``in reverse'' here and restrict these cases to $\lvert B_r^- \rvert = 1$ and $\lvert B_r^+ \rvert = 1$, respectively. 
In principle one could relax this by considering every possible choice for the atoms from $B_r^-$ or $B_r^+$. 
% , so we disallow this for simplicity. 
It remains for practical evaluations to determine if this is a good trade-off between simplicity and generality.
For a program $P$ and two interpretations $L^+$ and $L^-$, we define the following.
For any interpretation $L$, let the \emph{term-atoms} $\textit{TA}^P(L)$ be the set of all (ground) atoms with predicates and ground terms from $P$, terms in $L$, and arbitrary constants.
Also, for each sign $s \in \{+,-\}$:

\begin{itemize}
  \item $P_0^s(L^+, L^-)$ as $L^s$, and
  \item for every $i \geq 0$, $P_{i+1}^s(L^+, L^-)$ as the minimal interpretation that contains $P_i^s(L^+, L^-)$ and, for every $r \in P$, 
    $r^s(P_i^+(L^+, L^-), P_i^-(L^+, L^-)) \cap \textit{TA}^P(L^+ \cup L^-)$
  \item $P_\infty^s(L^+, L^-)$ as $\bigcup_{i \geq 0} P_i^s(L^+, L^-)$
\end{itemize}

We intersect with $\textit{TA}^P(L^+ \cup L^-)$ only to keep $P_j^s(\dots)$ finite for all $j$ (and hence for $\infty$); almost arbitrary extensions are possible here. 
Intuitively, for sets of atoms that must be true $L^+$ and must be false $L^-$, $P_\infty^+$ and $P_\infty^-$ close these sets under 
certain (not all, as there might be infinitely many) inferences to obtain larger sets of atoms that must be true or false, respectively.
By definition, we can show via induction that $P_\infty^s$ only makes sound inferences of atoms. 

\begin{lemma}\label{lem:PClosureContainedInEveryAS}
  For a program $P$ any two interpretations $L^+$ and $L^-$ and any answer set $I$ of $P$; 
  if $L^+ \subseteq I$ and $L^- \cap I = \emptyset$,
  then $P_\infty^+(L^+, L^-) \subseteq I$ and $P_\infty^-(L^+, L^-) \cap I = \emptyset$.
\end{lemma}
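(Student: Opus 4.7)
The plan is to prove the lemma by induction on the stage $i$, establishing simultaneously that $P_i^+(L^+,L^-) \subseteq I$ and $P_i^-(L^+,L^-) \cap I = \emptyset$ for every $i \geq 0$; the conclusion then follows by taking the union over all $i$.

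The base case is immediate from $P_0^+ = L^+$ and $P_0^- = L^-$ together with the hypothesis $L^+ \subseteq I$ and $L^- \cap I = \emptyset$. For the inductive step, the intersection with $\textit{TA}^P(L^+ \cup L^-)$ can only drop atoms, so it suffices to argue rule-wise that, for each $r \in P$, every atom of $r^+(P_i^+, P_i^-)$ lies in $I$ and every atom of $r^-(P_i^+, P_i^-)$ lies outside $I$. There are three clauses to handle, matching the three ways atoms are put into $r^+$ or $r^-$. In each clause the inductive hypothesis forces two of the three components $H_{r\sigma}$, $B^+_{r\sigma}$, $B^-_{r\sigma}$ into a definite position with respect to $I$; the model condition $(H_{r\sigma} \cup B^-_{r\sigma}) \cap I \neq \emptyset$ or $B^+_{r\sigma} \setminus I \neq \emptyset$ (which holds because $I$ is a model of every ground instance of $r$) then pins down the third. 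For instance, in the second $r^+$ clause the hypothesis gives $B^+_{r\sigma} \subseteq I$ and $H_{r\sigma} \cap I = \emptyset$, forcing some atom of $B^-_{r\sigma}$ into $I$; and in the $r^-$ clause the hypothesis gives $H_{r\sigma} \cap I = \emptyset$ and $B^-_{r\sigma} \cap I = \emptyset$, so were the unique atom of $B^+_{r\sigma}$ in $I$ the rule would be unsatisfied.

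The arity restrictions built into the definitions ($|H_r| \leq 1$ in the first $r^+$ clause, $|B_r^-| = 1$ in the second, and $|B_r^+| = 1$ for $r^-$) are exactly what is needed to identify a single atom $a$ rather than merely a nonempty set, which is why the closure operations are sound in the sense claimed. There is no genuine obstacle beyond this bookkeeping; notably, only the model condition of the answer-set definition is used, and the justification condition plays no role, so the closures $P_\infty^+$ and $P_\infty^-$ propagate soundly using rule satisfaction alone.
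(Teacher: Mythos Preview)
Your proposal is correct and follows essentially the same approach as the paper: induction on the stage $i$, with the inductive step reducing each clause of $r^+$ and $r^-$ to the fact that $I$ satisfies every ground instance $r\sigma$. Your explicit remarks on why the arity restrictions $|H_r|\leq 1$, $|B_r^-|=1$, $|B_r^+|=1$ are needed, and that only the model condition (not provability) is used, are accurate additions not spelled out in the paper's proof.
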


To check if an atom $a$ is forbidden in a program $P$, we backtrack atoms that must be true ($L^+$) and must be false ($L^-$) to be able to prove $a$. 
To this aim, we describe a procedure with the help of some auxiliary definitions.
We say that an atom \emph{has support} in $P$ for interpretations $L^+$ and $L^-$ if 
there is a rule $r \in P$ and a substitution $\sigma$ with $H_r\sigma = \{a\}$, $B_r^+\sigma \subseteq L^+$ and $B_r^-\sigma \subseteq L^-$. 
Intuitively, having support is almost like being proven but the set of non-derived atoms is given explicitly using $L^-$.
For a rule $r \in P$ and a substitution $\sigma$, an \emph{$r$-extension} of $\sigma$ is a substitution $\sigma'$ that agrees with $\sigma$ on variables from $H_r$
and additionally,
for each variable $X$ in $r$ that occurs in $B_r^+$ but not in $H_r$, 
if $X$ occurs in a position that can only feature constants, $\sigma'(X)$ is one of these constants;
otherwise, i.e. if functional terms may occur in the position of $X$, $\sigma'(X)$ is a fresh constant.\footnote{
  We can perform static analysis on 
  the positive part of $P$ to obtain the possible constants
  for each position in $P$ and also to determine if a function symbol might occur in a position.
} 
To check if $a$ is forbidden, we call $\textsf{IsForbidden}(P, \{a\}, \emptyset)$
from Algorithm~\ref{alg:anyForbidden}.
\begin{algorithm}[t]
  \caption{\textsf{IsForbidden}}\label{alg:anyForbidden}
  \begin{algorithmic}[1]
  \REQUIRE program $P$, interpretations $L^+$ and $L^-$
  \ENSURE some atom in $L^+$ is forbidden or $L^+ \cap L^- \neq \emptyset$
  \STATE $L^+ \gets P_\infty^+(L^+, L^-)$ and $L^- \gets P_\infty^-(L^+, L^-)$ \label{algoLine:PClosure}
  \IF{$L^+ \cap L^- \neq \emptyset$} 
    \RETURN \TRUE \label{algoLine:true}
  \ENDIF
  \STATE someFrbdn $\gets$ \FALSE
  % \STATE \COMMENT {For each atom $a$ that still needs to be proven...}
  \FORALL{$a \in L^+$ without support} \label{algoLine:loopAtoms}
    \STATE aFrbdn $\gets$ \TRUE 
    % \STATE \COMMENT {... check each way in which $a$ could be proven.}
    \FORALL{$r \in P$ \textbf{and} substitution $\sigma$ \textbf{and} $g$ mapping all fresh constant to arbitrary terms except such constants \textbf{with} $H_r\sigma = \{g(a)\}$} \label{algoLine:loopRules}
      \IF{$g(a)$ features terms not in $L^+ \cup L^-$} \label{algoLine:ifStillFresh}
        % \STATE \COMMENT {If $a$ can be proven from some unknown functional terms, assume it is not forbidden.}
        \STATE aFrbdn $\gets$ \FALSE
        \STATE \textbf{break}
      \ENDIF
      \STATE $K^+ \gets g(L^+)$ and $K^- \gets g(L^-)$ \label{algoLine:gUpdate}
      \FORALL{$r$-extensions $\sigma'$ of $\sigma$} \label{algoLine:freshConst}
        \STATE $J^+ \gets K^+ \cup (B_r^+\sigma' \cap \textit{TA}^P(K^+ \cup K^-))$ and $J^- \gets K^- \cup (B_r^-\sigma' \cap \textit{TA}^P(K^+ \cup K^-))$ \label{algoLine:extensionUpdate}
        % \STATE \COMMENT {If $a$ can be proven without involving forbidden facts or a contradiction, $a$ is not forbidden.}
        \STATE aFrbdn $\gets$ aFrbdn $\land$ $\textsf{IsForbidden}(P, J^+, J^-)$
      \ENDFOR
    \ENDFOR
    \STATE someFrbdn $\gets$ someFrbdn $\lor$ aFrbdn
  \ENDFOR
  \RETURN someFrbdn
  \end{algorithmic}
\end{algorithm}
The general idea of the procedure is as follows. 
We first check if $L^+$ and $L^-$ contradict each other. When we reach this base case, we know that our initial atom $a$ must be forbidden. 
Otherwise, we check if there is an unproven atom $a$ left in line~\ref{algoLine:loopAtoms}. 
If so, we check all ways in which $a$ could be proven in line~\ref{algoLine:loopRules}.
If $a$ can potentially proven with some unknown function symbols, we assume $a$ not to be forbidden in line~\ref{algoLine:ifStillFresh}.
Otherwise, we perform recursive calls to \textsf{IsForbidden} within the loop in line~\ref{algoLine:freshConst} 
such that $a$ is marked as not forbidden if at least one recursive call does not involve forbidden atoms or a contradiction.
This means, $a$ might be provable.
By a recursive analysis of the algorithm, one can verify correctness 
with the help of Lemma~\ref{lem:PClosureContainedInEveryAS}.

\begin{theorem}\label{trm:forbiddenCheckCorrect} 
  If the output of $\textsf{IsForbidden}(P, \{a\}, \emptyset)$ is true, then the atom $a$ is forbidden in $P$.
\end{theorem}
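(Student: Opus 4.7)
The plan is to prove the stronger invariant that, for every recursive call $\textsf{IsForbidden}(P, L^+, L^-)$ that returns \TRUE, there is no answer set $I$ of $P$ together with a substitution $g$ sending the fresh constants in $L^+ \cup L^-$ to ground terms of $P$ such that $g(L^+) \subseteq I$ and $g(L^-) \cap I = \emptyset$. Specialising this to $L^+ = \{a\}$ and $L^- = \emptyset$ yields the theorem, since there are no fresh constants to map and any answer set containing $a$ would satisfy the hypothesis. I would prove the invariant by induction on the recursion depth of the call (taking the innermost base case that returns at line~\ref{algoLine:true} as the trivial ground case, and using the hypothesis on every recursive call spawned at line~\ref{algoLine:extensionUpdate} in the inductive step).

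For the base case, assume the call returns at line~\ref{algoLine:true}, so the post-closure sets $L^+$ and $L^-$ share an atom $b$. If an answer set $I$ and a substitution $g$ witnessing the hypothesis existed prior to the closure, then by Lemma~\ref{lem:PClosureContainedInEveryAS} (applied to $g(L^+)$ and $g(L^-)$, and commuting $g$ with the closure operator $P_\infty^\pm$), the inclusion would persist post-closure; but then $g(b) \in I$ and $g(b) \notin I$ simultaneously, a contradiction.

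For the inductive step, suppose the call returns \TRUE\ after the main loop, so some atom $a^\star \in L^+$ without support has \emph{aFrbdn} remaining \TRUE\ throughout its iteration. Assume for contradiction a witnessing answer set $I$ and substitution $g$ exist. Since $g(a^\star) \in I$ and $a^\star$ has no support against $L^+, L^-$, the definition of answer set guarantees a rule $r \in P$ and a ground substitution $\tau$ with $H_r\tau = \{g(a^\star)\}$, $B_r^+\tau \subseteq I$, and $B_r^-\tau \cap I = \emptyset$. I would then factor $\tau$ into a head-part $\sigma$ and a body-extension $\sigma'$ that correspond to the enumeration at lines~\ref{algoLine:loopRules} and~\ref{algoLine:freshConst}: the case where $g(a^\star)$ contains a term outside $L^+ \cup L^-$ is ruled out because the test at line~\ref{algoLine:ifStillFresh} would have forced \emph{aFrbdn} to \FALSE.

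In the remaining case, the algorithm iterates over the matching $(r,\sigma,g)$ and over the $r$-extensions $\sigma'$ of $\sigma$, producing sets $J^+$ and $J^-$. I would extend $g$ to a substitution $g'$ on the new fresh constants by mapping each one (introduced at positions where functional terms may occur) to the value $\tau$ assigns to the corresponding body variable; variables restricted to constant positions already receive the correct constant directly, by the static analysis referenced in the definition of $r$-extension. This yields $g'(J^+) \subseteq I$ and $g'(J^-) \cap I = \emptyset$, so by the induction hypothesis the recursive call at this iteration cannot return \TRUE, forcing \emph{aFrbdn} to \FALSE\ — the desired contradiction. The main obstacle will be getting the bookkeeping between $g$, $g'$, $\sigma$, $\sigma'$, and $\tau$ correct, in particular verifying that the enumeration of $r$-extensions together with the intersection with $\textit{TA}^P(\cdot)$ at line~\ref{algoLine:extensionUpdate} is still exhaustive enough to cover every actual proof of $g(a^\star)$ an answer set could use, and that the soundness of $P_\infty^\pm$ in Lemma~\ref{lem:PClosureContainedInEveryAS} interacts correctly with the extension $g \to g'$ across recursion levels.
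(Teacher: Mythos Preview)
Your proposal is correct and follows essentially the same approach as the paper: both argue the contrapositive by maintaining, through every recursive call, a mapping from fresh constants into ground terms that embeds $L^+$ into a fixed answer set $I$ and keeps $L^-$ disjoint from it, using Lemma~\ref{lem:PClosureContainedInEveryAS} to show the closure at line~\ref{algoLine:PClosure} preserves this invariant and extending the mapping across line~\ref{algoLine:freshConst} exactly as you describe. The paper frames the invariant in the forward direction (``witness exists $\Rightarrow$ call returns \FALSE'') rather than your contrapositive (``call returns \TRUE\ $\Rightarrow$ no witness''), and spells out the commutation $h(P_\infty^s(L^+,L^-)) \subseteq P_\infty^s(h(L^+),h(L^-))$ by a separate inner induction, but the structure and the bookkeeping you anticipate are identical.
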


We are able to show for 
% Examples~\ref{exp:essential-wgc-same} and \ref{exp:essential-wgc-different} 
Example~\ref{exp:essential-wgc-different} 
that 
$\textit{fct}(a, s(s(0)))$ is forbidden
since the atom $\textit{diff}(0, s(s(0)))$ cannot possibly have support.
This would require $\neg \textit{eq}(a, a)$, which contradicts $\textit{eq}(a, a)$.
% For Example~\ref{exp:essential-wgc-same} this is straightforward since we will get an inconsistency almost immediately using $\textit{same}(0, s(s(0)))$.
% For Example~\ref{exp:essential-wgc-different}, this comes down to showing that $\textit{diff}(0, s(s(0)))$ cannot possibly have support 
For Example~\ref{exp:artificial-no-answer-set}, it is key to notice that $r(f(b), f(f(b)))$ is forbidden.
% which we can also show with Algorithm~\ref{alg:anyForbidden}.

\begin{example}
  We show how Algorithm~\ref{alg:anyForbidden} verifies that $r(f(b), f(f(b)))$ is forbidden in Example~\ref{exp:artificial-no-answer-set}.
  \begin{itemize}
    \item Initialize $L^+$ with $r(f(b), f(f(b)))$ and $L^-$ with $\emptyset$.
    \item In line~\ref{algoLine:PClosure}, 
      $P_\infty^-(L^+, L^-) = \{ r(b, f(b)) \}$;
      $P_\infty^+(L^+, L^-) = \{ r(f(b), f(f(b))), r(a, b), stop(b), stop(f(f(b))) \}$.
    \item In the loop in line~\ref{algoLine:loopAtoms}, pick $r(f(b), f(f(b)))$.
    \item In the loop in line~\ref{algoLine:loopRules}, 
      there is only one choice with $r$ as the last rule, $g$ the identity, 
      and $\sigma$ mapping $Y$ to $f(b)$. 
    \item Since $g$ is the identity,
      the condition in line~\ref{algoLine:ifStillFresh} is false.
    \item $K^+$ and $K^-$ are $P_\infty^+(L^+, L^-)$ and $P_\infty^-(L^+, L^-)$.
    \item For the $r$-extension of $\sigma$ in line~\ref{algoLine:freshConst}, we pick $\sigma'$ with 
      $\sigma'(X) = c_x$ where $c_x$ is a fresh constant.
      (The first position of $r$ may feature function symbols.)
    \item In line~\ref{algoLine:extensionUpdate}, $J^+$ is set to $K^+$ extended with $r(c_x, f(b))$ 
      and $J^-$ is set to $K^-$ extended with $stop(c_x)$.
    \item In the recursive call, we initialize $L^+$ and $L^-$ with $J^+$ and $J^-$ from outside the call.
    \item In line~\ref{algoLine:PClosure}, the atom closures are 
      $P_\infty^-(L^+, L^-) = L^-$ and
      $P_\infty^+(L^+, L^-) = L^+ \cup \{ stop(f(b)) \}$.
    \item In the loop in line~\ref{algoLine:loopAtoms}, pick $r(c_x, f(b))$.
    \item In the loop in line~\ref{algoLine:loopRules}, 
      there is only one choice with $r$ as the last rule, $g$ mapping $c_x$ to $b$, 
      and $\sigma$ mapping $Y$ to $b$. 
    \item In line~\ref{algoLine:gUpdate}, obtain $K^+$ and $K^-$ 
      from $P_\infty^+(L^+, L^-)$ and $P_\infty^-(L^+, L^-)$ by replacing $r(c_x, f(b))$ by $r(b, f(b))$ 
      and $stop(c_x)$ by $stop(b)$, respectively.
    \item For the $r$-extension of $\sigma$ in line~\ref{algoLine:freshConst}, we pick $\sigma'$ with 
      $\sigma'(X) = c_x$ where $c_x$ is again a fresh constant.
    \item In line~\ref{algoLine:extensionUpdate}, $J^+$ is set to $K^+$ extended with $r(c_x, b)$ 
      and $J^-$ is set to $K^-$ extended with $stop(c_x)$.
    \item In the recursive call, we initialize $L^+$ and $L^-$ with $J^+$ and $J^-$ from outside the call.
    \item In line~\ref{algoLine:PClosure}, the atom closures are 
      $P_\infty^-(L^+, L^-) = L^-$ and
      $P_\infty^+(L^+, L^-) = L^+ \cup \{ stop(b) \}$.
    \item We return true in line~\ref{algoLine:true} since 
      $stop(b) \in L^+ \cap L^-$.
  \end{itemize}
\end{example}

% \footnote{Details for $r(f(b), f(f(b)))$ and Example~\ref{exp:artificial-no-answer-set} in the appendix.} 

Based on these insights, we conclude that 
the \textsf{IsConsistent} check ignoring forbidden atoms
according to the \textsf{IsForbidden} check 
captures Examples~\ref{exp:essential-wgc-different} and \ref{exp:artificial-no-answer-set}
as unsatisfiable.

\subsection{Towards Finite Valid Groundings}

Avoiding forbidden atoms does not only improve the $\textsf{IsConsistent}$ procedure but
can reduce grounding efforts.

\begin{definition}
  Define $\textsf{GroundNotForbidden}(\cdot)$ that takes a program $P$ as input and executes the following instructions:
  \begin{enumerate}
  \item Initialize $i := 1$, $A_0 := \emptyset$, and $P_g := \emptyset$.
  \item Initialize $A_i := A_{i-1}$ and for each $r \in \textsf{Ground}(P)$ with $B_r^+ \subseteq A_{i-1}$, do the following. 
    If all atoms in $H_r$ are forbidden in $P$, add $\leftarrow B_r$ to $P_g$.
    Otherwise, add $r$ to $P_g$ and add the single atom in $H_r$ to $A_i$.
  \item \emph{Stop} if $A_i = A_{i-1}$; else set $i := i + 1$ and go to Step~2.
  \end{enumerate}
  The output of the procedure is $P_g$.
\end{definition}

Observe that for any answer set $I$ of a program $P$, $P_g$ still proves all atoms in $I$ 
and the rules $P_g \setminus \textsf{Ground}(P)$ must be satisfied by $I$.
Vice versa, all atoms in any answer set $I'$ for $P_g$ are also proven in 
$\textsf{Ground}(P)$ and $I'$ satisfies all rules in $\textsf{Ground}(P)$ 
as otherwise a contradiction would follow from one of the constraints in $P_g \setminus \textsf{Ground}(P)$.

\begin{theorem}\label{trm:groundWithoutForbiddenValid}
  For a program $P$, $\textsf{GroundNotForbidden}(P)$ is a valid grounding,
  i.e. $I$ is an answer set of $P$ iff $I$ is an answer set of $\textsf{GroundNotForbidden}(P)$.
\end{theorem}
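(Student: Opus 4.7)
The plan is to characterise the limit $A_\infty = \bigcup_i A_i$ of the iterative construction and the set $P_g = \textsf{GroundNotForbidden}(P)$ of produced rules, and then prove both implications by induction on the proven-ordering. The two key structural facts I would establish first are: (a) every rule in $P_g$ is either an (intact) rule of $\textsf{Ground}(P)$, or a constraint $\leftarrow B_r$ obtained from some $r \in \textsf{Ground}(P)$ whose (single) head atom is forbidden in $P$ (using soundness of the \textsf{IsForbidden} check, Theorem~\ref{trm:forbiddenCheckCorrect}); and (b) no atom in $A_\infty$ is forbidden in $P$, since atoms are only added to $A_i$ through the \emph{otherwise} branch of Step~2.

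For the forward direction, I would assume that $I$ is an answer set of $P$ and prove $I \subseteq A_\infty$. Using the ordering $\varphi$ witnessing that $I$ is proven, I induct on $\varphi(a)$: for $a \in I$ justified by $r \in \textsf{Ground}(P)$ with $B_r^+ \subseteq I$, the induction hypothesis gives $B_r^+ \subseteq A_{i-1}$ for some $i$; since $a$ occurs in an answer set it is not forbidden, so the algorithm adds $r$ intact to $P_g$ and $a$ to $A_i$. This also shows every proving rule for $I$ is preserved intact in $P_g$, so $I$ remains proven in $P_g$. To see that $I$ models $P_g$, intact rules are handled by $I \models \textsf{Ground}(P)$, while for any constraint $\leftarrow B_r \in P_g$ with original head $\{a\}$ forbidden, $a \notin I$ forces the satisfaction of $r$ by $I$ to transfer to satisfaction of $\leftarrow B_r$.

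For the backward direction, assume $I'$ is an answer set of $P_g$. Since constraints carry no heads, every atom of $I'$ must be justified by a rule of $\textsf{Ground}(P)$ that survived intact in $P_g$, so the same ordering witnesses that $I'$ is proven in $P$. A symmetric induction using the justifications in $P_g$ yields $I' \subseteq A_\infty$, hence $I'$ contains no forbidden atom. To show $I' \models \textsf{Ground}(P)$, I would case-split on a rule $r \in \textsf{Ground}(P)$: if $r \in P_g$, done; if $r$ was replaced by $\leftarrow B_r$, then $I'$ satisfies the constraint and the forbidden head atom is not in $I'$, so $I'$ satisfies $r$; if $r$ was never processed, then $B_r^+ \not\subseteq A_\infty \supseteq I'$, so again $r$ is satisfied.

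\paragraph{Main Obstacle.}
The delicate step is the third case of the backward direction: arguing that rules of $\textsf{Ground}(P)$ whose positive body never enters $A_\infty$ are automatically satisfied. This relies on the chain of inclusions $I' \subseteq A_\infty$ (by induction on the justifications within $P_g$) together with the fact that the algorithm processes every ground rule whose positive body is contained in $A_{i-1}$ at stage $i$ (so nothing is skipped prematurely). A secondary subtlety is that the theorem implicitly treats $P_g$ as a (possibly infinite) limit object; I would phrase the argument so that termination of \textsf{GroundNotForbidden} is not needed, with $P_g$ and $A_\infty$ defined as the corresponding unions over $i$, and only the equivalence of answer sets is asserted.
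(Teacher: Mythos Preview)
Your proposal is correct and follows essentially the same approach as the paper's proof, though you are more explicit about the third case in the backward direction (rules whose positive body never enters $A_\infty$) and the inclusion $I' \subseteq A_\infty$, which the paper leaves implicit. One minor remark: the appeal to Theorem~\ref{trm:forbiddenCheckCorrect} is unnecessary here, since \textsf{GroundNotForbidden} is defined using the exact semantic notion of \emph{forbidden} rather than the \textsf{IsForbidden} procedure.
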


Interestingly, the procedure always yields a finite ground program $P_g$ for frugal and non-proliferous programs.

\begin{proposition}\label{prop:groundWithoutForbiddenFiniteForFrugalNonProlif}
  For a frugal and non-proliferous program $P$, $\textsf{GroundNotForbidden}(P)$ is finite.
\end{proposition}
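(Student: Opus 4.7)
The plan is to exploit the observation that, since $P$ is both frugal and non-proliferous, it admits only finitely many answer sets, each of which is finite. Consequently the set
\[
U \;=\; \bigcup \{ I \mid I \text{ is an answer set of } P \}
\]
is a finite set of atoms, and by the definition of ``forbidden'' the collection of non-forbidden atoms of $P$ is exactly $U$.

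First, I would show by induction on $i$ that $A_i \subseteq U$. The base case $A_0 = \emptyset$ is immediate; for the inductive step, the procedure only adds an atom to $A_i$ when it is the single head of some ground rule whose head is not forbidden, and every such atom lies in $U$. Because $A_0 \subseteq A_1 \subseteq \dots$ is a monotone chain bounded by the finite set $U$, it must stabilize after at most $|U|$ iterations, at which point Step~3 of $\textsf{GroundNotForbidden}$ halts the procedure.

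Next, I would bound the number of rules added per iteration. At iteration $i$ the procedure only considers ground rules $r \in \textsf{Ground}(P)$ with $B_r^+ \subseteq A_{i-1}$. Because every rule in $P$ is safe, each of its variables already occurs in its positive body, and so any such ground instance is completely determined by a mapping of the positive body atoms of the underlying non-ground rule into the finite set $A_{i-1}$. Hence only finitely many ground instances of each rule in $P$ arise; since $P$ itself is finite, iteration $i$ contributes only finitely many rules (whether as standard rules or as constraints $\leftarrow B_r$) to $P_g$. Combining this with the bounded number of iterations yields that $P_g$ is finite.

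The main obstacle, I expect, is pinning down that the definition of $\textsf{GroundNotForbidden}$ relies on the semantic notion of ``forbidden'' (an atom missing from every answer set) rather than on the sufficient condition of Algorithm~\ref{alg:anyForbidden}; once this is isolated, everything reduces to a termination-on-a-finite-chain argument combined with the standard safety-based bound on ground instantiations.
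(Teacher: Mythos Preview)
Your argument is correct and follows the same approach as the paper's proof: both hinge on the observation that frugality plus non-proliferation forces the set of non-forbidden atoms to be finite, so the $A_i$ chain stabilises and the procedure terminates. Your write-up is in fact more complete than the paper's, which simply asserts that ``$\textsf{GroundNotForbidden}(P)$ only takes a finite number of steps to compute'' without spelling out the safety-based bound on the number of ground rules per iteration; you correctly identify that this is where safety is needed and also anticipate the caveat (made explicit in the paper right after the proposition) that the result relies on the idealised semantic notion of forbiddenness rather than on Algorithm~\ref{alg:anyForbidden}.
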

\begin{proof}
  By Definition~\ref{def:frugalAndNonProliferous}, $P$ has only finitely many answer sets and all of them are finite.
  Hence, there is only a finite number of atoms in all answer sets of $P$
  and all other atoms are forbidden. 
  Therefore, $\textsf{GroundNotForbidden}(P)$ 
  only takes a finite number of steps to compute.
\end{proof}

This result might be surprising but it is less so once we realize that this only holds 
because the definition of \textsf{GroundNotForbidden} 
assumes that we can decide if an atom is forbidden. 
So \textsf{GroundNotForbidden} is actually not computable.
In practice however, any sufficient check for forbiddenness (like \textsf{IsForbidden}) can be used in the procedure 
to make it computable
without sacrificing the validity of $P_g$, as then $P_g$ will only contain more rules from $\textsf{Ground}(P)$
and Theorem~\ref{trm:groundWithoutForbiddenValid} still holds.
This allows us to compute finite valid groundings for all of our Examples~\ref{exp:artificial}, \ref{exp:wolf-goat-cabbage-intro}, \ref{exp:essential-wgc-different}, and \ref{exp:artificial-no-answer-set}.

\section{Conclusion}

In this work, we have been undergoing an in-depth reconsideration 
of undecidability for ASP consistency.
We have shown intuitive reductions from and to similarly hard problems, 
which can be of interest even outside of the ASP community.
We also considered two characteristics of ASP programs that 
can make reasoning hard, that is having infinite answer sets or infinitely many answer sets. 
We identified \emph{frugal} and \emph{non-proliferous} programs as a desirable class of programs 
that at least ensures semi-decidability of reasoning. 
To cover more negative cases with a semi-decision procedure, we 
ignore \emph{forbidden} atoms and show a proof-of-concept algorithm 
implementing a sufficient condition that captures our main examples. 
Furthermore, we may leverage forbidden atoms also to compute a valid grounding. 
In principle, \textsf{GroundNotForbidden} yields finite valid groundings for all frugal and non-proliferous programs 
if we are able to ignore all forbidden atoms. 
Note that, while we considered non-disjunctive programs for simplicity, 
our results can be extended towards disjunctions as well. 
This requires careful reconsiderations for Algorithm~\ref{alg:anyForbidden}
but is almost immediate for all other results.

\paragraph{Future Research Directions and Outlook.}
We hope this work will reopen the discussion about function symbols and how we can design smart techniques to avoid non-terminating grounding procedures. 
%Indeed, despite the fact that function symbols might cause infinite groundings, oftentimes we can avoid sources of infity becoming active.
%
We expect that the ASP community would benefit from incorporating recent research on termination conditions and updating their grounding strategies accordingly. After all, the ASP language is first-order based and we are convinced that function symbols are a key ingredient for elegant and convenient modeling of real-world scenarios.

An obvious future work is an efficient implementation 
of sufficient checks for forbidden atoms, requiring a good tradeoff between generality and performance. 
Our proposed procedure can function as a reference for implementation. 
Grounding procedures ignoring forbidden atoms can then be evaluated for existing ASP solvers. 
Even in lazy-grounding, termination of solvers can be improved by ignoring forbidden atoms.
We think this idea is promising for improving existing reasoners like Alpha, gringo/clingo, and (i)DLV.

\appendix

%\section*{Ethical Statement}

%There are no ethical issues.

\section*{Acknowledgments}

We want to acknowledge that the full modelling of the wolf, goat cabbage puzzle from the introduction 
is inspired by lecture slides created by Jean-François Baget.

% Funding that Lukas should put for KBS group
On TU Dresden side, this work was partly supported
by Deutsche Forschungsgemeinschaft (DFG, German Research Foundation) in project 389792660 (TRR 248, \href{https://www.perspicuous-computing.science/}{Center for Perspicuous Systems});
by the Bundesministerium für Bildung und Forschung (BMBF) in the \href{https://www.scads.de}{Center for Scalable Data Analytics and Artificial Intelligence} (ScaDS.AI);
by BMBF and DAAD (German Academic Exchange Service) in project 57616814 (\href{https://secai.org/}{SECAI}, \href{https://secai.org/}{School of Embedded and Composite AI});
and by the \href{https://cfaed.tu-dresden.de}{Center for Advancing Electronics Dresden} (cfaed).

Carral was financially supported by the ANR project CQFD (ANR-18-CE23-0003).

Hecher is funded by the Austrian Science Fund (FWF), grants J 4656 and P 32830, the Society for Research Funding in Lower Austria (GFF, Gesellschaft für Forschungsf\"orderung N\"O) grant ExzF-0004, as well as the Vienna Science and Technology Fund (WWTF) grant ICT19-065.
Parts of the research were carried out while Hecher was visiting the Simons institute for the theory of computing at UC Berkeley.

%% The file named.bst is a bibliography style file for BibTeX 0.99c
\bibliographystyle{named}
\bibliography{main}

\clearpage

\section{Additional material for Section~\ref{section:introduction}}

We give the full modelling of the Wolf Goat Cabbage Puzzle
inspired by lecture slides from Jean-François Baget.
We also provide a Github repository containing all examples used in the paper.\footnote{\url{https://github.com/monsterkrampe/ASP-Termination-Examples}}

\begin{lstlisting}[caption="Wolf Goat Cabbage Puzzle in ASP",label={lst:wolf-goat-cabbage, language=dflat}]
steps(0..100).

bank(east). 
bank(west).
opposite(east,west). 
opposite(west,east).
passenger(wolf). 
passenger(goat). 
passenger(cabbage).
position(wolf,west,0). 
position(goat,west,0). 
position(cabbage,west,0).
position(farmer,west,0).
eats(wolf,goat). 
eats(goat,cabbage).

win(N) :- position(wolf,east,N),
  position(goat,east,N),
  position(cabbage,east,N).
winEnd :- win(N).
:- not winEnd.
lose :- position(X,B,N), 
  position(Y,B,N), eats(X,Y), 
  position(farmer,C,N), opposite(B,C).
:- lose.

goAlone(N) :- position(farmer,B,N), 
  not takeSome(N), not win(N).
takeSome(N) :- position(farmer,B,N),
  passenger(Y), position(Y,B,N),
  not goAlone(N), not win(N).

transport(X,N) :- takeSome(N),
  position(X,B,N), position(farmer,B,N),
  passenger(X), not othertransport(X,N).
othertransport(X,N) :- position(X,B,N),
  transport(Y,N), X != Y.

position(X,C,N+1) :- transport(X,N), 
  position(X,B,N), opposite(B,C), steps(N+1).
position(X,B,N+1) :- position(X,B,N), 
  passenger(X), not transport(X,N), 
  not win(N), steps(N+1).
position(farmer,C,N+1) :- 
  position(farmer,B,N), opposite(B,C), 
  not win(N), steps(N+1).

change(N,M) :- position(X,B,N), 
  position(X,C,M), opposite(B,C), N < M.
redundant :- position(X,B,N), 
  position(X,B,M), N < M, not change(N,M).
:- redundant.
\end{lstlisting}

\section{Proofs for Section~\ref{sec:consistency}}\label{sec:appendix-consistency}

\subsection{Proof of Lemma~\ref{lemma:membership-reduction}}

In this subsection we prove Lemmas~\ref{lemma:reduction-membership-1} and \ref{lemma:reduction-membership-2}, which directly imply Lemma~\ref{lemma:membership-reduction}.

\begin{lemma}
\label{lemma:reduction-membership-1}
If a program $P$ is consistent, then a run of $\mathcal{M}_P$ on the empty word visits the start state infinitely many times.
\end{lemma}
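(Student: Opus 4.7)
The plan is to exhibit an explicit sequence of non-deterministic choices that, given an answer set $I$ of $P$, produces a run of $\mathcal{M}_P$ visiting the start state infinitely many times (or looping there, in case $I$ is finite). The run will essentially ``enumerate'' $I$ according to its proof ordering, adding one justified atom per step.

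\textbf{Setup.} Fix an answer set $I$ of $P$ and an ordering $\varphi$ on $I$ witnessing that every atom of $I$ is proven; enumerate $I$ as $a_1, a_2, \ldots$ with $\varphi(a_k)=k$. For each $k$, fix a proving rule $r_k \in \textsf{Ground}(P)$ for $a_k$ such that $H_{r_k}=\{a_k\}$, $B_{r_k}^+ \subseteq \{a_1,\ldots,a_{k-1}\}$, $B_{r_k}^- \cap I = \emptyset$, and (if applicable) no other atom in $H_{r_k}$ lies in $I$. I will have the run pick $r_{i+1}$ in Step~4 when the counter is $i$, stopping and looping on the start state (Step~3) if and when $L_i^+ = I$.

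\textbf{Invariants and termination-of-halt checks.} By induction on $i$, one checks that $L_i^+ = \{a_1,\ldots,a_i\} \subseteq I$ and $L_i^- \subseteq \bigcup_{k \le i} B_{r_k}^- \subseteq \Preds \setminus I$, so $L_i^+ \cap L_i^- = \emptyset$ and Step~2 never halts the machine. The key verification that $r_{i+1} \in \textsf{Active}_{L_i^+}(P)$ reduces to: $H_{r_{i+1}} \cap L_i^+ = \emptyset$ since $a_{i+1} \notin \{a_1,\ldots,a_i\}$; $B_{r_{i+1}}^+ \subseteq L_i^+$ by the ordering property of $\varphi$; and $B_{r_{i+1}}^- \cap L_i^+ = \emptyset$ since $L_i^+ \subseteq I$ and $B_{r_{i+1}}^- \cap I = \emptyset$. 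If $I$ is finite, then at step $i=|I|$ we have $L_i^+=I$, which is an answer set, so Step~3 loops at the start state, yielding infinitely many visits.

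\textbf{The infinite case.} When $I$ is infinite, the sequence $L_1, L_2, \ldots$ never terminates, so I must show that Step~5's condition is satisfied infinitely often along the run, i.e.~that for each $j\ge 1$ there exists $i \ge j$ with $L_i^+$ satisfying every rule in $\textsf{Active}_{L_j^+}(P)$. Fix such a $j$. Because each rule in $P$ is safe and $L_j^+$ is finite, the set $\textsf{Active}_{L_j^+}(P)$ is finite. For every $r \in \textsf{Active}_{L_j^+}(P)$, the fact that $r$ is not satisfied by $L_j^+$ yields $B_r^+ \subseteq L_j^+ \subseteq I$, so $B_r^+ \setminus I = \emptyset$; combined with the fact that $I$ is a model of $P$, we obtain some $b_r \in (H_r \cup B_r^-) \cap I$. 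Since $b_r = a_{k_r}$ for some index $k_r$, and since $\textsf{Active}_{L_j^+}(P)$ is finite, $k^\star := \max_{r} k_r$ exists and for every $i \ge \max(j, k^\star)$ each such $b_r$ lies in $L_i^+$, witnessing satisfaction of $r$ by $L_i^+$. Thus Step~5 fires, $j$ is incremented, and the start state is visited; iterating, the start state is visited infinitely often.

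\textbf{Main obstacle.} The delicate point is the infinite case: one must ensure that the pending check $\textsf{Active}_{L_j^+}(P)$ is finite (this is where safety is essential) and that the witness $b_r$ for each such rule is eventually absorbed into $L_i^+$ along the enumeration of $I$. The rest of the argument is an induction on $i$ maintaining the invariants above.
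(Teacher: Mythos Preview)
Your proof is correct and follows the same approach as the paper's: pick a proving rule for each atom of the answer set, have the machine select those rules in an order compatible with the proof ordering, and in the infinite case use finiteness of $\textsf{Active}_{L_j^+}(P)$ together with the model property of $I$ to show each pending check eventually clears. Your argument is in fact more detailed than the paper's, which only sketches the infinite case; the one small point to tighten is that the definition of ordering does not require $\varphi$ to be a bijection onto an initial segment of $\mathbb{N}$, so your enumeration ``$\varphi(a_k)=k$'' needs a line justifying that such a $\varphi$ can be chosen without loss of generality (which follows since each atom's transitive proof dependencies form a finite set).
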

\begin{proof}
  Let $I$ be an answer set of $P$, i.e. $I$ is an answer set of $\textsf{Ground}(P)$.
  Hence, there is an ordering $\varphi$ on the atoms in $I$.
  Every atom $a \in I$ is proven by some ground rule $r \in \textsf{Ground}(P)$ using only atoms that are ancestors of $a$ with respect to $\varphi$. 
  There is a non-deterministic computation branch of $\mathcal{M}_P$ that chooses these rules in an order that respects $\varphi$. 
  If $I$ is finite, there is an $i \geq 0$ with $L_i^+ = I$ and the machine $\mathcal{M}_P$ visits its start state infinitely many times. 
  If $I$ is infinite, we still know that eventually $\textsf{Active}_{I}(P) = \emptyset$. 
  Hence, for every $j \geq 0$, there is an $i \geq j$ such that 
  $\textsf{Active}_{L_j^+}(P)$ and $\textsf{Active}_{L_i^+}(P)$ are disjoint. 
  Also in this case, the machine $\mathcal{M}_P$ visits its start state infinitely many times.
\end{proof}

\begin{lemma}
\label{lemma:reduction-membership-2}
  A program $P$ is consistent if a run of $\mathcal{M}_P$ on the empy word visits the start state infinitely many times.
\end{lemma}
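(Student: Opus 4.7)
The plan is to split on how the start state is visited infinitely often. Either the machine eventually enters the loop at Step~\ref{loop} of Definition~\ref{definition:reduction-infinitely-visiting}, in which case some $L_i^+$ is already verified to be an answer set of $P$ and we are done, or else the machine never loops and performs Step~\ref{chooseActiveRules} infinitely often, producing an infinite sequence $L_0 \subseteq L_1 \subseteq \cdots$ that never halts at Step~2 (so $L_k^+ \cap L_k^- = \emptyset$ for every $k$). In the second case, set $I := \bigcup_{k \geq 0} L_k^+$ and argue that $I$ is an answer set of $P$.

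To show $I$ is a model, I would proceed by contradiction. Suppose some $r \in \textsf{Ground}(P)$ is violated by $I$: then $B_r^+ \subseteq I$, while $H_r \cap I = \emptyset$ and $B_r^- \cap I = \emptyset$. Since $B_r^+$ is finite, pick $k$ minimal with $B_r^+ \subseteq L_k^+$. Because $L_i^+ \subseteq I$ for every $i$, the head and negative body of $r$ remain disjoint from every $L_i^+$, so $r \in \textsf{Active}_{L_i^+}(P)$ for all $i \geq k$. Since the start state is visited infinitely often only through Step~5, the counter $j$ grows without bound; in particular there is an iteration $i \geq k$ at which $j$ has value $k$ and the Step~5 test succeeds, so $L_i$ satisfies every rule in $\textsf{Active}_{L_k^+}(P)$, and hence $r$. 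But $B_r^+ \subseteq L_i^+$ and $(H_r \cup B_r^-) \cap L_i^+ = \emptyset$, contradicting satisfaction.

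For justification, I would define $\varphi(a) := \min\{i \mid a \in L_i^+\}$ and show that every $a \in I$ is proven by the rule $r$ that was chosen at the step where $a$ first appeared. That rule satisfies $H_r = \{a\}$ (using $|H_r|\leq 1$ and $a \in H_r \setminus L_{\varphi(a)-1}^+$) and $B_r^+ \subseteq L_{\varphi(a)-1}^+$, giving $\varphi(b) < \varphi(a)$ for every $b \in B_r^+$. Moreover the machine records $B_r^-$ into $L_{\varphi(a)}^-$; since $L^-$ is monotone and each $L_k^+ \cap L_k^-$ stays empty, we get $B_r^- \cap I = \emptyset$, so conditions (i)--(iii) of the justification definition are met, and $I$ is an answer set.

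The main obstacle is the second case, and within it the model-check: one must carefully correlate the two independent indices $i$ and $j$ maintained by the machine. The key observation is that $j$ is unbounded exactly because the start state is visited infinitely often via Step~5, and this guarantees that every finite ``threshold'' $L_k^+$ is eventually tested against the active rules, so any rule with its positive body eventually grounded in $I$ cannot stay unsatisfied forever. The remaining part about proofs is routine once one uses the stage indices as the ordering.
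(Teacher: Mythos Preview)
Your proposal is correct and follows essentially the same approach as the paper: the same case split on looping at Step~\ref{loop} versus an infinite run, the same candidate $I = \bigcup_k L_k^+$, the model check via unboundedness of $j$ from Step~5, and the justification via the stage-of-first-appearance ordering. Your version simply spells out more of the details (the explicit contradiction for the model check, the verification that $i \geq k$ when $j = k$, and the use of $L_k^+ \cap L_k^- = \emptyset$ to get $B_r^- \cap I = \emptyset$) that the paper leaves implicit.
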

\begin{proof}
  If $\mathcal{M}_P$ has a non-deterministic computation path that visits its start state infinitely many times, then there are two cases to consider. 
  First, $L_i^+$ is an answer set of $P$ for some $i \geq 0$. Then the claim follows.
  Second, $j$ grows towards infinity, i.e. for each $j \geq 0$, there is a $k \geq j$ such that 
  $\textsf{Active}_{L_j^+}(P)$ and $\textsf{Active}_{L_k^+}(P)$ are disjoint. This also requires that for every $i \geq 0$, $L_i^+$ and $L_i^-$ are disjoint.
  Now on one hand by construction we get that for every $i \geq 0$, every ground atom in $L_i^+$ is proven. (The construction directly yields the necessary ordering since each step adds one additional atom to $L_i^+$.)
  And on the other hand, for every rule in $r \in \textsf{Ground}(P)$, there is an $i \geq 0$ such that $L_{i'}^+$ satisfies $r$ for every $i' \geq i$. 
  Hence, for $I = \bigcup_{i \geq 0} L_i^+$, we have that $\textsf{Active}_{I}(P) = \emptyset$ (and still that every atom in $I$ is proven). 
  Therefore, $I$ is an answer set of $P$. 
\end{proof}

\subsection{Proof of Lemma~\ref{lemma:hardness-reduction}}

In this subsection we prove Lemmas~\ref{lemma:reduction-hardness-tiling-to-asp} and \ref{lemma:reduction-hardness-asp-to-tiling}, which directly imply Lemma~\ref{lemma:hardness-reduction}.

\begin{lemma}
\label{lemma:reduction-hardness-tiling-to-asp}
If a tiling system $\mathfrak{T}$ admits a recurring solution, then the program $P_\mathfrak{T}$ admits an answer set.
\end{lemma}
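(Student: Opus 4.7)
The plan is to explicitly construct an answer set $I$ of $P_\mathfrak{T}$ from a given recurring solution $f \colon \Naturals \times \Naturals \to T$. Writing $s^i(c_0)$ for the $i$-fold application of $s$ to $c_0$, I would set
\[
I = \{\textit{Dom}(s^i(c_0)) \mid i \geq 0\} \cup \{\textit{Tile}_{f(i,j)}(s^i(c_0), s^j(c_0)) \mid i,j \geq 0\} \cup \{\textit{Below}_{t_0}(s^j(c_0)) \mid j \geq 0\}.
\]
Including every $\textit{Below}_{t_0}(s^j(c_0))$ is legitimate because condition~2 of Definition~\ref{definition:tiling-system} guarantees that the set $S = \{k \geq 1 \mid f(0,k) = t_0\}$ is infinite, so for every $j$ there is some $k \in S$ with $k > j$.

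First I would check that $I$ is a model of $P_\mathfrak{T}$, taking the rules one at a time. The fact $\textit{Dom}(c_0)$ and Rule~\ref{rule:domain} are immediate. Rule~\ref{rule:tile-choice} is satisfied since for each $(i,j)$ exactly one $\textit{Tile}$-atom lies in $I$: if $f(i,j)=t$ the head holds, otherwise some negated body atom $\neg\textit{Tile}_{t'}(s^i(c_0),s^j(c_0))$ with $t' = f(i,j) \neq t$ is false. Constraints~\ref{rule:h-incompatibility} and~\ref{rule:v-incompatibility} hold by condition~1 of Definition~\ref{definition:tiling-system}. Rules~\ref{rule:below} and~\ref{rule:below-prop} are trivially satisfied because every $\textit{Below}_{t_0}$ atom is in $I$, and constraint~\ref{rule:under-t0} holds since every $\textit{Dom}(s^j(c_0))$ in $I$ is matched by $\textit{Below}_{t_0}(s^j(c_0))$.

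The main obstacle is producing an ordering $\varphi$ on $I$ that justifies each atom, particularly for the $\textit{Below}_{t_0}$ atoms, which are propagated ``downwards'' from positions of $t_0$ in the first column. I would order $\textit{Dom}(s^i(c_0))$ by $i$ in the first block, then place every $\textit{Tile}_{f(i,j)}(s^i(c_0), s^j(c_0))$ after all $\textit{Dom}$ atoms (each such tile atom is proven by an instance of Rule~\ref{rule:tile-choice} whose positive body consists of two already-ordered $\textit{Dom}$ atoms, with all required negated literals true in $I$ by the uniqueness of $f(i,j)$). For the $\textit{Below}_{t_0}$ atoms, define the \emph{distance} of $j$ as $d(j) = \min\{k - j \mid k \in S,\ k > j\} \geq 1$, which is finite because $S$ is infinite. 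I would order $\textit{Below}_{t_0}(s^j(c_0))$ lexicographically by $(d(j), j)$, placing them after all $\textit{Tile}$-atoms. Then $\textit{Below}_{t_0}(s^j(c_0))$ with $d(j) = 1$ is proven by Rule~\ref{rule:below} from the already-ordered $\textit{Tile}_{t_0}(c_0, s^{j+1}(c_0))$, while $\textit{Below}_{t_0}(s^j(c_0))$ with $d(j) > 1$ is proven by Rule~\ref{rule:below-prop} from $\textit{Below}_{t_0}(s^{j+1}(c_0))$, which has distance $d(j) - 1 < d(j)$ and hence strictly smaller order. Thus every atom in $I$ is proven; combined with $I$ being a model, this shows $I$ is an answer set of $P_\mathfrak{T}$.
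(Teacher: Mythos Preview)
Your approach is the same as the paper's: construct the explicit interpretation $I$, verify it is a model rule by rule, and argue that every atom is proven. The paper dismisses the ordering with the remark that ``obtaining an appropriate ordering of atoms in $I$ is straightforward''; your distance function $d(j)$ for the $\textit{Below}_{t_0}$ atoms is exactly the right idea for making this explicit.

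There is, however, a small technical slip in your ordering. By the paper's definition an ordering is a function $\varphi: I \to \Naturals$, but your block construction places every $\textit{Tile}$ atom ``after all $\textit{Dom}$ atoms'' and then every $\textit{Below}_{t_0}$ atom ``after all $\textit{Tile}$ atoms''. Since the $\textit{Dom}$ atoms alone already take the values $0,1,2,\ldots$, no natural number lies above all of them; likewise the lexicographic order on pairs $(d(j),j)$ has order type larger than $\omega$. The fix is to interleave rather than stack blocks. For instance, set $\varphi(\textit{Dom}(s^i(c_0))) = i$, $\varphi(\textit{Tile}_{f(i,j)}(s^i(c_0), s^j(c_0))) = \max(i,j)+1$, and $\varphi(\textit{Below}_{t_0}(s^j(c_0))) = (j + d(j)) + 1 + d(j)$. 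Each required inequality $\varphi(b) < \varphi(a)$ for $b \in B_r^+$ then follows directly from your own case analysis: a $\textit{Tile}$ atom only needs to beat the two specific $\textit{Dom}$ atoms in its body, and for $d(j)>1$ one has $d(j{+}1)=d(j)-1$ with the same $j+d(j)$, so the $\textit{Below}_{t_0}$ value drops by exactly one along the downward chain.
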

\begin{proof}
Consider a tiling system $\mathfrak{T} = \langle T, \textit{HI}, \textit{VI}, t_0 \rangle$ that does admit a recurring solution.
That is, there is function $f : \mathbb{N} \times \mathbb{N} \to T$ such as the one described in Definition~\ref{definition:tiling-system}.
To prove the lemma we verify that the following set of ground atoms is an answer set of $P_\mathfrak{T}$:
\begin{align*}
I = \{&\textit{Dom}(s^i(c_0)), \textit{Below}_{t_0}(s^i(c_0)) \mid i \geq 0\} \cup~ \\
\{&\textit{Tile}_{f(i, j)}(s^i(c_0), s^j(c_0)) \mid i, j \geq 0\}
\end{align*}

We first show that $I$ satisfies every rule in $\textsf{Ground}(P_\mathfrak{T})$ with a case-by-case analysis:
\begin{itemize}
\item The atom $\textit{Dom}(c_0)$, resulting from a rule with empty body in $\textsf{Ground}(P_\mathfrak{T})$, must be in $I$.
\item Rules of Type~\eqref{rule:domain} are satisfied since $\textit{Dom}(s^i(c_0)) \in I$ for every $i \geq 1$.
\item Rules of Type~\eqref{rule:tile-choice} are satisfied because $f$ is a total function and $\{s^i(c_0) \mid i \geq 0\}$ is the set of all terms in $I$.
\item Rules of Type~\eqref{rule:h-incompatibility} and \eqref{rule:v-incompatibility} are satisfied because $\langle f(i, j), f(i+1, j) \rangle \notin \textit{HI}$ and $\langle f(i, j), f(i, j+1) \rangle \notin \textit{VI}$ for every $i, j \geq 0$ by Definition~\ref{definition:tiling-system}.
\item Rules of Type~\eqref{rule:below}, \eqref{rule:below-prop} and \eqref{rule:under-t0} are satisfied since $\textit{Below}_{t_0}(s^i(c_0)) \in I$ for every $i \geq 0$.
\end{itemize}

Moreover, we argue that every atom $a \in I$ is proven (obtaining an appropriate ordering of atoms in $I$ is straightforward): 
\begin{itemize}
\item $\textit{Dom}(s^i(c_0))$ is proven for every $i \geq 0$ by the single ground atom $\textit{Dom}(c_0)$ and Rules of Type~\eqref{rule:domain}.
\item $\textit{Tile}_{f(i, j)}(s^i(c_0), s^j(c_0))$ is proven for every $i, j \geq 0$ by Rules of Type~\eqref{rule:tile-choice}.
\item $\textit{Below}_{t_0}(s^j(c_0))$ is proven for every $j \geq 0$ by Rules of Type~\eqref{rule:below} or \eqref{rule:below-prop} and the fact that 
  there are is an infinite set $S \subseteq \Naturals$ with $f(0, j) = t_0$ for every $j \in S$.
\end{itemize}
\end{proof}

\begin{lemma}
\label{lemma:reduction-hardness-asp-to-tiling}
If the program $P_\mathfrak{T}$ admits an answer set for some tiling system $\mathfrak{T}$, then $\mathfrak{T}$ admits a recurring solution.
\end{lemma}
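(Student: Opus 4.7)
The plan is to extract a recurring solution $f$ from an arbitrary answer set $I$ of $P_\mathfrak{T}$. First, I would observe that the only ground terms over the signature of $P_\mathfrak{T}$ are $s^i(c_0)$ for $i \geq 0$, and that the fact $\textit{Dom}(c_0)$ together with Rule~\eqref{rule:domain} forces $\textit{Dom}(s^i(c_0)) \in I$ for every $i \geq 0$ (each such atom is proven along the chain $\textit{Dom}(c_0), \textit{Dom}(s(c_0)), \ldots$). Next, Rule~\eqref{rule:tile-choice} acts as a choice rule via the negated alternatives over the remaining tiles, so for every pair $(i, j) \in \Naturals \times \Naturals$ exactly one $t \in T$ satisfies $\textit{Tile}_t(s^i(c_0), s^j(c_0)) \in I$; this defines the candidate tiling $f : \Naturals \times \Naturals \to T$ by $f(i, j) := t$ for this unique $t$.

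Condition~1 of Definition~\ref{definition:tiling-system} is then immediate: if some $\langle f(i, j), f(i+1, j) \rangle$ were in $\textit{HI}$ (or some $\langle f(i, j), f(i, j+1) \rangle$ in $\textit{VI}$), then $I$ would violate the corresponding ground instance of constraint~\eqref{rule:h-incompatibility} (respectively \eqref{rule:v-incompatibility}), contradicting that $I$ is a model.

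Condition~2 is the delicate part. Constraint~\eqref{rule:under-t0} together with $\textit{Dom}(s^i(c_0)) \in I$ forces $\textit{Below}_{t_0}(s^i(c_0)) \in I$ for every $i \geq 0$. Since $I$ is an answer set, there exists an ordering $\varphi : I \to \Naturals$ such that each atom is proven by a rule whose positive body atoms have strictly smaller $\varphi$-values. The only rules producing $\textit{Below}_{t_0}$-atoms are \eqref{rule:below} and \eqref{rule:below-prop}. Fixing $i \geq 0$ and tracing a justification chain starting at $\textit{Below}_{t_0}(s^i(c_0))$, each application of \eqref{rule:below-prop} strictly decreases $\varphi$, so the chain cannot use \eqref{rule:below-prop} indefinitely and must terminate in an application of \eqref{rule:below}, witnessing some $k \geq i$ with $\textit{Tile}_{t_0}(c_0, s^{k+1}(c_0)) \in I$; equivalently, $f(0, k+1) = t_0$ with $k+1 > i$. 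Hence $S := \{j \in \Naturals \mid f(0, j) = t_0\}$ is unbounded and thus infinite, establishing the second clause of Definition~\ref{definition:tiling-system}.

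The step I expect to require the most care is the well-founded trace-back argument in the last paragraph: one might initially worry that the chain $\textit{Below}_{t_0}(s^i(c_0)), \textit{Below}_{t_0}(s^{i+1}(c_0)), \ldots$ could be justified infinitely by repeated use of \eqref{rule:below-prop}. The observation that resolves this is precisely that $\varphi$ takes values in $\Naturals$, so no infinite strictly decreasing chain of justifications is possible; this is the only place where well-foundedness of the answer-set semantics (as opposed to mere modelhood) is used, and it is what ultimately yields infinitely many $t_0$-tiled positions in the first column.
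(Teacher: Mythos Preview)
Your proposal is correct and follows essentially the same route as the paper's proof: identify the term universe $\{s^i(c_0)\mid i\geq 0\}$, use Rule~\eqref{rule:tile-choice} to extract a well-defined total $f$, discharge Condition~1 via the incompatibility constraints, and obtain Condition~2 from the $\textit{Below}_{t_0}$ atoms forced by constraint~\eqref{rule:under-t0}. Your treatment of Condition~2 is in fact more explicit than the paper's, which merely asserts that Rules~\eqref{rule:below}--\eqref{rule:under-t0} guarantee, for every $j$, some $j'\geq j$ with $\textit{Tile}_{t_0}(c_0,s^{j'}(c_0))\in I$; you spell out the well-foundedness trace-back via the ordering $\varphi$ that makes this claim go through, and you correctly flag this as the one place where foundedness (not just modelhood) is needed.
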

\begin{proof}
Consider some tiling system $\mathfrak{T} = \langle T, \textit{HI}, \textit{VI}, t_0 \rangle$ such that $P_\mathfrak{T}$ admits an answer set $I$.
Then, we argue that:
\begin{itemize}
\item The set of all terms in $I$ is $\{s^i(c_0) \mid i \geq 0\}$.
Note that $\textit{Dom}(c_0), \textit{Dom}(s(X)) \leftarrow \textit{Dom}(X) \in P_\mathfrak{T}$.
\item For every $i, j \geq 0$, there is exactly one tile $t \in T$ such that $\textit{Tile}_t(s^i(c_0), s^j(c_0)) \in I$ because of rules of Type~\eqref{rule:tile-choice}
  such that constraints in $\textit{HI}$ and $\textit{VI}$ are not violated by rules of Type~\eqref{rule:h-incompatibility} and \eqref{rule:v-incompatibility}.
\item Rules of Type~\eqref{rule:below}, \eqref{rule:below-prop}, and \eqref{rule:under-t0} ensure that for every $j \geq 0$, there is a $j' \geq j$ with $\textit{Tile}_{t_0}(c_0, s^{j'}(c_0)) \in I$. 
  In other words, there is an infinite set $S \subseteq \Naturals$ such that $\textit{Tile}_{t_0}(c_0, s^{j}(c_0)) \in I$ for every $j \in S$. 
\end{itemize}
For every $i, j \geq 0$, let $f(i, j) = t$ where $t$ is the only element in $T$ with $\textit{Tile}_t(s^i(c_0), s^j(c_0)) \in I$.
As per Definition~\ref{definition:tiling-system}, the existence of $f$ implies that $P_\mathfrak{T}$ admits a recurring solution.
\end{proof}

\section{Proofs for Section~\ref{sec:termination}}\label{sec:appendix-termination}

\subsection{Turing Machine Simulation for Lemma~\ref{lem:asp-finitely-many-finite-as-sigma-2-0-hard}}

We give details about how to emulate the computation of a Turing machine on all inputs with a program, as we discuss in the proof of Lemma~\ref{lem:asp-finitely-many-finite-as-sigma-2-0-hard}.

\begin{definition}
  A (Turing) machine (TM) is a tuple $\langle Q, \delta, q_s, q_a, q_r \rangle$ where $Q$ is a set of states, $\delta$ is a total function from $Q \setminus \{q_a, q_r\} \times \{0,1,B\}$ to $Q \times \{0,1,B\} \times \{L, R\}$, $q_s \in Q$ is the starting state, $q_a \in Q$ is the accepting state, and $q_r \in Q$ is the rejecting state.
The machine $M$ halts on a word $w \in \{0, 1\}^*$ if the computation of $M$ on input $w$ reaches the accepting or rejecting state.
\end{definition}
We assume a binary input alphabet w.l.o.g. while the working tape of the TM additionally uses a blank symbol. The tape is only one-way infinite to the right and the Turing machine bumps on the left, i.e. 
when the TM tries to go to the left beyond the first tape cell, it just stays on the first tape cell.

\begin{definition}
For a machine $M = \langle Q, \delta, q_s, q_a, q_r \rangle$, let $P_M$ be the program that contains the atoms $\mathit{H}_{q_s}(c)$ and $\mathit{Input}(c)$, as well as all of the following rules:
\begin{align}
  \mathit{Right}(X, r(X)) &\leftarrow \mathit{Input}(X), \neg \mathit{Last}(X) \label{input-first}\\
\mathit{Last}(X) &\leftarrow \mathit{Input}(X), \neg \mathit{Right}(X, r(X)) \\
\mathit{Input}(r(X)) &\leftarrow \mathit{Input}(X), \mathit{Right}(X, r(X)) \\
\mathit{FiniteInput} &\leftarrow \mathit{Input}(X), \mathit{Last}(X) \\
&\leftarrow \neg \mathit{FiniteInput} \\
  \mathit{S}_B(X) &\leftarrow \mathit{Last}(X) \\
  \mathit{S}_0(X) &\leftarrow \mathit{Input}(X), \lnot \mathit{S}_1(X), \lnot \mathit{S}_B(X) \\
  \mathit{S}_1(X) &\leftarrow \mathit{Input}(X), \lnot \mathit{S}_0(X), \lnot \mathit{S}_B(X) \label{input-last} \\
\mathit{Step}(Y, s(Y)) &\leftarrow \mathit{Right}(X, Y), \mathit{Step}(X, s(X)) \label{emu-first} \\
\mathit{Step}(Y, s(Y)) &\leftarrow \mathit{Right}(Y, X), \mathit{Step}(X, s(X)) \\
\mathit{Right}(s(X), s(Y)) &\leftarrow \mathit{Step}(X, s(X)), \mathit{Right}(X, Y) \\
  \mathit{Right}(s(X), r(s(&X))) \leftarrow \mathit{Step}(X, s(X)), \mathit{Last}(X) \\
  \mathit{Last}(r(s(X))) &\leftarrow \mathit{Step}(X, s(X)), \mathit{Last}(X) \\
\mathit{Halt} &\leftarrow \mathit{H}_{q_a}(X) \\
\mathit{Halt} &\leftarrow \mathit{H}_{q_r}(X)\\
&\leftarrow \neg \mathit{Halt}
\end{align}
Moreover, for every $q \in Q \setminus \{q_a, q_r\}$, we add the rule
\begin{align}
\mathit{Step}(X, s(X)) \leftarrow \mathit{H}_q(X) 
\end{align}
  For every $a \in \{0,1,B\}$, we add
\begin{align}
  \mathit{S}_a(s(X)) \leftarrow \{\neg &\mathit{H}_q(X) \mid q \in Q\}, \notag\\ 
    &\mathit{S}_a(X), \mathit{Step}(X, s(X))
\end{align}
For every $(q, a) \mapsto (\_, b, \_) \in \delta$, we add
\begin{align}
\mathit{S}_b(s(X)) \leftarrow \mathit{H}_q(X), \mathit{S}_a(X)
\end{align}
For every $(q, a) \mapsto (r, \_, L) \in \delta$, we add
\begin{align}
  \mathit{H}_r(Y) &\leftarrow \mathit{Right}(Y, s(X)), \mathit{H}_q(X), \mathit{S}_a(X)\\
  \mathit{NotFirst}(X) &\leftarrow \mathit{Right}(Y, X), \mathit{H}_q(X), \mathit{S}_a(X)\\
  \mathit{H}_r(s(X)) &\leftarrow \neg \mathit{NotFirst}(X), \mathit{H}_q(X), \mathit{S}_a(X)
\end{align}
For every $(q, a) \mapsto (r, \_, R) \in \delta$, we add
\begin{align}
  \mathit{H}_r(Y) \leftarrow \mathit{Right}(s(X), Y), \mathit{H}_q(X), \mathit{S}_a(X) \label{emu-last}
\end{align}

%  &\text{for each $\delta(q, u) = (q', \_, R)$:}\\
%  &\mathit{H}_{q'}(Z) \leftarrow \mathit{Step}(Y, Z), \mathit{H}_q(X), \mathit{Next}(X, Y).\\
%  &\text{for each $\delta(q, u) = (q', \_, L)$:}\\
%  &\mathit{H}_{q'}(Z) \leftarrow \mathit{Step}(Y, Z), \mathit{H}_q(X), \mathit{Next}(Y, X).\\

\end{definition}

Consider a machine $M$ and the following remarks:
\begin{enumerate}
\item Because of Rules (\ref{input-first}-\ref{input-last}), we have that for every answer set $I$ of $P_M$ there is some (finite) word $w_1, \ldots, w_n$ over the binary alphabet $\{0,1\}$ such that $I$ includes $\{S_{w_1}(c), \allowbreak S_{w_2}(r(c)),  \allowbreak \ldots, \allowbreak S_{w_n}(r^n(c)), \allowbreak S_{B}(r^{n+1}(c))\}$.
Hence, we can associate every answer set $I$ of $P_M$ with a word, which we denote with $\textsf{Word}(I)$.
\item Consider some word $\vec{w}$ over $\{0,1\}$ such that $M$ halts on $\vec{w}$. Then, there is a finite answer set for $P_M$; namely, the only answer set $I$ with $\textsf{Word}(I) = \vec{w}$.
\item Consider some answer set $I$ for $P_M$.
Then, we can show that $M$ halts on $\textsf{Word}(I)$.
\item The previous two items hold because Rules~(\ref{emu-first}-\ref{emu-last}) ensure that every answer set of $P_M$ encodes the computation of $M$ on some input word over the binary alphabet.
Namely, an answer set $I$ faithfully encodes the computation of $M$ on $\textsf{Word}(I)$.
\end{enumerate}

\subsection{Proof of Theorem~\ref{trm:asp-only-finite-as-pi-1-1}}

Theorem~\ref{trm:asp-only-finite-as-pi-1-1} follows from the upcoming Lemmas~\ref{lem:asp-only-finite-as-pi-1-1-mem} and \ref{lem:asp-only-finite-as-pi-1-1-hard}.

\begin{definition}
  For a program $P$, let $\mathcal{M}'_P$
  be a modified version of $\mathcal{M}_P$ (from the Section~\ref{sec:consistency})
  that does not loop on its start state if $L_i^+$ is an answer set in step \ref{loop} but just halts. 
\end{definition}

\begin{lemma}\label{lem:asp-only-finite-as-pi-1-1-mem}
  Deciding if an ASP program is frugal is in $\Pi_1^1$.
\end{lemma}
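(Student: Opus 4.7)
The plan is to reduce the complement of frugality, i.e.\ the existence of at least one infinite answer set, to the problem of Proposition~\ref{proposition:infinitely-visiting}, and then dualize. Concretely, I will show that some run of the modified machine $\mathcal{M}'_P$ on the empty word visits the start state infinitely many times if and only if $P$ admits an \emph{infinite} answer set. Since that machine problem is in $\Sigma_1^1$ by Proposition~\ref{proposition:infinitely-visiting}, the existence of an infinite answer set is in $\Sigma_1^1$, and therefore frugality, which is its complement, is in $\Pi_1^1$.

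The key lemma to establish is essentially an adaptation of Lemma~\ref{lemma:membership-reduction} that accounts for the halt-instead-of-loop modification in $\mathcal{M}'_P$. For the forward direction, given an infinite answer set $I$ of $P$, I would reuse the construction from the proof of Lemma~\ref{lemma:reduction-membership-1}: pick a proof ordering $\varphi$ on $I$ and let the non-deterministic choices in Step~\ref{chooseActiveRules} enumerate atoms of $I$ consistently with $\varphi$. Since $I$ is infinite, no finite prefix $L_i^+$ ever equals $I$, so $\mathcal{M}'_P$ never halts at the modified Step~\ref{loop}; the usual saturation argument then shows that for every $j$ there is some later $i$ at which $\textsf{Active}_{L_j^+}(P)$ becomes empty, so the start state is visited infinitely often. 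For the converse direction, I would observe that if a run visits the start state infinitely often, then in particular it never halts, and thus it never encountered a finite answer set at the modified Step~\ref{loop}. The argument of Lemma~\ref{lemma:reduction-membership-2} then yields that $\bigcup_{i \geq 0} L_i^+$ is an answer set of $P$; and because no finite $L_i^+$ was itself an answer set, this union must be infinite, providing the desired witness.

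The main obstacle I anticipate is purely a matter of careful bookkeeping rather than of genuinely new ideas: I have to make sure that the halt-modification does not accidentally destroy the semi-decision behavior on infinite answer sets, and that the ``visited infinitely often'' condition still aligns with eventual saturation of $\textsf{Active}_{L_j^+}(P)$ in the same way as in Section~\ref{sec:consistency-membership}. Once this correspondence is verified, membership in $\Pi_1^1$ follows immediately by taking complements of the $\Sigma_1^1$ bound supplied by Proposition~\ref{proposition:infinitely-visiting}.
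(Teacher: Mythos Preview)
Your proposal is correct and follows essentially the same route as the paper: both argue that $\mathcal{M}'_P$ has a run visiting the start state infinitely often iff $P$ has an infinite answer set, invoking the proofs of Lemmas~\ref{lemma:reduction-membership-1} and~\ref{lemma:reduction-membership-2} for the two directions and then dualizing via Proposition~\ref{proposition:infinitely-visiting}. One small remark: in the forward direction, the relevant point is not merely that no finite $L_i^+$ \emph{equals} $I$, but that no such $L_i^+$ is an answer set at all (which follows since the proving rule for the next atom of $I$ is still active, so $L_i^+$ is not even a model); the paper glosses over this in the same way.
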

\begin{proof}
  A program $P$ admits only finite answer sets iff $\mathcal{M}'_P$ does not admit an infinite run that visits its start state infinitely many times. 
  
  If $P$ has an infinite answer set, then by the proof of Lemma~\ref{lemma:reduction-membership-1}, we already know 
  that $\mathcal{M}_P$ admits an infinite run that visits its start state infinitely many times without encountering a finite answer set in this run.
  The same run is also possible in $\mathcal{M}'_P$ by construction.

  If $\mathcal{M}'_P$ admits an infinite run that visits its start state infinitely many times, 
  and since $\mathcal{M}'_P$ does not loop on its start state if it encounters a finite answer set, 
  $P$ must have an infinite answer set as argued in the proof of Lemma~\ref{lemma:reduction-membership-2}.
\end{proof}

\begin{lemma}\label{lem:asp-only-finite-as-pi-1-1-hard}
  Deciding if an ASP program is frugal is $\Pi_1^1$-hard.
\end{lemma}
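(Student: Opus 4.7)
The plan is to reduce from the complement of the recurring tiling problem. By Proposition~\ref{proposition:recurring-tiling} this problem is $\Sigma_1^1$-hard, so its complement (deciding non-existence of a recurring solution) is $\Pi_1^1$-hard. I would reuse verbatim the reduction from Definition~\ref{definition:tiling-reduction} that produces the program $P_\mathfrak{T}$ from a tiling system $\mathfrak{T}$, and argue that $P_\mathfrak{T}$ is frugal if and only if $\mathfrak{T}$ admits \emph{no} recurring solution.

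The first step of the argument is the observation that every answer set of $P_\mathfrak{T}$ is necessarily infinite. This is forced by the ground fact $\textit{Dom}(c_0)$ together with Rule~\eqref{rule:domain}: a straightforward induction on $i$ shows that $\textit{Dom}(s^i(c_0))$ must be contained in and proven by every answer set of $P_\mathfrak{T}$, so the answer set must contain infinitely many distinct atoms.

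The second step is the equivalence. If $\mathfrak{T}$ has a recurring solution, then by Lemma~\ref{lemma:hardness-reduction} the program $P_\mathfrak{T}$ is consistent; by the first step any such answer set is infinite, so $P_\mathfrak{T}$ is not frugal. Conversely, if $\mathfrak{T}$ has no recurring solution, then Lemma~\ref{lemma:hardness-reduction} tells us $P_\mathfrak{T}$ has no answer set at all, and hence $P_\mathfrak{T}$ is vacuously frugal (the universally quantified condition in Definition~\ref{def:frugalAndNonProliferous} holds trivially). Combining the two directions, frugality of $P_\mathfrak{T}$ coincides with the non-existence of a recurring solution for $\mathfrak{T}$.

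There is no real obstacle here beyond bookkeeping; the heavy lifting has already been done by Lemma~\ref{lemma:hardness-reduction}. The only place I would be careful is in verifying that no ``truncated'' finite answer set of $P_\mathfrak{T}$ can sneak in: since the positive rule \eqref{rule:domain} propagates $\textit{Dom}$ through arbitrary $s$-nesting and its body has no negative literals, no atom from this chain can be blocked, ruling out any finite model (let alone a finite answer set). With this remark in place, the reduction is complete and establishes the $\Pi_1^1$-hardness claim.
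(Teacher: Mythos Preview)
Your proposal is correct and follows essentially the same approach as the paper: both reuse the program $P_\mathfrak{T}$ from Definition~\ref{definition:tiling-reduction} and observe that $P_\mathfrak{T}$ either has an infinite answer set (when $\mathfrak{T}$ has a recurring solution) or no answer set at all (otherwise), so frugality coincides with the $\Pi_1^1$-hard complement of the recurring tiling problem. Your write-up is in fact more detailed than the paper's, explicitly spelling out why no finite answer set can arise via the $\textit{Dom}$ chain.
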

\begin{proof}
  We can make direct use of the tiling system $\mathfrak{T}$ from the proof of Lemma~\ref{lemma:hardness-reduction} 
  since if $\mathfrak{T}$ has a solution, then $P_\mathfrak{T}$ has an infinite model
  but otherwise, the program does not have any model (i.e. all (0) models are finite).
\end{proof}

\section{Proofs for Section~\ref{sec:reasoning-procedures}}\label{sec:appendix-reasoning-procedures}

\subsection{Proof of Lemma~\ref{lem:PClosureContainedInEveryAS}}

We show Lemma~\ref{lem:PClosureContainedInEveryAS} as follows.
\begin{proof}
  Assume that the precondition holds, i.e. we have a program $P$ and 
  an answer set $I$ of $P$ 
  and two interpretations $L^+, L^-$ such that 
  $L^+ \subseteq I$ and $L^- \cap I = \emptyset$. 
  We prove 
  that $P_\infty^+(L^+, L^-) \subseteq I$ and $P_\infty^-(L^+, L^-) \cap I = \emptyset$
  by showing the respective property 
  for each $P_i^+(L^+, L^-)$ and $P_i^-(L^+, L^-)$
  via induction over $i \geq 0$.

  For the base of the induction ($i = 0$), the claim follows directly from the precondition; 
  that is, $P_0^+(L^+, L^-) = L^+ \subseteq I$ and $P_0^-(L^+, L^-) \cap I = \emptyset$ since $P_0^-(L^+, L^-) = L^-$.

  We show the induction step from $i$ to $i+1$. 
  Let $J^+ = P_i^+(L^+, L^-)$ and $J^- = P_i^-(L^+, L^-)$.
  By induction hypothesis, we have $J^+ \subseteq I$ and $J^- \cap I = \emptyset$.
  
  For the claim for $P_{i+1}^+(L^+, L^-)$,
  suppose for a contradiction that there is a rule $r \in P$ and an atom $a \in r^+(J^+, J^-)$ with $a \notin I$. 
  Then, by the definition of $r^+$, 
  there are two cases:
  \begin{enumerate}
    \item $a$ is the single atom in $H_r\sigma$ for some substitution $\sigma$ with $B_r^+\sigma \subseteq J^+$ and $B_r^-\sigma \subseteq J^-$, or 
    \item $a$ is the single atom in $B_r^-\sigma$ for some substitution $\sigma$ with $B_r^+\sigma \subseteq J^+$ and $H_r\sigma \subseteq J^-$ with $\lvert B_r^- \rvert = 1$.
  \end{enumerate}
  By the supposition that $a \notin I$ and the induction hypothesis, we obtain that $I$ does not satisfy the ground rule $r\sigma$. 
  This contradicts $I$ being an answer set of $P$.

  For the claim for $P_{i+1}^-(L^+, L^-)$, 
  suppose for a contradiction that there is a rule $r \in P$ and an atom $a \in r^-(J^+, J^-)$ with $a \in I$. 
  By the definition of $r^-$, 
  $a$ is the single atom in $B_r^+\sigma$ for some substitution $\sigma$ with $B_r^-\sigma \subseteq J^-$ and $H_r\sigma \subseteq J^-$ with $\lvert B_r^+ \rvert = 1$.
  By the supposition that $a \in I$ and the induction hypothesis, we obtain that $I$ does not satisfy the ground rule $r\sigma$. 
  This contradicts $I$ being an answer set of $P$.
  This concludes the induction step and the proof.
\end{proof}

\subsection{Proof of Theorem~\ref{trm:forbiddenCheckCorrect}}

We show Theorem~\ref{trm:forbiddenCheckCorrect} as follows.
\begin{proof}
  We prove that $a$ is forbidden, i.e. that $a$ cannot occur in any answer set of $P$, 
  if $\textsf{IsForbidden}(P, \{a\}, \emptyset)$ returns true.
  More precisely, we show the contrapositive, i.e. if $a$ occurs in some answer set of $I$, then 
  $\textsf{IsForbidden}(P, \{a\}, \emptyset)$ returns false.

  Let $I$ be an answer set for $P$.
  We establish the following more general claim ($\dagger$) over the execution of one (recursive) call of \textsf{IsForbidden}. 
  Given two interpretations $L^+, L^-$ 
  such that there is a mapping $h$ from fresh constants to terms with $h(L^+) \subseteq I$ and $h(L^-) \cap I = \emptyset$, 
  we show that 
  either, $\textsf{IsForbidden}(P, L^+, L^-)$ returns false immediately or 
  there is a similar mapping $h'$ with $h'(J^+) \subseteq I$ and $h'(J^-) \cap I = \emptyset$ 
  where $J^+$ and $J^-$ are the inputs to the recursive call.
  The recursion is finite by intersecting with $\textit{TA}^P(K^+ \cup K^-)$ in line~\ref{algoLine:extensionUpdate}, 
  and making sure $g(a)$ only features terms in $L^+ \cup L^-$ in line~\ref{algoLine:ifStillFresh}.
  This eventually also means that $\textsf{IsForbidden}(P, L^+, L^-)$ returns false.

  We show in the following that ($\ddagger$) Line~\ref{algoLine:PClosure} retains the precondition after updating $L^+$ and $L^-$, that is, we still have 
  $h(L^+) \subseteq I$ and $h(L^-) \cap I = \emptyset$.
  For $s \in \{+,-\}$ and $n \in \Naturals \cup \{\infty\}$, 
  let $\mathit{Pout}_n^s := h(P_n^s(L^+, L^-))$
  and $\mathit{Pin}_n^s := P_n^s(h(L^+), h(L^-))$. 
  Now the precondition still holds by Lemma~\ref{lem:PClosureContainedInEveryAS}
  and the observation that 
  $\mathit{Pout}_\infty^s \subseteq \mathit{Pin}_\infty^s$ for each $s \in \{+,-\}$.
  To see why the observation holds, consider the following argument. 
  We show $\mathit{Pout}_i^+ \subseteq \mathit{Pin}_i^+$ via induction over $i$. The proof for $s = -$ is analogous.
  For the base case with $i = 0$, $\mathit{Pout}_i^+ = h(L^+) \subseteq h(L^+) = \mathit{Pin}_i^+$ holds trivially. 
  For the induction step from $i$ to $i+1$, we have 
  $\mathit{Pout}_i^+ \subseteq \mathit{Pin}_i^+$ by induction hypothesis. 
  Note also that $h(\textit{TA}^P(L^+ \cup L^-)) \subseteq \textit{TA}^P(h(L^+) \cup h(L^-))$.
  Now let $\mathit{rOut}^+ := h(r^+(P_i^+(L^+, L^-), P_i^-(L^+, L^-)))$
  and $\mathit{rIn}^+ := r^+(P_i^+(h(L^+), h(L^-)), P_i^-(h(L^+), h(L^-)))$.
  It only remains to show that $\mathit{rOut}^+ \subseteq \mathit{rIn}^+$ for each $r \in P$.
  For every atom in $a' \in \mathit{rOut}^+$, there are two cases to consider: 
  \begin{itemize}
    \item There is a substitution $\sigma$ such that $a'$ is the single atom $h(H_r\sigma)$, $B_r^+\sigma \subseteq P_i^+(L^+, L^-)$, and $B_r^-\sigma \subseteq P_i^-(L^+, L^-)$. 
      Then, $B_r^+(h \circ \sigma) \subseteq \mathit{Pout}_i^+ \subseteq \mathit{Pin}_i^+$ and $B_r^-(h \circ \sigma) \subseteq \mathit{Pout}_i^- \subseteq \mathit{Pin}_i^-$. 
      Therefore, $H_r(h \circ \sigma) = h(H_r\sigma) \subseteq \mathit{rIn}^+$.
    \item There is a substitution $\sigma$ such that $a'$ is the single atom in $h(B_r^-\sigma)$, $B_r^+\sigma \subseteq P_i^+(L^+, L^-)$, and $H_r\sigma \subseteq P_i^-(L^+, L^-)$.
      Then, $B_r^+(h \circ \sigma) \subseteq \mathit{Pout}_i^+ \subseteq \mathit{Pin}_i^+$ and $H_r(h \circ \sigma) \subseteq \mathit{Pout}_i^- \subseteq \mathit{Pin}_i^-$. 
      Therefore, $B_r^-(h \circ \sigma) = B_r^-(h \circ \sigma) \subseteq \mathit{rIn}^+$.
  \end{itemize}
  This concludes the induction step and the proof of ($\ddagger$).

  By the precondition, i.e. $h(L^+) \subseteq I$ and $h(L^-) \cap I = \emptyset$, we have 
  $h(L^+) \cap h(L^-) = \emptyset$ and therefore also $L^+ \cap L^- = \emptyset$. 
  Thus, we do not reach line~\ref{algoLine:true}.

  If all atoms in $L^+$ have support, we do not enter the loop in line~\ref{algoLine:loopAtoms}. 
  Then, the algorithm returns false as claimed. 
  Otherwise, 
  let $a' \in L^+$ be any atom picked in line~\ref{algoLine:loopAtoms}.
  Since $h(a') \in I$, $h(a')$ is proven in $I$ and $P$. 
  Therefore, there is a suitable $r \in P$ and a substitution $\sigma$ 
  with $H_r\sigma = \{ h(a') \}$ in line~\ref{algoLine:loopRules}. 
  We pick $g$ to be $h$.
  Since we already established that $h(a') \in I$, we know that $h(a')$ does not contain any fresh constants. 
  However, $h(a')$ might contain terms that are not in $L^+ \cup L^-$. 
  In this case, we enter the condition on line~\ref{algoLine:ifStillFresh}
  and return false as claimed.
  Otherwise, we proceed as follows.
  
  For what follows, note that $h$ is idempotent; fresh constants are not mapped to other fresh constants and all other terms are not mapped (i.e. effectively mapped to themselves). 
  Therefore, in line~\ref{algoLine:gUpdate}, $h(K^+) = h(h(L^+)) = h(L^+) \subseteq I$ holds and $h(K^-) = h(h(L^-)) = h(L^-)$ so $h(K^-) \cap I = \emptyset$ holds as well.
  Again, since $h(a')$ is proven in $I$ and $P$, there is a substitution $\sigma''$ 
  with $H_r\sigma = H_r\sigma''$, $B_r^+\sigma'' \subseteq I$, and $B_r^-\sigma'' \cap I = \emptyset$.
  In line~\ref{algoLine:freshConst}, we pick $\sigma'$ to be the $r$-extension of $\sigma$ with $\sigma'(X) = \sigma''(X)$ for every body variable $X$ 
  that occurs in a position that can only feature constants (and $\sigma'(Y)$ being a fresh constant all other variables $Y$).

  To wrap up ($\dagger$) it remains to show that we find a mapping $h'$ from fresh constants to terms with $h'(J^+) \subseteq I$ and $h'(J^-) \cap I = \emptyset$.
  We define $h'$ as an extension of $h$ additionally mapping the newly introduced constants in $\sigma'$ 
  such that $\sigma'' = h' \circ \sigma'$. 
  We obtain $h'(K^+) = h(K^+)$ and $h'(K^-) = h(K^-)$.
  Hence, according to line~\ref{algoLine:extensionUpdate}, 
  the proof of ($\dagger$) concludes once we prove $h'(B_r^+\sigma') \subseteq I$ and $h'(B_r^-\sigma') \cap I = \emptyset$.
  This is straightforward since $h'(B_r^+\sigma') = B_r^+\sigma''$ and $h'(B_r^-\sigma') = B_r^-\sigma''$
  and we already know that $B_r^+\sigma'' \subseteq I$ and $B_r^-\sigma'' \cap I = \emptyset$ hold.

  The claim of the theorem now simply follows by applying ($\dagger$) to $\{a\}$ and $\emptyset$. 
  We pick $h$ to be the identity. Hence with the assumption that $a \in I$, the precondition holds 
  and we therefore infer that 
  $\textsf{IsForbidden}(P, \{a\}, \emptyset)$ returns false.
\end{proof}

\subsection{Run of Algorithm~\ref{alg:anyForbidden} for Example~\ref{exp:essential-wgc-different}}

We show how Algorithm~\ref{alg:anyForbidden} verifies that $\textit{fct}(a, s(s(0)))$ is forbidden in Example~\ref{exp:essential-wgc-different}.

\begin{itemize}
  \item Initialize $L^+$ with $\textit{fct}(a, s(s(0)))$ and $L^-$ with $\emptyset$.
  \item In line~\ref{algoLine:PClosure}, we obtain   
    $P_\infty^-(L^+, L^-) = \{ \textit{redundant} \}$ as well as
    $P_\infty^+(L^+, L^-) = \{ \textit{fct}(a, 0), \allowbreak \textit{fct}(b, s(0)), \allowbreak \textit{eq}(a, a), \allowbreak \textit{eq}(b, b), \allowbreak \textit{lt}(0, s(0)), \allowbreak \textit{lt}(s(0), s(s(0))), \allowbreak \textit{lt}(0, s(s(0))), \allowbreak \textit{diff}(0, s(0)), \allowbreak \textit{diff}(s(0), s(s(0))), \allowbreak \textit{diff}(0, s(s(0))) \}$.
  \item In the loop in line~\ref{algoLine:loopAtoms}, pick $\textit{diff}(0, s(s(0)))$.
  \item In the loop in line~\ref{algoLine:loopRules}, 
    there is only one possible choice with $r$ being the next to last rule, $g$ being the identity, 
    and $\sigma$ mapping $N$ to $0$ and $M$ to $s(s(0))$. 
  \item For the $r$-extension of $\sigma$ in line~\ref{algoLine:freshConst}, there are four possible choices that we consider individually. 
    Both $X$ and $Y$ can each be mapped to $a$ or $b$. No fresh constants are involved since the first position of $\textit{fct}$ may only feature constants, namely $a$ or $b$.
  \begin{enumerate}
    \item $X \mapsto a$ and $Y \mapsto a$. 
        \begin{itemize}
          \item In line~\ref{algoLine:extensionUpdate}, we add $\textit{eq}(a, a)$ to $J^-$. 
          \item We reach line~\ref{algoLine:true} since $\textit{eq}(a, a) \in L^+ \cap L^-$.
        \end{itemize}
    \item $X \mapsto b$ and $Y \mapsto b$. 
        \begin{itemize}
          \item In line~\ref{algoLine:extensionUpdate}, add $\textit{eq}(b, b)$ to $J^-$. ($J^+$ also changes.) 
          \item We reach line~\ref{algoLine:true} since $\textit{eq}(b, b) \in L^+ \cap L^-$.
        \end{itemize}
    \item $X \mapsto b$ and $Y \mapsto a$. 
        \begin{itemize}
          \item In line~\ref{algoLine:extensionUpdate}, add $\textit{fct}(b, 0)$ to $J^+$ and $\textit{eq}(b, a)$ to $J^-$. 
          \item In line~\ref{algoLine:loopAtoms}, we pick $\textit{fct}(b, 0)$.
          \item We cannot enter the loop in line~\ref{algoLine:loopRules}; therefore we return true in the end.
        \end{itemize}
    \item $X \mapsto a$ and $Y \mapsto b$. 
        \begin{itemize}
          \item In line~\ref{algoLine:extensionUpdate}, we add $\textit{fct}(b, s(s(0)))$ to $J^+$ and $\textit{eq}(a, b)$ to $J^-$. 
          \item In line~\ref{algoLine:loopAtoms}, we pick $\textit{fct}(b, s(s(0)))$.
          \item In the loop in line~\ref{algoLine:loopRules}, 
            there is only one possible choice with $r$ being $\textit{fct}(b, s(N)) \leftarrow \textit{fct}(a, N).$, $g$ being the identity, 
            and $\sigma$ mapping $N$ to $s(0)$. 
          \item In line~\ref{algoLine:freshConst}, $\sigma' = \sigma$.
          \item In line~\ref{algoLine:extensionUpdate}, we add $\textit{fct}(a, s(0))$ to $J^+$. 
          \item In line~\ref{algoLine:loopAtoms}, we pick $\textit{fct}(a, s(0))$.
          \item In the loop in line~\ref{algoLine:loopRules}, 
            there is only one possible choice with $r$ being $\textit{fct}(a, s(N)) \leftarrow \textit{fct}(b, N).$, $g$ being the identity, 
            and $\sigma$ mapping $N$ to $0$. 
          \item In line~\ref{algoLine:freshConst}, $\sigma' = \sigma$.
          \item In line~\ref{algoLine:extensionUpdate}, we add $\textit{fct}(b, 0)$ to $J^+$. 
          \item Now we eventually return true as in case 3.
        \end{itemize}
  \end{enumerate}
\end{itemize}

\subsection{Proof of Theorem~\ref{trm:groundWithoutForbiddenValid}}

We show Theorem~\ref{trm:groundWithoutForbiddenValid} as follows.
\begin{proof}
  Let $P_g$ be the result of $\textsf{GroundNotForbidden}(P)$. 
  For every answer set $I$ of $\textsf{Ground}(P)$, 
  we have that $I \subseteq \bigcup_{i \geq 0} A_i$ (with the $A_i$ from the construction of $P_g$). 
  This holds, since every atom $a \in I$ is proven and not forbidden (as it occurs in an answer set),
  so there is an $i$ with $a \in A_i$.

  So if $I$ is an answer set of $\textsf{Ground}(P)$, then 
  all atoms in $I$ are still proven by $P_g$ and rules in $\textsf{Ground}(P) \cap P_g$ are still satisfied.
  It only remains to show that all rules in $P_g \setminus \textsf{Ground}(P)$ are satisfied.
  Such rules $r$ must be of the form $\leftarrow B_{r'}$ introduced for rules 
  $r' \in \textsf{Ground}(P)$ where $H_{r'}$ is forbidden. 
  If $I$ would not satisfy $r$, it would also not satisfy $r'$ unless $H_{r'} \in I$, 
  which contradicts $I$ being an answer set of $\textsf{Ground}(P)$.
  This completes the ``only if'' direction.

  If $I'$ is an answer set of $P_g$, 
  every atom $a \in I'$ is still proven in $\textsf{Ground}(P)$. 
  It only remains to show that $I'$ is indeed a model of $\textsf{Ground}(P)$. 
  Suppose for a contradiction that $r \in \textsf{Ground}(P)$ is not satisfied by $I'$. 
  By construction of $P_g$, this can only be the case if all atoms in $H_r$ are forbidden. 
  But then, the rule $\leftarrow~B_r.$ in $P_g$ is also not satisfied by $I'$, 
  which contradicts $I'$ being an answer set of $P_g$.
  This completes the ``if'' direction.
\end{proof}

\end{document}